\documentclass{article} 
\usepackage{iclr2027_conference,times}

\usepackage{amsmath,amsfonts,bm}

\def\eqref#1{equation~\ref{#1}}

\def\1{\bm{1}}

\DeclareMathAlphabet{\mathsfit}{\encodingdefault}{\sfdefault}{m}{sl}
\SetMathAlphabet{\mathsfit}{bold}{\encodingdefault}{\sfdefault}{bx}{n}

\renewcommand{\eqref}[1]{(\ref{#1})}

\usepackage[hypertexnames=false,hidelinks]{hyperref}
\usepackage{url}
\usepackage{amssymb,amsthm,mathtools}
\usepackage{booktabs,array}
\usepackage{enumitem}
\usepackage{graphicx}
\usepackage{xcolor}

\newtheorem{theorem}{Theorem}[section]
\newtheorem{lemma}[theorem]{Lemma}
\newtheorem{proposition}[theorem]{Proposition}
\newtheorem{corollary}[theorem]{Corollary}
\theoremstyle{definition}
\newtheorem{assumption}[theorem]{Assumption}
\newtheorem{definition}[theorem]{Definition}
\theoremstyle{remark}

\newcommand{\ip}[2]{\langle #1,#2\rangle}
\newcommand{\norm}[1]{\lVert #1\rVert}
\newcommand{\wstar}{\mathbf{w}_{*}}

\DeclareMathOperator{\conv}{conv}
\newcommand{\bgl}[1]{\mathopen{\smash{\bigg#1}}}
\newcommand{\bgr}[1]{\mathclose{\smash{\bigg#1}}}

\title{Improved Convergence of Large Stepsize \\ Gradient Descent for Logistic Regression}

\author{Xiaochuan Gong \& Ang Li \\
University of Maryland, College Park\\
}

\iclrfinalcopy 
\begin{document}

\maketitle

\begin{abstract}
We study gradient descent (GD) with a large constant stepsize for logistic regression on linearly separable data. Existing analysis shows an accelerated rate of $\widetilde{O}(1/\sqrt{\epsilon})$ to reach loss $\epsilon$ with an aggressive stepsize, although the loss may initially oscillate. Tighter control of the oscillatory dynamics has been available only for two-dimensional data. We prove a substantially faster rate in arbitrary dimension: GD with a large stepsize $\eta=1/\epsilon$ reaches loss $\epsilon$ within $O(\ln^{p}(1/\epsilon))$ steps, where $p$ depends only on the margin and the rank of the data. Our proof improves the bound on the transition time of GD from the oscillatory to the stable phase, after which the loss decreases monotonically. We split the oscillatory phase into recursively nested intervals. The margin and the rank bound the nesting depth, and a counting argument bounds the number of intervals at each depth, together yielding the polylogarithmic step complexity.
\end{abstract}

\section{Introduction}
\label{sec:intro}

Gradient descent (GD) and its variants are the most widely used optimization
methods in modern machine learning. To minimize
an objective $L:\mathbb{R}^d\to\mathbb{R}$, GD starts from an initialization
$\mathbf{w}_0\in\mathbb{R}^d$ and iterates
\begin{equation}
 \mathbf{w}_{t+1}:=\mathbf{w}_t-\eta_t\nabla L(\mathbf{w}_t),
 \qquad t=0,1,2,\ldots,
 \tag{GD}\label{eq:gd}
\end{equation}
where $\mathbf{w}_t\in\mathbb{R}^d$ is the trainable parameter and
$(\eta_t)_{t\ge0}$ are the stepsizes, which may in general vary across
iterations. In this work we study GD with a constant stepsize
$\eta_t\equiv\eta$ that is \emph{large}, by which we mean a stepsize so large
that the loss $L(\mathbf{w}_t)$ need not decrease monotonically along the
iterates.

Classical optimization theory studies GD through the \emph{descent lemma}: if
$L$ is $\beta$-smooth and $\eta_t<2/\beta$, then every step of GD
decreases the loss, and convergence rates follow by summing the per-step
decrease \citep{nesterov2018lectures}. The vast majority of convergence
guarantees for GD, for convex and nonconvex objectives alike, are built on this
monotone decrease and therefore require a small stepsize
\citep{nesterov2018lectures,lan2020first}. The practice of deep learning looks
different. Neural networks are typically trained with stepsizes so large that
the training loss does not decrease monotonically but oscillates locally, while
still converging in the long run
\citep{wu2018sgd,xing2018walk,lewkowycz2020large,cohen2021gradient}, and such
large stepsizes are often necessary for good performance.
\citet{cohen2021gradient} named this regime the \emph{edge of stability}
(EoS): the sharpness of the loss---the largest eigenvalue of its
Hessian---rises during training until it reaches the stability threshold
$2/\eta$ and then hovers just above it, so that training proceeds precisely
where the descent lemma no longer applies. Understanding why GD converges, and
why it can even converge faster, without monotone descent has become a central
question in the theory of deep learning optimization.

A recent line of work shows that stepsizes beyond the descent lemma can be
provably beneficial, in that GD converges faster with stepsizes that classical
theory does not cover. For smooth convex optimization, stepsize schedules with occasional
long steps accelerate GD beyond the $O(1/t)$ rate of the descent regime
\citep{altschuler2024acceleration,grimmer2025accelerated,zhang2025anytime}.
For logistic regression on linearly separable data, \citet{wu2024large}
showed that GD with a large constant stepsize $\eta$ converges despite an
initial \emph{unstable phase} in which the loss oscillates. GD exits this
phase within $O(\eta)$ steps and then enters a \emph{stable phase} in which
the loss decreases monotonically at rate $\widetilde O(1/(\eta t))$.
Consequently, with a budget of $T$ steps, the aggressive choice
$\eta=\Theta(T)$ balances the two phases and yields an accelerated
$\widetilde O(1/T^2)$ loss. \citet{zhang2025minimax} subsequently
showed that large stepsizes that adapt to the current loss reach an
arbitrarily small loss after a short burn-in period, which is minimax optimal
among first-order methods, and \citet{crawshaw2026tight} gave a fine-grained
analysis of the oscillatory dynamics of constant stepsize GD in two
dimensions, proving that the unstable phase ends within a number of steps
that is independent of $\eta$, which is tight up to logarithmic factors.

Among these works, \citet{wu2024large} and \citet{crawshaw2026tight} are
the closest to ours, as we consider the same problem of GD with a large
constant stepsize for logistic regression on linearly separable data. In the analysis of \citet{wu2024large},
the length of the unstable phase is bounded only linearly in the stepsize.
This is the bottleneck of their rate, since it caps the stepsize that can be
used with a given budget, and they conjectured that the bound is improvable.
\citet{crawshaw2026tight} confirmed this in two dimensions, where GD with a
sufficiently large stepsize consequently reaches any target loss within a number of
steps that is independent of the target. Their analysis, however, tracks the
oscillation of the iterates across the max-margin direction, which is a
one-dimensional motion only in the plane, and whether a comparable
improvement is possible in arbitrary dimension has remained open. This
question motivates our work.

In this paper, we consider logistic regression for binary classification.
Given a dataset $(\mathbf{x}_i,y_i)_{i=1}^n\subset\mathbb{R}^d\times\{\pm1\}$,
the empirical logistic loss is
\begin{equation}
 L(\mathbf{w}):=\frac1n\sum_{i=1}^n\ln\big(1+\exp(-y_i\mathbf{x}_i^\top\mathbf{w})\big),
 \label{eq:loss}
\end{equation}
and we run \eqref{eq:gd} on $L$ with a large constant stepsize $\eta>0$ from
the initialization $\mathbf{w}_0:=\mathbf{0}$, as in \citet{wu2024large} and
\citet{crawshaw2026tight}.
We show that GD with a large constant stepsize achieves a substantially
faster convergence rate for logistic regression on linearly separable data in
arbitrary dimension. Our contributions are as follows.
\begin{itemize}[leftmargin=*]
 \item \textbf{An improved convergence rate.}
 We prove that GD with a large constant stepsize $\eta=1/\epsilon$ reaches
 loss $\epsilon$ within $O(\ln^{p}(1/\epsilon))$ steps, where the exponent
 $p$ depends only on the margin and the rank of the data. For fixed problem dependent
 parameters, this improves the $\widetilde O(1/\sqrt{\epsilon})$ step
 complexity of \citet{wu2024large} to polylogarithmic in $1/\epsilon$ (see
 Table~\ref{tab:comparison}). At the core of this result is a new bound on
 the transition time from the unstable to the stable phase. We show that the
 unstable phase, if GD ever enters it, ends within $O((\ln\eta)^{p})$
 steps, in contrast to the
 $O(\eta)$ bound of \citet{wu2024large}.
 \item \textbf{New proof techniques.}
 Our proof is based on a counting argument that differs from both existing
 analyses. It neither relies on the time-averaged gradient potential as in
 \citet{wu2024large}, which leads to a bound linear in $\eta$, nor tracks the
 oscillations of the iterates as in \citet{crawshaw2026tight}, which
 restricts the analysis to two dimensions. Specifically, (i) we split the unstable phase into
 recursively nested intervals whose average update directions are pairwise
 separated; (ii) a geometric capacity lemma shows that only a limited number
 of pairwise separated directions can coexist, which bounds the nesting depth
 in terms of the margin and the rank; and (iii) a counting argument
 bounds the number of intervals at each depth with a factorial saving, which
 yields the polylogarithmic bound.
\end{itemize}

\paragraph{Notation.}
We use $\ip{\cdot}{\cdot}$ and $\norm{\cdot}$ to denote the Euclidean inner
product and norm, respectively. We write $\conv S$ for the convex hull of a set $S$, and
$A\mathbin{\dot\cup}B$ for the union of disjoint sets $A$ and $B$. We use
the standard big-$O$ and big-$\Omega$ notation, with $\widetilde O(\cdot)$
and $\widetilde\Omega(\cdot)$ hiding polylogarithmic factors.

\subsection{Related Work}
\label{sec:related}

\paragraph{Logistic regression.}
With a small stepsize obeying the descent lemma, GD on linearly separable
data converges in direction to the max-margin classifier
\citep{soudry2018implicit,ji2019nonseparable}, where the latter also covers
non-separable data; these results have been extended to other optimization
geometries and to stochastic gradient descent
\citep{gunasekar2018characterizing,nacson2019stochastic}.
Faster margin maximization is possible with stepsizes that grow with the inverse
loss \citep{nacson2019convergence,ji2021characterizing} or with momentum
\citep{ji2021fast,wang2022accelerated}.

For GD with a large constant stepsize, \citet{wu2023implicit} proved
convergence and asymptotic margin maximization for arbitrarily large
stepsizes on non-degenerate data, with constants exponential in $\eta$, and
\citet{wu2024large} removed the exponential dependence and established the
two-phase analysis and the $\widetilde O(1/T^2)$ acceleration discussed
above. Their analysis has since been extended to two-layer networks
\citep{cai2024large}, regularized logistic regression
\citep{wu2025regularized}, local GD \citep{crawshaw2025constant}, and adaptive
stepsizes, for which \citet{zhang2025minimax} proved a $1/\gamma^2$ step
complexity that is minimax optimal among batch first-order methods
\citep[see also][]{kornowski2024oracle}; for non-separable data, GD with a
large stepsize need not converge to the minimizer and can instead settle into
a cycle \citep{meng2024gradient,meng2025sphere}; \citet{tyurin2025logistic}
related GD with very large stepsizes to the perceptron algorithm. Most
relevant to us, \citet{crawshaw2026tight} tightened the two-phase analysis of
\citet{wu2024large} in two dimensions and proved a nearly matching lower bound. Our
work is directly motivated by \citet{wu2024large} and
\citet{crawshaw2026tight}; we improve the arbitrary-dimensional analysis of
the former and complement the two-dimensional analysis of the latter.

\paragraph{Edge of stability.}
Neural networks are commonly trained with stepsizes at which the loss
oscillates yet converges in the long run
\citep{wu2018sgd,xing2018walk,lewkowycz2020large}, a phenomenon
\citet{cohen2021gradient} named the edge of stability;
\citet{cohen2022adaptive} observed analogous behavior for adaptive methods.
A large body of theory explains the mechanism of EoS in concrete models
\citep{zhu2023understanding,ahn2023learning,kreisler2023gradient,chen2024stability,lyu2022understanding,even2023sgd,andriushchenko2023sgd,lu2024benign,wang2022large,chen2023beyond}
and characterizes the behavior of GD under instability and how the
sharpness evolves during training
\citep{kong2020stochasticity,ahn2022understanding,arora2022understanding,ma2022multiscale,wang2022analyzing,damian2023self,cohen2025understanding}. Instead of explaining the
mechanism of EoS in deep networks, we study the optimization benefits of GD
operating in the unstable regime, in a natural learning problem and under
standard assumptions.

\paragraph{Aggressive stepsizes in convex optimization.}
For smooth convex optimization, a recent line of work shows that stepsize
schedules with occasional long steps, which may temporarily increase the loss,
provably improve the convergence of GD. Long-step schedules first gave
constant-factor gains \citep{altschuler2018greed,grimmer2024provably} or
gains for objectives with multiple scales \citep{kelner2022big}, and the
silver stepsize schedule and its successors accelerate GD beyond the
$O(1/t)$ rate of the descent regime
\citep{altschuler2024acceleration,grimmer2025accelerated,zhang2025anytime}.
These works design stepsize sequences for function classes whose minimizers are
attained, and the acceleration is driven by the schedule. In contrast, we
study a large constant stepsize on a problem whose infimum is not
attained, where the acceleration arises from the dynamics of GD itself.

\section{Main Results}
\label{sec:main}

We assume that the data are bounded and linearly separable
\citep{novikoff1962convergence,wu2024large}.
\begin{assumption}[Bounded and separable data]
\label{ass:data}
Assume the training data $(\mathbf{x}_i,y_i)_{i=1}^n$ satisfies
\begin{enumerate}[label=\Alph*.,leftmargin=*,nosep]
 \item for every $i=1,\ldots,n$, $\norm{\mathbf{x}_i}\le1$ and $y_i\in\{\pm1\}$;
 \item there is a margin $\gamma\in(0,1]$ and a unit vector $\wstar$ such that
 $\ip{y_i\mathbf{x}_i}{\wstar}\ge\gamma$ for every $i=1,\ldots,n$.
\end{enumerate}
\end{assumption}
We write
$r:=\dim\operatorname{span}\{y_1\mathbf{x}_1,\ldots,y_n\mathbf{x}_n\}\le\min\{n,d\}$
for the rank of the data.

Under Assumption~\ref{ass:data}, the logistic loss \eqref{eq:loss} is convex
and $\beta$-smooth with $\beta=1/4$. For stepsizes $\eta<2/\beta=8$, the
descent lemma guarantees monotone decrease of the loss, and classical theory
shows that GD reaches loss $\epsilon$ within $\widetilde O(1/\epsilon)$ steps
for fixed $n$ and $\gamma$ \citep{nesterov2018lectures,ji2019nonseparable}. With a larger stepsize the loss may
initially oscillate, yet for suitable choices GD converges even faster. \citet{wu2024large} showed
that GD with a large stepsize $\eta=\Theta(1/\sqrt{\epsilon})$ reaches
loss $\epsilon$ within $\widetilde O(1/\sqrt{\epsilon})$ steps. In this work, we show
that GD with a large stepsize $\eta=1/\epsilon$ reaches loss $\epsilon$
within $O(\ln^{p}(1/\epsilon))$ steps, where $p$ depends only on
the margin and the rank of the data.

Following \citet{wu2024large}, we say that GD is in the \emph{stable phase}
at step $t$ if the loss decreases monotonically from step $t$ onwards, and in
the \emph{unstable phase} otherwise. If $L(\mathbf{w}_s)\le1/\eta$ at some
step $s$, then GD is in the stable phase at step $s$
\citep[Theorem~1]{wu2024large}. We therefore define the \emph{phase transition
time}
\begin{equation}
 \tau:=\min\{t\ge0:\ L(\mathbf{w}_t)\le1/\eta\},
 \label{eq:tau}
\end{equation}
so that the loss may oscillate before step $\tau$ and decreases
monotonically afterwards.

To state the theorem, define the depth $\nu$, the exponent $p$, the
prefactor $C$, and the stepsize threshold $\eta_0$ (whose roles in the proof are
explained in Section~\ref{sec:analysis}) by
\begin{align}
 \nu&:=\max\bgl\{0,\ \min\bgl\{r,\ \bgl\lfloor\gamma^{-2}+\frac{\gamma^{-2}-1}{h-1}\bgr\rfloor\bgr\}-2\bgr\},
 \quad\text{where}\ \ h:=\lceil\ln(16\gamma^{-2})\rceil,
 \label{eq:def-depth}\\
 p&:=\begin{cases}
 \nu+\bgl\lceil\dfrac{\nu\ln(18h\gamma^{-2}/\nu)}{\ln(\max\{2,\nu\})}\bgr\rceil
 & \text{if }\nu\ge1,\\[2pt]
 0 & \text{if }\nu=0,
 \end{cases}
 \label{eq:def-exponent}\\
 C&:=80\gamma^{-4}h\big(n+h\ln(4\gamma^{-2})\big),
 \label{eq:def-prefactor}\\
 \eta_0&:=\big\lceil128\,K(1+\ln K)\big\rceil,
 \quad\text{where}\ \ K:=\exp(4)+h\gamma^{-2}+p+\ln(2\gamma^{-2}C).
 \label{eq:def-onset}
\end{align}
We now present an improved analysis for GD with a large constant stepsize
for logistic regression on linearly separable data. Our main theorem
improves the dependence of the phase transition time on $\eta$ from linear
\citep{wu2024large} to polylogarithmic in arbitrary dimension.
\begin{theorem}
\label{thm:main}
Consider \eqref{eq:gd} with a constant stepsize $\eta>0$ and initialization
$\mathbf{w}_0=\mathbf{0}$ for logistic regression \eqref{eq:loss} under
Assumption~\ref{ass:data}, and let $p$, $C$, and $\eta_0$ be as in
\eqref{eq:def-depth}--\eqref{eq:def-onset}.
\begin{enumerate}[label=(\roman*),leftmargin=*]
 \item \textbf{Phase transition.} If $0<\gamma<1/\sqrt2$ and
 $\eta\ge\eta_0$, then
 \begin{equation}
  \tau\le C\,(\ln\eta)^{p};
  \label{eq:main-bound}
 \end{equation}
 if $\gamma\ge1/\sqrt2$ and $\eta\ge64n^2$, then $\tau\le1$.
 \item \textbf{The stable phase} \citep[Theorem~1]{wu2024large}. The loss
 decreases monotonically from step $\tau$ onwards, that is,
 $L(\mathbf{w}_{t+1})\le L(\mathbf{w}_t)$ for every $t\ge\tau$.
\end{enumerate}
\end{theorem}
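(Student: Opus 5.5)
Part (ii) is \citet[Theorem~1]{wu2024large} applied at step $\tau$, so the work is part (i). The basic tool is the standard perceptron-type potential. Write $g(\mathbf{w}):=-\nabla L(\mathbf{w})=\frac1n\sum_i \ell_i(\mathbf{w})\,y_i\mathbf{x}_i$, where $\ell_i=1/(1+\exp(y_i\mathbf{x}_i^\top\mathbf{w}))\in(0,1)$. Then $g(\mathbf{w})\in\conv\{0,y_i\mathbf{x}_i\}$ scaled by the weights $\ell_i/n$, and
\[
\ip{g(\mathbf{w})}{\wstar}\ge\gamma\norm{g(\mathbf{w})}\ge \gamma G(\mathbf{w}),\qquad G(\mathbf{w}):=\frac1n\sum_i\ell_i(\mathbf{w}).
\]
Moreover $G(\mathbf{w})\ge \min\{L(\mathbf{w}),1/2\}/2$ up to constants. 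Consequently, while $t<\tau$ each step increases $\ip{\mathbf{w}_t}{\wstar}$ by at least $\eta\gamma\,\Omega(1/\eta)=\Omega(\gamma)$ in the worst case, but by $\Omega(\eta\gamma)$ whenever $G$ is of constant order. Meanwhile $\norm{\mathbf{w}_t}$ grows by at most $\eta\norm{g_t}$ per step.

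\textbf{The easy case $\gamma\ge1/\sqrt2$.} Here every pair of data points has positive correlation: $\ip{y_i\mathbf{x}_i}{y_j\mathbf{x}_j}\ge 2\gamma^2-1\ge0$, as one sees by projecting onto $\wstar$. After one step, $\mathbf{w}_1=\eta g(\mathbf{0})=\frac{\eta}{2n}\sum_j y_j\mathbf{x}_j$. This gives $y_i\mathbf{x}_i^\top\mathbf{w}_1\ge\frac{\eta}{2n}\norm{\mathbf{x}_i}^2$. I would check this is large enough to force $L(\mathbf{w}_1)\le1/\eta$ once $\eta\ge64n^2$ (norms of $\mathbf{x}_i$ are bounded below by $\gamma$ via the margin). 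This gives $\tau\le1$.

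\textbf{The general case.} I would partition $[0,\tau)$ into dyadic scales. At a maximal interval $I=[a,b)$ the displacement is $\mathbf{w}_b-\mathbf{w}_a=\eta\sum_{t\in I}g_t$. Define the average update direction $\mathbf{u}_I:=\sum_{t\in I}g_t/\norm{\sum_{t\in I}g_t}$. The loss being $>1/\eta$ throughout $I$ forces many steps on which some sample has margin $\lesssim\ln\eta$. The idea is that if the iterate kept moving in a direction $\mathbf{u}$ for long, the margins along the data in the support of $\mathbf{u}$ would grow linearly, and the loss would fall below $1/\eta$ within $O(h\ln\eta)$ steps (using $h=\lceil\ln(16\gamma^{-2})\rceil$ as the number of doublings needed). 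Hence a long interval must split into subintervals whose average directions $\mathbf{u}_{I_1},\mathbf{u}_{I_2},\dots$ are pairwise separated, say $\ip{\mathbf{u}_{I_j}}{\mathbf{u}_{I_k}}\le 1-\Omega(1/h)$ or nearly orthogonal after removing the $\wstar$ component. Recursing yields a tree of nested intervals.

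The key geometric ingredient is a capacity lemma. It says that vectors in $\conv\{y_i\mathbf{x}_i\}$ all have $\wstar$-component $\ge\gamma$, lie in an $r$-dimensional span, and are pairwise separated in the above sense; the number of such vectors along any chain is at most $\min\{r,\lfloor\gamma^{-2}+(\gamma^{-2}-1)/(h-1)\rfloor\}$. This follows from a Gram-matrix argument: the residuals orthogonal to $\wstar$ have norm$^2\le1-\gamma^2$ and pairwise inner products $\le -\gamma^2+O(1/h)$, and summing $\norm{\sum_k\mathbf{v}_k}^2\ge0$ bounds their count. This caps the nesting depth by $\nu+2$. The top two levels are absorbed into the constant $C$, giving depth $\nu$ for the genuinely recursive part.

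The counting step then bounds $\tau$. A node at depth $k$ of length $\ell$ has children whose lengths I would bound by $O(\ln\eta)$ times a factor, and the number of children is controlled by a potential such as the growth of $\ip{\mathbf{w}}{\wstar}$ versus $\norm{\mathbf{w}}\lesssim$ polylog. Unrolling the recursion over $\nu$ levels gives $\prod_{k}(c\,\ln\eta/k)$, where the factorial saving comes from counting ordered sequences of separated directions. The result is $(\ln\eta)^{\nu}\cdot(\ln\eta)^{\lceil\nu\ln(18h\gamma^{-2}/\nu)/\ln\nu\rceil}$, i.e. $(\ln\eta)^p$ with prefactor $C$. The condition $\eta\ge\eta_0$ ensures $\ln\eta$ dominates the constants $K$ in the recursion.

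\textbf{Main obstacle.} The hardest step is the splitting lemma: a long unstable interval must contain subintervals with pairwise separated average directions. It has to be made quantitative so that the separation threshold matches the capacity lemma, with the $1/(h-1)$ slack. I would try it first by contradiction: assume all sub-averages lie in a cap. Then show that the margin of every active sample grows at rate $\Omega(\eta\gamma^2/h)$ per step on average. This drives the loss below $1/\eta$ within the interval, which contradicts $t<\tau$.
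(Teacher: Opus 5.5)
Part (ii) and the case $\gamma\ge1/\sqrt2$ match the paper. In the general case, your architecture (nested intervals, a Gram-type capacity bound, a factorial count) is also the paper's. The gap is in the splitting lemma you flag as the main obstacle: the separation you aim for would not feed the capacity lemma.

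First, the Gram argument needs pairwise inner products at most $\theta=\gamma^2/h$, strictly below $\gamma^2$. A slack of order $1/h$, as in your $1-\Omega(1/h)$ or residual inner products $\le-\gamma^2+O(1/h)$, exceeds $\gamma^2$ for small $\gamma$ and bounds nothing.

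Second, and more seriously, separation of the siblings' \emph{own} averages $\mathbf{u}_{I_1},\mathbf{u}_{I_2},\ldots$ gives nothing along a chain. A parent's average is a convex combination of its prefix's and children's averages, so a dominant child can point essentially in the parent's direction, and the depth is not bounded. What is needed is that every node carry directions that are pairwise near-orthogonal \emph{and} near-orthogonal to each individual update $\mathbf{v}_t$ inside the node.

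The paper gets this directly, with no contradiction argument, from the radial bound $0\le\ip{\mathbf{w}_t}{\mathbf{v}_t}<\ln\eta$ at steps where all margins are nonnegative. This bound follows from convexity of $\ell$ and $L(\mathbf{w}_t)>1/\eta$. It forces the running average $\boldsymbol{\zeta}_t$ of the updates since the parent's first step $a$ to satisfy $\ip{\boldsymbol{\zeta}_t}{\mathbf{v}_t}\le\theta/2$ once $\sum_{k=a}^{t-1}\mu_k\ge2\ln\eta/\theta$. A child's new direction is this running average frozen at the child's first step, i.e.\ the average of the updates \emph{preceding} the child. The child is closed when the average drifts by $\theta/2$, and inherited bounds pass through convex averages exactly.

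Your contradiction step (averages in a cap make the margins of active samples grow) is not justified. The active samples change from step to step, and individual updates can oscillate while the averages stay in a cap. The paper argues that the loss must fall only at the terminal level. There capacity forces $\ip{\mathbf{v}_t}{\mathbf{v}_{t+1}}>\theta$ for consecutive steps, and convexity gives $1/(\eta L(\mathbf{w}_{t+1}))-1/(\eta L(\mathbf{w}_t))\ge\theta/4$, so terminal intervals have $O(1/\theta)$ steps.

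The counting also lacks its key input. Controlling the number of children via $\norm{\mathbf{w}}\lesssim\mathrm{polylog}$ cannot work, since already $\ip{\mathbf{w}_1}{\wstar}\ge\eta\gamma/2$. The paper instead bounds the cumulative coefficient, $\sum_{t<U}\mu_t\le2\eta\gamma^{-2}$ on the horizon $U=\min\{\tau,\widehat T\}$, using the split-optimization comparator of \citet{wu2024large}. It combines this with $\mu_t>1/2$ at steps with nonnegative margins.

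A running average moves by at most $2\ln(S_{t+1}/S_t)$, so its total movement is logarithmic in this budget:
\begin{itemize}
\item From the origin, where $S_1=\eta/2$, the movement is $O(\ln\gamma^{-2})$. This is why roots cost no factor of $\ln\eta$.
\item After a parent's prefix, the movement is $O(\ln\eta)$. This gives $O(\ln\eta/\theta)$ children whose starting coefficients are more than $\theta/4$ apart on a logarithmic scale. That log-separation within a shared budget is what produces the $1/\nu!$.
\end{itemize}
Relatedly, the $-2$ in $\nu$ comes from $\nu+1$ reference directions plus two updates exceeding capacity, not from absorbing two levels into $C$.

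You would also need to discard the steps with a negative margin, where the radial bound fails; the budget gives $|\mathcal T_-|<4n\gamma^{-2}$. Finally, you need to remove the horizon at the end using $C(\ln\eta)^{p}<\widehat T$. Without these pieces, your outline yields neither the depth bound nor the $(\ln\eta)^{\nu}/\nu!$ count, and hence not the exponent $p$.
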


The proof of Theorem~\ref{thm:main} is deferred to
Section~\ref{sec:analysis} and Appendices~\ref{app:constants}
and~\ref{app:proofs}.
Theorem~\ref{thm:main} describes both phases of GD with a large constant
stepsize. Part~(i) bounds the length of the unstable phase. Once the
stepsize exceeds the threshold $\eta_0$, GD reaches loss $1/\eta$ within
$C(\ln\eta)^{p}$ steps, and when $\gamma\ge1/\sqrt2$ a single step suffices.
This improves the linear dependence on $\eta$ of the previous
phase transition time bound \citep{wu2024large} to a polylogarithmic one in
arbitrary dimension. Part~(ii) describes the stable phase and restates the
analysis of \citet{wu2024large}. From step $\tau$ onwards the loss
decreases monotonically, and Appendix~\ref{app:stable} recalls the rate at
which it continues to decrease.

\paragraph{Parameter dependence.}
For $0<\gamma<1/\sqrt2$, the quantities in
\eqref{eq:def-depth}--\eqref{eq:def-onset} satisfy
\begin{equation*}
 C=O\big(\gamma^{-4}\ln(\gamma^{-2})\,[\,n+\ln^{2}(\gamma^{-2})\,]\big),
 \quad\
 \eta_0=\widetilde O\big(\gamma^{-2}+\ln n\big),
 \quad\
 p=O\big(\min\{r,\gamma^{-2}\}\ln(\gamma^{-2})\big).
\end{equation*}
Thus the prefactor is polynomial in $n$ and $1/\gamma$, while the threshold
$\eta_0$ is polynomial in $1/\gamma$ and only polylogarithmic in $n$. When $r\le2$, we
have $\nu=p=0$, and the bound \eqref{eq:main-bound} becomes
$\tau\le C$, which does not depend on $\eta$. When the rank satisfies
the condition \eqref{eq:rank-condition} in Appendix~\ref{app:parameters},
the exponent satisfies $p=(1+o(1))\gamma^{-2}$ as $\gamma\to0$.
All hidden constants are absolute, and these bounds are proved in
Proposition~\ref{prop:parameters}.

\paragraph{Comparisons with prior results.}
\citet{wu2024large} proved the phase transition time bound
\begin{equation}
 \tau\le\frac{60}{\gamma^2}
 \max\bgl\{\eta,\ n,\ e,\ \frac{\eta+n}{\eta}\ln\frac{\eta+n}{\eta}\bgr\}
 \label{eq:wu-tau}
\end{equation}
for every $\eta>0$ in arbitrary dimension. The bound \eqref{eq:wu-tau} grows
linearly with the stepsize, whereas the bound \eqref{eq:main-bound} grows
only polylogarithmically, and does not grow at all when $r\le2$; for large
stepsizes, \eqref{eq:main-bound} is therefore smaller by a factor almost
linear in $\eta$. The price is the stepsize threshold, as
\eqref{eq:main-bound} holds only for $\eta\ge\eta_0$ while
\eqref{eq:wu-tau} holds for every $\eta>0$. This threshold is mild, however, and the two-dimensional bound
of \citet{crawshaw2026tight} requires $\eta\ge n$.

The threshold $1/\sqrt2$ in Theorem~\ref{thm:main}(i) marks a change in the
data geometry. For $\gamma\ge1/\sqrt2$ the signed examples have pairwise
nonnegative inner products, and every GD step adds a nonnegative combination
of them, so no margin ever decreases and the loss decreases monotonically
from initialization for every $\eta>0$. There is no unstable phase in this
regime, and the second case of Theorem~\ref{thm:main}(i) adds that a single
step with $\eta\ge64n^2$ already brings the loss below $1/\eta$.
\citet{crawshaw2026tight} observe the same effect and exclude this regime
from their lower bound.

The phase transition bound in part~(i) and the monotonicity in part~(ii) of
Theorem~\ref{thm:main} combine to give the step complexity of GD with a large
constant stepsize, in terms of a target loss $\epsilon$ and of a budget of
$T$ steps. The proof is deferred to Appendix~\ref{app:corollaries}.

\begin{corollary}[Complexity of GD with a large constant stepsize]
\label{cor:accuracy}
Under the setting of Theorem~\ref{thm:main}, set $C:=1$ and $p:=0$ if
$\gamma\ge1/\sqrt2$, and define the thresholds
\[
 \epsilon_0:=
 \begin{cases}
  1/\eta_0 & \text{if }\gamma<1/\sqrt2,\\
  1/(64n^2) & \text{if }\gamma\ge1/\sqrt2,
 \end{cases}
 \qquad
 T_0:=\big\lceil C(\ln\eta_0)^{p}\big\rceil.
\]
\begin{enumerate}[label=(\roman*),leftmargin=*]
 \item \textbf{Target loss.} For every $0<\epsilon\le\epsilon_0$, GD with
 the constant stepsize $\eta=1/\epsilon$ satisfies
 $L(\mathbf{w}_t)\le\epsilon$ for some $t\le C\ln^{p}(1/\epsilon)$, and the
 loss decreases monotonically from step $t$ onwards.
 \item \textbf{Budget.} If $p\ge1$, then for every budget
 $T\ge T_0$, GD with the constant stepsize $\eta=\exp\big((T/C)^{1/p}\big)$
 satisfies $L(\mathbf{w}_T)\le\exp\big(-(T/C)^{1/p}\big)$. If $p=0$, then
 for every $0<\epsilon\le\epsilon_0$, GD with the constant stepsize
 $\eta=1/\epsilon$ satisfies $L(\mathbf{w}_T)\le\epsilon$ for every
 $T\ge C$.
\end{enumerate}
\end{corollary}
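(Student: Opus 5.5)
Corollary~\ref{cor:accuracy} follows directly from Theorem~\ref{thm:main}: part~(i) of the theorem gives the phase transition time, and part~(ii) gives monotonicity afterwards.

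\textbf{Part (i).} For $0<\epsilon\le\epsilon_0$, set $\eta=1/\epsilon$, so that $\eta\ge1/\epsilon_0$.

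If $\gamma<1/\sqrt2$, then $\eta\ge\eta_0$. Theorem~\ref{thm:main}(i) gives $\tau\le C(\ln\eta)^p=C\ln^p(1/\epsilon)$. By definition \eqref{eq:tau}, $L(\mathbf{w}_\tau)\le1/\eta=\epsilon$, so $t:=\tau$ works.

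If $\gamma\ge1/\sqrt2$, then $\eta\ge64n^2$, and the second case of Theorem~\ref{thm:main}(i) gives $\tau\le1=C\ln^0(1/\epsilon)$ under the convention $C=1$, $p=0$. One small point: $\ln^0(1/\epsilon)=1$ needs no positivity of $\ln(1/\epsilon)$ when $p=0$. When $p\ge1$, note $\epsilon_0=1/\eta_0<1$ since $\eta_0>1$, so $\ln(1/\epsilon)>0$.

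In both cases, monotone decrease from step $t=\tau$ onwards is exactly Theorem~\ref{thm:main}(ii).

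\textbf{Part (ii), $p\ge1$.} This case forces $\gamma<1/\sqrt2$. Set $\eta=\exp((T/C)^{1/p})$, so that $C(\ln\eta)^p=T$.

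The hypothesis must give $\eta\ge\eta_0$. From $T\ge T_0\ge C(\ln\eta_0)^p$ we get $(T/C)^{1/p}\ge\ln\eta_0$, hence $\eta\ge\eta_0$. Theorem~\ref{thm:main}(i) then yields $\tau\le T$ and $L(\mathbf{w}_\tau)\le1/\eta$. Theorem~\ref{thm:main}(ii) gives $L(\mathbf{w}_T)\le L(\mathbf{w}_\tau)\le1/\eta=\exp(-(T/C)^{1/p})$.

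\textbf{Part (ii), $p=0$.} Take $\eta=1/\epsilon$ with $\epsilon\le\epsilon_0$. In either regime of $\gamma$, Theorem~\ref{thm:main}(i) gives $\tau\le C$, using $C(\ln\eta)^0=C$ in the first regime and $\tau\le1=C$ in the second. Monotonicity then gives $L(\mathbf{w}_T)\le\epsilon$ for all $T\ge C$.

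The only step needing care is the threshold check $T\ge T_0\Rightarrow\eta\ge\eta_0$ in part~(ii). It relies on the monotonicity of $x\mapsto Cx^p$ for $x\ge0$ and on $\ln\eta_0>0$. Beyond that, the argument is bookkeeping of the conventions $C:=1$, $p:=0$ for large margin.
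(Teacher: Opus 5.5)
Your proposal is correct and follows essentially the same argument as the paper. In part~(i) you instantiate Theorem~\ref{thm:main}(i) at $\eta=1/\epsilon$, and in part~(ii) at $\eta=\exp((T/C)^{1/p})$, after checking $\eta\ge\eta_0$ from $T\ge T_0$. In both parts you read off $L(\mathbf{w}_\tau)\le1/\eta$ from the definition of $\tau$ and extend it to later steps by the monotonicity in Theorem~\ref{thm:main}(ii).
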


\begin{table}[t]
\centering
\caption{Step complexity of constant stepsize GD for logistic regression on
linearly separable data. Each complexity holds for all sufficiently small
$\epsilon$.}
\label{tab:comparison}
\vspace{0.5em}
\footnotesize
\setlength{\tabcolsep}{3pt}
\renewcommand{\arraystretch}{1.4}
\begin{tabular*}{\textwidth}{@{\extracolsep{\fill}}llll@{}}
\toprule
 & Complexity
 & Stepsize
 & Setting \\
\midrule
\citet[Corollary~2]{wu2024large}
 & $\widetilde O\big(1/(\gamma^{2}\sqrt{\epsilon})\big)$
 & $\eta=\Theta(1/\sqrt{\epsilon})$
 & any $d$ \\
\citet[Theorem~2.1]{crawshaw2026tight}
 & $O\big(n/\gamma+\ln(1/\gamma)/\gamma^{2}\big)$
 & $\eta\ge\Omega\big(1/(\epsilon(\gamma n+1))\big)$
 & $d=2$ \\
\midrule
Corollary~\ref{cor:accuracy}
 & $O\big(n\gamma^{-4}\ln^{3}(e/\gamma)\ln^{p}(1/\epsilon)\big)$
 & $\eta=1/\epsilon$
 & any $d$ \\
\bottomrule
\end{tabular*}
\end{table}

Corollary~\ref{cor:accuracy} shows that, for fixed $n$ and $\gamma$, the
step complexity improves from $\widetilde O(1/\sqrt{\epsilon})$
\citep{wu2024large} to $O(\ln^{p}(1/\epsilon))$, that is, from polynomial to
polylogarithmic in $1/\epsilon$. See Table~\ref{tab:comparison} for a comparison
with prior results. When $p=0$, in
particular when $r\le2$ or $\gamma\ge1/\sqrt2$, the complexity does not
depend on $\epsilon$, as \citet{crawshaw2026tight} showed for $d=2$. Given a
budget of $T$ steps, the loss improves from $\widetilde O(1/T^2)$
\citep{wu2024large} to $\exp(-(T/C)^{1/p})$ when $p\ge1$.

\paragraph{Limitations.}
Our bounds are not tight in several respects. First, the prefactor $C$ is
not optimized; in two dimensions it is weaker than the nearly tight bound of
\citet{crawshaw2026tight}. Second,
our results apply only above the stepsize threshold $\eta_0$, and we claim
nothing for moderate stepsizes between the descent regime and $\eta_0$.
Third, we prove no lower bounds, and this paper does not determine whether
the exponent $p$ or the dependence on $\eta$ in \eqref{eq:main-bound} is
optimal.

\section{Analysis}
\label{sec:analysis}

In this section we prove the phase transition bound of
Theorem~\ref{thm:main}(i) for $0<\gamma<1/\sqrt2$; the large-margin case is
elementary and is proved in Appendix~\ref{app:large-margin}, and part~(ii)
is recalled in Appendix~\ref{app:stable}. Throughout,
$\mathbf{z}_i:=y_i\mathbf{x}_i$ are the signed examples, $\eta\ge\eta_0$,
$\Lambda:=\ln\eta$, and $h$, $\nu$, $p$, $C$, and $\eta_0$ are as in
\eqref{eq:def-depth}--\eqref{eq:def-onset}. Rather than averaging a
potential along the trajectory or tracking the oscillations of the iterates,
we count steps. Steps at which some signed margin is negative are few. The
remaining steps are organized into nested intervals (see
\textbf{Figure~\ref{fig:hierarchy}} for an
illustration), each carrying reference
directions that constrain every update in it. The margin and the rank
bound the depth of this hierarchy, and the cumulative update coefficients of
sibling intervals are separated on a logarithmic scale, which lets us count
the intervals at each depth with a factorial saving. Adding up the lengths of
the intervals proves the bound.

\subsection{Update geometry before the phase transition}
\label{sec:geometry}

For each step, define the logistic weights, the update coefficient, and
the update direction
\begin{equation}
 q_{i,t}:=\frac{1}{1+\exp(\ip{\mathbf{z}_i}{\mathbf{w}_t})},
 \qquad
 Q_t:=\sum_{i=1}^{n}q_{i,t},
 \qquad
 \mu_t:=\frac{\eta Q_t}{n},
 \qquad
 \mathbf{v}_t:=\frac{1}{Q_t}\sum_{i=1}^{n}q_{i,t}\mathbf{z}_i.
 \label{eq:weights}
\end{equation}
In this notation, the update \eqref{eq:gd} becomes
\begin{equation}
 \mathbf{w}_{t+1}-\mathbf{w}_t=\mu_t\mathbf{v}_t,
 \qquad
 \mathbf{v}_t\in\conv\{\mathbf{z}_1,\ldots,\mathbf{z}_n\},
 \qquad
 \norm{\mathbf{v}_t}\le1,
 \qquad
 \ip{\mathbf{v}_t}{\wstar}\ge\gamma.
 \label{eq:update}
\end{equation}
Thus $\mu_t$ is the coefficient of the step and $\mathbf{v}_t$ its
direction, an average of the signed examples. The trajectory estimates
below hold on the finite horizon
\begin{equation}
 \widehat T:=\bgl\lfloor\frac{\gamma^{2}\exp(\eta/4)}{\eta}\bgr\rfloor,
 \qquad
 U:=\min\{\tau,\widehat T\},
 \label{eq:horizon}
\end{equation}
and we count the steps $0,1,\ldots,U-1$; at the end of the proof we show
that $U<\widehat T$, which forces $U=\tau$. We split these steps
according to the signs of the margins, into
$\mathcal T_-:=\{t<U:\ \ip{\mathbf{z}_i}{\mathbf{w}_t}<0\text{ for some }i\}$
and $\mathcal T_+:=\{0,1,\ldots,U-1\}\setminus\mathcal T_-$. The next lemma
bounds the total update coefficient before $U$, which shows that
$\mathcal T_-$ is small, and states two facts about the steps in
$\mathcal T_+$: each of them carries a coefficient of at least $1/2$, and
the iterate is not yet far along its own update direction.

\begin{lemma}[Trajectory estimates]
\label{lem:basic}
We have
\begin{equation}
 \sum_{t<U}\mu_t\le2\eta\gamma^{-2},
 \qquad
 |\mathcal T_-|<4n\gamma^{-2},
 \label{eq:budget}
\end{equation}
and at every $t\in\mathcal T_+$,
\begin{equation}
 \mu_t\ge\frac{\eta}{2}L(\mathbf{w}_t)>\frac12,
 \qquad
 0\le\ip{\mathbf{w}_t}{\mathbf{v}_t}<\Lambda.
 \label{eq:pointwise}
\end{equation}
\end{lemma}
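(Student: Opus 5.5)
The plan is a perceptron-type argument for the budget \eqref{eq:budget} and elementary inequalities for the logistic weights for \eqref{eq:pointwise}. Write $m_{i,t}:=\ip{\mathbf{z}_i}{\mathbf{w}_t}$ and $\ell(m):=\ln(1+e^{-m})$. By \eqref{eq:update}, every step increases $\ip{\mathbf{w}_t}{\wstar}$ by at least $\gamma\mu_t$. Hence $\ip{\mathbf{w}_U}{\wstar}\ge\gamma\sum_{t<U}\mu_t$.

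For the matching upper bound I would expand
\[
\norm{\mathbf{w}_{t+1}}^2=\norm{\mathbf{w}_t}^2+2\mu_t\ip{\mathbf{w}_t}{\mathbf{v}_t}+\mu_t^2\norm{\mathbf{v}_t}^2 .
\]
I would then use $\mu_t\le\eta$ (since $Q_t\le n$), $\norm{\mathbf{v}_t}\le1$, and an upper bound $\ip{\mathbf{w}_t}{\mathbf{v}_t}\lesssim\Lambda$ valid at every $t<U$. At $t\in\mathcal T_-$ the negative margins only help. This gives $\norm{\mathbf{w}_U}^2\le(\eta+2\Lambda)\sum_{t<U}\mu_t$. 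Since $\eta\ge\eta_0$ makes $2\Lambda$ negligible, combining with Cauchy--Schwarz, $\gamma\sum\mu_t\le\norm{\mathbf{w}_U}$, yields $\sum_{t<U}\mu_t\le(1+o(1))\eta\gamma^{-2}\le2\eta\gamma^{-2}$.

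The count of $\mathcal T_-$ follows from this budget. At $t\in\mathcal T_-$ some $m_{i,t}<0$, so $q_{i,t}>1/2$ and hence $\mu_t>\eta/(2n)$. Therefore $|\mathcal T_-|\,\eta/(2n)<2\eta\gamma^{-2}$, which is the second half of \eqref{eq:budget}.

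For \eqref{eq:pointwise}, fix $t\in\mathcal T_+$, so all $m_{i,t}\ge0$. Writing $x=e^{-m}\le1$, we have
\[
q=\frac{x}{1+x}\ge\frac{x}{2}\ge\frac{\ln(1+x)}{2}=\frac{\ell(m)}{2}.
\]
Summing over $i$ gives $Q_t\ge nL(\mathbf{w}_t)/2$, so $\mu_t\ge\eta L(\mathbf{w}_t)/2$. This is $>1/2$ because $t<\tau$ forces $L(\mathbf{w}_t)>1/\eta$. Nonnegativity of $\ip{\mathbf{w}_t}{\mathbf{v}_t}=Q_t^{-1}\sum_iq_{i,t}m_{i,t}$ is immediate from $m_{i,t}\ge0$.

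For the strict bound $<\Lambda$, I would show $\sum_iq_{i,t}(m_{i,t}-\Lambda)<0$ and split by whether $m_{i,t}\ge\Lambda$:
\begin{itemize}
\item If $m\ge\Lambda$, then $q(m)(m-\Lambda)\le e^{-m}(m-\Lambda)\le e^{-\Lambda-1}=1/(e\eta)$.
\item If $m<\Lambda$, the term is negative, and the constraint $\sum_i\ell(m_{i,t})>n/\eta$ must supply enough negative mass.
\end{itemize}
I would formalize this by maximizing $\sum_iq(m_i)(m_i-\Lambda)$ subject to $\sum_i\ell(m_i)\ge n/\eta$ and $m_i\ge0$, a one-variable-per-coordinate Lagrangian comparison of $q(m)(m-\Lambda)$ against $\ell(m)$. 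The same comparison, with nonpositive margins contributing nonpositively, supplies the bound $\ip{\mathbf{w}_t}{\mathbf{v}_t}\lesssim\Lambda$ used above for all $t<U$.

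\textbf{Main obstacle.} I expect this last step to be the hardest: the slack in $L>1/\eta$ is tiny, so margins sitting just below $\Lambda$ must be traded carefully against the positive contributions from margins above $\Lambda$. If the clean inequality fails, I would first try to absorb the deficit into a slightly worse constant in the norm bound, replacing $2\Lambda$ by $O(\Lambda+1)$, which $\eta\ge\eta_0$ still dominates. The horizon $\widehat T$ in \eqref{eq:horizon} is where I expect the finite-horizon control to enter. Because $U\le\widehat T$, the argument never needs bounds beyond time $\gamma^2e^{\eta/4}/\eta$. That lets one rule out margins so large that $e^{-m}$-type terms, which are then far smaller than $e^{-\eta/4}$, would otherwise interfere with the accounting.
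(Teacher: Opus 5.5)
Your plan is correct, and the step you flag as the main obstacle closes in one line. The coordinatewise Lagrangian comparison you describe, taken with multiplier one, is just convexity of $\ell$. The tangent line of $\ell$ at $m$, evaluated at $\Lambda$, gives $q(m)(m-\Lambda)\le\ell(\Lambda)-\ell(m)$ for every real $m$. Equivalently, $m\mapsto q(m)(m-\Lambda)+\ell(m)$ has derivative $q'(m)(m-\Lambda)$ and peaks at $m=\Lambda$. Summing over $i$ gives $Q_t(\ip{\mathbf{w}_t}{\mathbf{v}_t}-\Lambda)\le n\ln(1+1/\eta)-nL(\mathbf{w}_t)<0$ whenever $L(\mathbf{w}_t)>1/\eta$.

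No case split at $m=\Lambda$, no weakened constant, and no horizon is needed. This is exactly the paper's proof of the radial part of \eqref{eq:pointwise}. Because it never uses the sign of the margins, it holds at every $t<\tau$, which is what your perceptron step requires. Your expectation that $\widehat T$ is needed to tame large margins is therefore off; the horizon plays no role here. The remaining pointwise bounds and the count of $\mathcal T_-$ are the same as in the paper.

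The genuine difference is the budget. The paper uses the split-comparator argument of \citet{wu2024large} with $\mathbf{u}=\frac{3\eta}{4\gamma}\wstar$, which yields $\norm{\mathbf{w}_T-\mathbf{u}}^2\le\norm{\mathbf{u}}^2+2\eta Te^{-\eta/4}$. The error term is small only for $T\le\widehat T$. That is why the lemma lives on $U=\min\{\tau,\widehat T\}$, why finiteness of $\tau$ is imported from \citet{wu2024large}, and why the last condition in \eqref{eq:onset-conditions} is needed to get $U=\tau$ at the end. What it buys is a norm bound $\norm{\mathbf{w}_T}\le2\eta/\gamma$ valid up to $\widehat T$ regardless of the loss level.

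Your route feeds the radial bound into $\norm{\mathbf{w}_{t+1}}^2\le\norm{\mathbf{w}_t}^2+(\eta+2\Lambda)\mu_t$. This gives $\sum_{t<T}\mu_t\le(\eta+2\ln\eta)\gamma^{-2}\le2\eta\gamma^{-2}$ for every $T\le\tau$ and every $\eta>0$, since $2\ln\eta<\eta$ always; so $\eta\ge\eta_0$ is not needed there. It uses the loss threshold in place of a horizon, which makes $\widehat T$ unnecessary. Together with $\mu_t>\min\{1/2,\eta/(2n)\}$ for all $t<\tau$, it even proves $\tau<4\gamma^{-2}\max\{\eta,n\}$ directly, so one could take $U=\tau$ from the outset.
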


\begin{proof}
The first bound in \eqref{eq:budget} follows from the split optimization
technique introduced by \citet{wu2024large} on the horizon \eqref{eq:horizon}, for which
$\eta\ge4$ suffices (Appendix~\ref{app:trajectory}). At
$t\in\mathcal T_-$ some weight exceeds $1/2$, so $\mu_t>\eta/(2n)$, and the
second bound follows from the first. Now let $t\in\mathcal T_+$, write
$\ell(u):=\ln(1+e^{-u})$, and let $m_i:=\ip{\mathbf{z}_i}{\mathbf{w}_t}\ge0$.
For $u\ge0$ and $a:=e^{-u}\in(0,1]$,
\[
 \frac{1}{1+e^{u}}=\frac{a}{1+a}\ge\frac a2\ge\frac12\ln(1+a)
 =\frac12\ell(u),
\]
so $Q_t/n\ge L(\mathbf{w}_t)/2$, which gives $\mu_t\ge\eta L(\mathbf{w}_t)/2$;
this exceeds $1/2$ because $t<\tau$. For the radial bound, convexity
of $\ell$ and $\ell'(m_i)=-q_{i,t}$ give
$\ell(\Lambda)\ge\ell(m_i)+q_{i,t}(m_i-\Lambda)$, and summing over $i$,
\[
 Q_t\big(\ip{\mathbf{w}_t}{\mathbf{v}_t}-\Lambda\big)
 =\sum_{i=1}^{n}q_{i,t}(m_i-\Lambda)
 \le n\ell(\Lambda)-nL(\mathbf{w}_t)<0,
\]
because $\ell(\Lambda)=\ln(1+1/\eta)<1/\eta<L(\mathbf{w}_t)$. Nonnegativity
holds because every $m_i\ge0$.
\end{proof}

\subsection{A bounded-depth hierarchy of intervals}
\label{sec:hierarchy}

\begin{figure}[t]
\centering
\includegraphics[width=\textwidth]{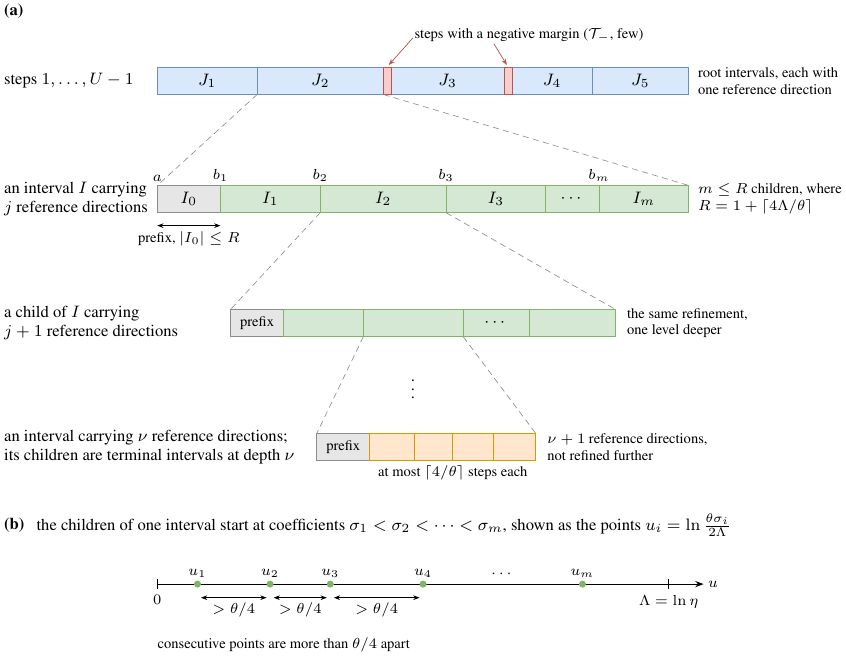}
\caption{The hierarchy of intervals. (a) The steps in $\mathcal T_+\setminus\{0\}$ are partitioned into root
intervals, each carrying one reference direction. An interval carrying $j$ reference
directions is split into a prefix of at most $R$ steps and at most $R$
children, and each child inherits the $j$ directions and gains one more
(Lemma~\ref{lem:refine}). An interval reached after $\nu$ refinements carries
$\nu+1$ reference directions; it is terminal, is not refined further, and has
at most $\lceil4/\theta\rceil$ steps (Lemma~\ref{lem:terminal}). (b) The
children of one interval start at coefficients $\sigma_i$, measured from the
start of the parent, and the points $u_i=\ln(\theta\sigma_i/(2\Lambda))$ are
more than $\theta/4$ apart, so a sum over the children can be compared with
an integral along this axis. Lemma~\ref{lem:factorial} applies this at every
level of the hierarchy, which yields the factorial saving.}
\label{fig:hierarchy}
\end{figure}

This section builds the hierarchy of intervals and bounds its depth. See
Figure~\ref{fig:hierarchy} for an illustration. Fix
$\theta:=\gamma^{2}/h$. Every interval below is a set of consecutive steps
in $\mathcal T_+$. We say that vectors $\mathbf{a}_1,\ldots,\mathbf{a}_j$
are \emph{reference directions} for an interval $I$ if
\begin{equation}
 \mathbf{a}_i\in\conv\{\mathbf{z}_1,\ldots,\mathbf{z}_n\},
 \qquad
 \ip{\mathbf{a}_i}{\mathbf{a}_k}\le\theta\ \ (i\ne k),
 \qquad
 \ip{\mathbf{a}_i}{\mathbf{v}_t}\le\theta\ \ (t\in I).
 \label{eq:references}
\end{equation}
A reference direction is an earlier average update direction from which
every update in $I$ has turned away. These constraints are local to one
branch of the construction.

Both constructions in the next lemma freeze a running average of update
directions and start a new interval whenever the average has moved by more
than $\theta/2$. Roots use the average from the origin,
$\boldsymbol{\xi}_t:=\mathbf{w}_t/S_t$ $(t\ge1)$ with $S_t:=\sum_{k<t}\mu_k$;
refinement uses the average since the start of the parent.

\begin{lemma}[Root partition and refinement]
\label{lem:refine}
The following statements hold with $R:=1+\lceil4\Lambda/\theta\rceil$.

\textup{(i)} The steps in $\mathcal T_+\setminus\{0\}$ can be partitioned
into at most $|\mathcal T_-|+1+\lceil4\ln(4\gamma^{-2})/\theta\rceil$
consecutive intervals, each carrying one reference direction.

\textup{(ii)} Every interval $I=\{a,\ldots,b\}\subseteq\mathcal T_+$
carrying $j$ reference directions admits a partition
\begin{equation}
 I=I_0\mathbin{\dot\cup}I_1\mathbin{\dot\cup}\cdots\mathbin{\dot\cup}I_m,
 \qquad |I_0|\le R,\qquad m\le R,
 \label{eq:R}
\end{equation}
into a prefix $I_0$ and consecutive children $I_1,\ldots,I_m$. Each child
inherits the
$j$ reference directions of $I$ and gains one new reference direction
satisfying \eqref{eq:references}. If $b_i:=\min I_i$ and
$\sigma_i:=\sum_{t=a}^{b_i-1}\mu_t$, then
\begin{equation}
 \sigma_i\ge\frac{2\Lambda}{\theta}\quad(1\le i\le m),
 \qquad
 \ln\frac{\sigma_{i+1}}{\sigma_i}>\frac\theta4\quad(1\le i<m).
 \label{eq:separation}
\end{equation}
The case $m=0$ is allowed, with $I_0=I$.
\end{lemma}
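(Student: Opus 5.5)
The plan is to build every interval by freezing a running average of update directions and starting a new interval as soon as the current average has drifted by more than $\theta/2$ from the frozen one. Two ingredients do the work. The first is the radial bound $0\le\ip{\mathbf{w}_t}{\mathbf{v}_t}<\Lambda$ of Lemma~\ref{lem:basic}. Once the accumulated coefficient is large, it forces the current update $\mathbf{v}_t$ to be nearly orthogonal to the running average. The second is a logarithmic drift estimate. If $\boldsymbol{\zeta}_t:=(\mathbf{w}_t-\mathbf{w}_a)/S'_t$ with $S'_t:=\sum_{k=a}^{t-1}\mu_k$, then
\[
\boldsymbol{\zeta}_{t+1}-\boldsymbol{\zeta}_t=\frac{\mu_t(\mathbf{v}_t-\boldsymbol{\zeta}_t)}{S'_{t+1}} .
\]
Since $\boldsymbol{\zeta}_t,\mathbf{v}_t$ lie in the unit ball, this step has norm at most $2\mu_t/S'_{t+1}\le 2\ln(S'_{t+1}/S'_t)$. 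Summing, the average can drift by more than $\theta/2$ only if the accumulated coefficient grows by a factor exceeding $e^{\theta/4}$. The same computation holds for $\boldsymbol{\xi}_t=\mathbf{w}_t/S_t$ with $S_t$ in place of $S'_t$.

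\emph{Refinement (ii).} Let $I=\{a,\ldots,b\}$ carry $j$ reference directions. Let the prefix $I_0$ consist of the steps $t\in I$ with $S'_t<2\Lambda/\theta$. Every $t\in I\subseteq\mathcal T_+$ has $\mu_t>1/2$ by \eqref{eq:pointwise}, so $|I_0|\le 1+\lceil4\Lambda/\theta\rceil=R$.

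Set $b_1:=\min(I\setminus I_0)$ and freeze the new direction $\mathbf{a}:=\boldsymbol{\zeta}_{b_1}$. Start child $I_{i+1}$ at the first $t$ with $\norm{\boldsymbol{\zeta}_t-\boldsymbol{\zeta}_{b_i}}>\theta/2$, and freeze $\boldsymbol{\zeta}_{b_{i+1}}$ there. We now check \eqref{eq:references} for a child $I_i$ and its frozen direction $\boldsymbol{\zeta}_{b_i}$.
\begin{itemize}
\item \emph{Convex hull.} $\boldsymbol{\zeta}_{b_i}$ is a convex combination of $\mathbf{v}_t$, $t\in I$. Hence it lies in $\conv\{\mathbf{z}_k\}$.
\item \emph{Orthogonality to the inherited directions.} Each inherited $\mathbf{a}_k$ satisfies $\ip{\mathbf{a}_k}{\mathbf{v}_t}\le\theta$ for all $t\in I$. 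Averaging gives $\ip{\mathbf{a}_k}{\boldsymbol{\zeta}_{b_i}}\le\theta$.
\item \emph{Updates in the child turn away.} Since $a\in\mathcal T_+$, every margin $\ip{\mathbf{z}_k}{\mathbf{w}_a}$ is nonnegative, so $\ip{\mathbf{w}_a}{\mathbf{v}_t}\ge0$. Together with $\ip{\mathbf{w}_t}{\mathbf{v}_t}<\Lambda$ this gives
\[
S'_t\ip{\boldsymbol{\zeta}_t}{\mathbf{v}_t}<\Lambda,
\qquad\text{so}\qquad
\ip{\boldsymbol{\zeta}_t}{\mathbf{v}_t}<\frac{\theta}{2}\quad\text{for } t\notin I_0 .
\]
Within the child, $\norm{\boldsymbol{\zeta}_t-\boldsymbol{\zeta}_{b_i}}\le\theta/2$ and $\norm{\mathbf{v}_t}\le1$. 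Therefore $\ip{\boldsymbol{\zeta}_{b_i}}{\mathbf{v}_t}\le\theta$.
\end{itemize}

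The separation \eqref{eq:separation} follows directly from the construction. First, $\sigma_i=S'_{b_i}\ge2\Lambda/\theta$ because $b_i\notin I_0$. Second, the drift estimate gives $\ln(\sigma_{i+1}/\sigma_i)\ge\theta/4$; strictness follows from the strict threshold $>\theta/2$ and $\ln(1+x)<x$.

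For the bound $m\le R$, note that all $\sigma_i$ lie in $[2\Lambda/\theta,\,2\eta\gamma^{-2}]$ by \eqref{eq:budget}. Hence
\[
(m-1)\frac{\theta}{4}<\ln\frac{\eta\gamma^{-2}\theta}{\Lambda}\le\ln\eta=\Lambda,
\]
using $\gamma^{-2}\theta=1/h\le1$ and $\Lambda\ge1$. This gives $m\le R$.

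\emph{Roots (i).} Here I use $\boldsymbol{\xi}_t$ directly. All weights equal $1/2$ at step $0$, so $S_1=\mu_0=\eta/2$. Together with \eqref{eq:budget}, this gives $S_t\in[\eta/2,\,2\eta\gamma^{-2}]$ for $1\le t\le U$. The radial bound and $\ip{\mathbf{w}_t}{\mathbf{v}_t}\ge0$ then give, for $t\in\mathcal T_+$,
\[
\ip{\boldsymbol{\xi}_t}{\mathbf{v}_t}<\frac{2\Lambda}{\eta}\le\frac{\theta}{2},
\]
where the last inequality uses $\eta\ge\eta_0$; checking this against the definition of $\eta_0$ is routine. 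Start a new root after each maximal run of $\mathcal T_-$ steps (at most $|\mathcal T_-|+1$ such starts). Also start one whenever $\boldsymbol{\xi}_t$ has drifted by more than $\theta/2$ from the frozen value. Each interval's frozen $\boldsymbol{\xi}$ is a single reference direction, by the same argument as above. The drift bound holds across all steps, including those in $\mathcal T_-$. So the number of drift-triggered starts is at most
\[
\frac{\ln(4\gamma^{-2})}{\theta/4}.
\]

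The step I expect to need the most care is the root count. The drift and budget estimates must be applied across the gaps created by $\mathcal T_-$. One must also verify that $\eta\ge\eta_0$ indeed gives $4\Lambda/\eta\le\theta$.
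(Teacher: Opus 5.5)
Your proposal is correct and follows the paper's proof essentially step for step. It uses the same prefix threshold $S^{(a)}_t<2\Lambda/\theta$, the same rule of cutting when the running average ($\boldsymbol{\zeta}_t$, or $\boldsymbol{\xi}_t$ for the roots) moves by more than $\theta/2$, and the same drift bound $2\ln(S_{t+1}/S_t)$ for the separation and the counts; it also verifies \eqref{eq:references} the same way, by averaging for the inherited directions and by the local radial bound plus the cut rule for the new one. Two small remarks: the strict inequality in \eqref{eq:separation} already follows from $\theta/2<\norm{\boldsymbol{\zeta}_{b_{i+1}}-\boldsymbol{\zeta}_{b_i}}\le2\ln(\sigma_{i+1}/\sigma_i)$, so $\ln(1+x)<x$ is not needed, and the condition $4\Lambda\le\eta\theta$ you defer is exactly the root condition the paper verifies separately in its stepsize lemma.
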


\begin{proof}[Proof sketch]
\emph{Root partition.} The vector $\boldsymbol{\xi}_{t+1}$ is a convex
combination of $\boldsymbol{\xi}_t$ and $\mathbf{v}_t$ with weights
$S_t/S_{t+1}$ and $\mu_t/S_{t+1}$, and both vectors lie in the convex hull
of the signed examples, so
\begin{equation}
 \norm{\boldsymbol{\xi}_{t+1}-\boldsymbol{\xi}_t}
 \le\frac{2\mu_t}{S_{t+1}}\le2\ln\frac{S_{t+1}}{S_t},
 \qquad
 \sum_{t=1}^{U-1}\norm{\boldsymbol{\xi}_{t+1}-\boldsymbol{\xi}_t}
 \le2\ln\frac{S_U}{S_1}\le2\ln(4\gamma^{-2}),
 \label{eq:path}
\end{equation}
by $S_1=\mu_0=\eta/2$ and \eqref{eq:budget}. The set
$\mathcal T_+\setminus\{0\}$ consists of at most $|\mathcal T_-|+1$ maximal
runs of consecutive steps. Cut each run whenever $\boldsymbol{\xi}_t$ has
moved by more than $\theta/2$ from its value at the start of the current
piece; by \eqref{eq:path} there are at most $4\ln(4\gamma^{-2})/\theta$ cuts
in total. A piece starting at step $c$ receives
$\mathbf{a}_1:=\boldsymbol{\xi}_c$, which lies in the convex hull, and for
every $t$ in the piece, by \eqref{eq:pointwise}, $S_t\ge S_1=\eta/2$, and
the stepsize condition $4\Lambda\le\eta\theta$,
\[
 \ip{\mathbf{a}_1}{\mathbf{v}_t}
 \le\frac{\ip{\mathbf{w}_t}{\mathbf{v}_t}}{S_t}
 +\norm{\boldsymbol{\xi}_c-\boldsymbol{\xi}_t}
 <\frac{2\Lambda}{\eta}+\frac\theta2\le\theta.
\]

\emph{Refinement.} Let $S^{(a)}_t:=\sum_{k=a}^{t-1}\mu_k$ and, for $t>a$,
let $\boldsymbol{\zeta}_t:=(\mathbf{w}_t-\mathbf{w}_a)/S^{(a)}_t$ be the
average update direction since step $a$. Every signed margin at $a$ is
nonnegative and $\mathbf{v}_t$ is a convex combination of signed examples,
so $\ip{\mathbf{w}_a}{\mathbf{v}_t}\ge0$, and \eqref{eq:pointwise} gives
\begin{equation}
 \ip{\boldsymbol{\zeta}_t}{\mathbf{v}_t}
 \le\frac{\ip{\mathbf{w}_t}{\mathbf{v}_t}}{S^{(a)}_t}
 <\frac{\Lambda}{S^{(a)}_t}\le\frac\theta2
 \qquad\text{once }S^{(a)}_t\ge\frac{2\Lambda}{\theta}.
 \label{eq:local-radial}
\end{equation}
The prefix $I_0$ consists of the steps of $I$ at which
$S^{(a)}_t<2\Lambda/\theta$; since $\mu_t>1/2$ on $\mathcal T_+$, it has at
most $\lceil4\Lambda/\theta\rceil$ steps, and $\sigma_i\ge2\Lambda/\theta$
by construction. On the rest of $I$ we cut a new child whenever
$\boldsymbol{\zeta}_t$ has moved by more than $\theta/2$ from its value at
the start of the current child. As in \eqref{eq:path},
$\norm{\boldsymbol{\zeta}_{t+1}-\boldsymbol{\zeta}_t}
\le2\ln(S^{(a)}_{t+1}/S^{(a)}_t)$, and the total movement after the prefix
is at most $2\ln(\eta\gamma^{-2}\theta/\Lambda)<2\Lambda$ by
\eqref{eq:budget}, so $m<1+4\Lambda/\theta$. The child $I_i$ receives
$\mathbf{a}_{j+1}:=\boldsymbol{\zeta}_{b_i}$, a convex combination of the
$\mathbf{v}_k$ with $a\le k<b_i$. The inner-product bounds
\eqref{eq:references} for the old reference directions pass through this
average exactly, and the bound for the new direction follows from
\eqref{eq:local-radial} and the cut rule:
\begin{equation}
\begin{aligned}
 \ip{\mathbf{a}_s}{\mathbf{a}_{j+1}}
 &=\frac1{\sigma_i}\sum_{k=a}^{b_i-1}\mu_k\ip{\mathbf{a}_s}{\mathbf{v}_k}
 \le\theta\qquad(1\le s\le j),\\
 \ip{\mathbf{a}_{j+1}}{\mathbf{v}_t}
 &\le\ip{\boldsymbol{\zeta}_t}{\mathbf{v}_t}
 +\norm{\boldsymbol{\zeta}_{b_i}-\boldsymbol{\zeta}_t}
 \le\frac\theta2+\frac\theta2=\theta
 \qquad(t\in I_i).
\end{aligned}
 \label{eq:caps}
\end{equation}
Finally, a completed child moves the average by more than $\theta/2$, and
the movement over the child is at most $2\ln(\sigma_{i+1}/\sigma_i)$, which
gives \eqref{eq:separation}. Appendix~\ref{app:refinement} gives the
cut points and the counts.
\end{proof}

Lemma~\ref{lem:refine} shows where the powers of $\Lambda=\ln\eta$ come
from. The roots use the average from the origin, whose total movement
\eqref{eq:path} does not depend on $\eta$, so the first reference direction
costs no factor of $\Lambda$. Each refinement costs one factor of $\Lambda$
through $R$, because the average since the start of the parent must first
accumulate coefficient $2\Lambda/\theta$. Since the reference directions are
convex averages, every inherited inner-product bound is preserved exactly, so
the same $\theta$ serves at every depth. The depth of the hierarchy is
bounded by the following capacity argument.

\begin{lemma}
\label{lem:capacity}
Let $\mathbf{b}_1,\ldots,\mathbf{b}_m\in\conv\{\mathbf{z}_1,\ldots,\mathbf{z}_n\}$
satisfy $\ip{\mathbf{b}_i}{\mathbf{b}_k}\le\theta$ for all $i\ne k$. Then
\begin{equation}
 m\le\min\bgl\{r,\ \bgl\lfloor\frac{1-\theta}{\gamma^{2}-\theta}\bgr\rfloor\bgr\}
 \le\nu+2.
 \label{eq:capacity}
\end{equation}
\end{lemma}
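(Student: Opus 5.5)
The plan is to prove the two upper bounds on $m$ separately, both from the margin assumption applied to the $\mathbf{b}_i$, and then rewrite the resulting expression in terms of $h$ to match \eqref{eq:def-depth}. Two preliminary facts are used throughout. Each $\mathbf{b}_i$ is a convex combination of signed examples, so Assumption~\ref{ass:data} gives $\norm{\mathbf{b}_i}\le1$ and $\ip{\mathbf{b}_i}{\wstar}\ge\gamma$. Also $\theta=\gamma^2/h<\gamma^2$, since $h=\lceil\ln(16\gamma^{-2})\rceil\ge\lceil\ln 32\rceil=4>1$ when $\gamma<1/\sqrt2$.

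\emph{The margin bound.} Let $\mathbf{b}:=\sum_{i=1}^m\mathbf{b}_i$. On the one hand, $\norm{\mathbf{b}}\ge\ip{\mathbf{b}}{\wstar}\ge m\gamma$. On the other hand, expanding the square and using the pairwise condition,
\[
 \norm{\mathbf{b}}^2=\sum_i\norm{\mathbf{b}_i}^2+\sum_{i\ne k}\ip{\mathbf{b}_i}{\mathbf{b}_k}\le m+m(m-1)\theta .
\]
Combining the two and dividing by $m$ gives $m\gamma^2\le1+(m-1)\theta$, that is, $m(\gamma^2-\theta)\le1-\theta$. Since $\gamma^2>\theta$, this yields $m\le\lfloor(1-\theta)/(\gamma^2-\theta)\rfloor$.

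\emph{The rank bound.} We show that $\mathbf{b}_1,\ldots,\mathbf{b}_m$ are linearly independent. They lie in the span of the signed examples, which has dimension $r$, so this gives $m\le r$. Suppose $\sum_i c_i\mathbf{b}_i=\mathbf{0}$ with not all $c_i=0$. Let $P$ and $N$ be the index sets of positive and negative coefficients, and define
\[
 \mathbf{u}:=\sum_{i\in P}c_i\mathbf{b}_i=\sum_{k\in N}|c_k|\mathbf{b}_k,
 \qquad A:=\sum_{P}c_i,\qquad B:=\sum_{N}|c_k| .
\]
Without loss of generality $P\neq\emptyset$, otherwise negate all coefficients. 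Pairing with $\wstar$ gives $\norm{\mathbf{u}}\ge\gamma A>0$. If $N=\emptyset$, then $\mathbf{u}=\mathbf{0}$, a contradiction. Otherwise $\norm{\mathbf{u}}\ge\gamma B$ as well, so $\norm{\mathbf{u}}^2\ge\gamma^2AB$. But $P$ and $N$ are disjoint, so computing $\norm{\mathbf{u}}^2$ as the inner product of the two expressions for $\mathbf{u}$ uses only cross terms $\ip{\mathbf{b}_i}{\mathbf{b}_k}$ with $i\ne k$. Hence $\norm{\mathbf{u}}^2\le\theta AB$. This contradicts $\theta<\gamma^2$ and $AB>0$. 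This sign-splitting step is the only delicate point: the off-diagonal inner products may be negative, so a direct diagonal-dominance argument on the Gram matrix does not apply, and splitting the coefficients by sign handles that.

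\emph{Matching \eqref{eq:def-depth}.} Substitute $\theta=\gamma^2/h$:
\[
 \frac{1-\theta}{\gamma^2-\theta}=\frac{h-\gamma^2}{\gamma^2(h-1)}=\frac{h\gamma^{-2}-1}{h-1}=\gamma^{-2}+\frac{\gamma^{-2}-1}{h-1}.
\]
Therefore the middle quantity in \eqref{eq:capacity} is exactly $M:=\min\{r,\lfloor\gamma^{-2}+(\gamma^{-2}-1)/(h-1)\rfloor\}$. By definition $\nu=\max\{0,M-2\}\ge M-2$, which gives $M\le\nu+2$.
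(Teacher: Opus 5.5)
Your proof is correct and follows the paper's argument essentially step for step. Linear independence comes from the same sign-splitting contradiction, with $\norm{\mathbf{u}}^2\ge\gamma^2AB$ from the separator against $\norm{\mathbf{u}}^2\le\theta AB$ from the cross terms, giving $m\le r$. The same bound $m^2\gamma^2\le\norm{\sum_i\mathbf{b}_i}^2\le m+m(m-1)\theta$ gives the floor, followed by the identical rewriting at $\theta=\gamma^2/h$. The only differences are cosmetic: you treat the one-signed case explicitly and prove the two bounds in the opposite order.
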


\begin{proof}
Each $\mathbf{b}_i$ satisfies $\norm{\mathbf{b}_i}\le1$ and
$\ip{\mathbf{b}_i}{\wstar}\ge\gamma$. Suppose $\sum_ic_i\mathbf{b}_i=\mathbf{0}$
with the $c_i$ not all zero. Coefficients of both signs occur, since a
nonzero nonnegative combination has positive inner product with $\wstar$.
Let $\mathbf{u}:=\sum_{c_i>0}c_i\mathbf{b}_i=\sum_{c_k<0}(-c_k)\mathbf{b}_k$,
$a:=\sum_{c_i>0}c_i$, and $b:=\sum_{c_k<0}(-c_k)$. The separator gives
$\norm{\mathbf{u}}^2\ge\gamma^2\max\{a^2,b^2\}\ge\gamma^2ab$, while pairing
the two representations gives $\norm{\mathbf{u}}^2\le\theta ab<\gamma^2ab$,
a contradiction. Hence the $\mathbf{b}_i$ are linearly independent vectors
in the span of the data, and $m\le r$. Next,
\[
 m^2\gamma^2
 \le\ip{\textstyle\sum_i\mathbf{b}_i}{\wstar}^2
 \le\big\lVert\textstyle\sum_i\mathbf{b}_i\big\rVert^2
 \le m+m(m-1)\theta,
\]
so $m(\gamma^2-\theta)\le1-\theta$. With $\theta=\gamma^2/h$ the floor in
\eqref{eq:capacity} equals
$\lfloor\gamma^{-2}+(\gamma^{-2}-1)/(h-1)\rfloor$, and \eqref{eq:def-depth}
gives the last inequality.
\end{proof}

By Lemma~\ref{lem:capacity}, at most $\nu+2$ vectors of the convex hull can
have pairwise inner products at most $\theta$. An interval carrying $\nu+1$
reference directions therefore leaves room for only one more such vector, so
any two of its update directions have inner product above $\theta$. The next
lemma shows that such an interval is short, because
$1/(\eta L(\mathbf{w}_t))$ grows by a fixed amount at each of its steps.

\begin{lemma}
\label{lem:terminal}
If an interval $I\subseteq\mathcal T_+$ carries $\nu+1$ reference
directions, then $|I|\le\lceil4/\theta\rceil$.
\end{lemma}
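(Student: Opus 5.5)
The plan is to show that $1/(\eta L(\mathbf{w}_t))$ increases by at least $\theta/4$ from each step of $I$ to the next. Since this quantity lies in $(0,1)$ on $\mathcal T_+$, the length bound then follows by counting.

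\emph{Step 1: consecutive update directions are aligned.} Let $\mathbf{a}_1,\ldots,\mathbf{a}_{\nu+1}$ be the reference directions of $I$, and let $s\ne t$ be steps in $I$. Suppose $\ip{\mathbf{v}_s}{\mathbf{v}_t}\le\theta$. Then $\mathbf{a}_1,\ldots,\mathbf{a}_{\nu+1},\mathbf{v}_s,\mathbf{v}_t$ are $\nu+3$ vectors in $\conv\{\mathbf{z}_1,\ldots,\mathbf{z}_n\}$ (using \eqref{eq:update}). By \eqref{eq:references} their pairwise inner products are at most $\theta$. This contradicts Lemma~\ref{lem:capacity}, which allows at most $\nu+2$ such vectors. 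Hence $\ip{\mathbf{v}_s}{\mathbf{v}_t}>\theta$ for all $s\ne t$ in $I$. In particular this holds for consecutive steps $t,t+1\in I$.

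\emph{Step 2: a one-step loss contraction.} Fix consecutive steps $t,t+1\in I$ and write $L_t:=L(\mathbf{w}_t)$. Use convexity of $\ell(u)=\ln(1+e^{-u})$ at the new margins $m_i':=\ip{\mathbf{z}_i}{\mathbf{w}_{t+1}}$, with $\ell'(m_i')=-q_{i,t+1}$:
\[
 \ell(m_i)\ge\ell(m_i')+q_{i,t+1}(m_i'-m_i).
\]
Averaging over $i$ and using $m_i'-m_i=\mu_t\ip{\mathbf{z}_i}{\mathbf{v}_t}$ gives
\[
 L_t\ge L_{t+1}+\frac{Q_{t+1}}{n}\,\mu_t\ip{\mathbf{v}_{t+1}}{\mathbf{v}_t}.
\]
Both $t$ and $t+1$ lie in $\mathcal T_+$, so the proof of Lemma~\ref{lem:basic} gives $Q_{t+1}/n\ge L_{t+1}/2$, and \eqref{eq:pointwise} gives $\mu_t\ge\eta L_t/2$. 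Together with Step~1 this yields
\[
 L_t\ge L_{t+1}\Big(1+\frac{\theta\eta L_t}{4}\Big).
\]
Dividing by $\eta L_tL_{t+1}$ gives
\[
 \frac1{\eta L_{t+1}}\ge\frac1{\eta L_t}+\frac\theta4 .
\]

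\emph{Step 3: counting.} Every step of $I$ is before $\tau$, so $\eta L_t>1$. Every step is also in $\mathcal T_+$, so $L_t>0$. Hence $1/(\eta L_t)\in(0,1)$ throughout $I$. Over the $|I|-1$ consecutive transitions in $I$, the quantity grows by at least $(|I|-1)\theta/4$, and this growth must stay below $1$. So $|I|-1<4/\theta$, which gives $|I|\le\lceil4/\theta\rceil$.

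\emph{Main obstacle.} The delicate point is Step~2. One cannot simply combine the radial bound $\ip{\mathbf{w}_t}{\mathbf{v}_t}<\Lambda$ with the alignment from Step~1: that only gives $|I|=O(\Lambda/\theta)$, which reintroduces a factor of $\Lambda$. The fix is to apply convexity at the new iterate, so that the weights $q_{i,t+1}$ produce exactly $\mathbf{v}_{t+1}$. The pairwise alignment from Step~1 is then used for the consecutive pair $(t,t+1)$. I would also check that the weight bound $Q/n\ge L/2$ is invoked only at steps in $\mathcal T_+$, which holds because $I\subseteq\mathcal T_+$ consists of consecutive steps.
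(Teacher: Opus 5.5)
Your proposal is correct and follows essentially the same route as the paper. The paper also uses the capacity contradiction to get $\langle\mathbf{v}_t,\mathbf{v}_{t+1}\rangle>\theta$, then convexity at $\mathbf{w}_{t+1}$, and then telescopes $1/(\eta L(\mathbf{w}_t))\in(0,1)$. Your per-margin convexity inequality with the factor $Q_{t+1}/n=\mu_{t+1}/\eta$ is exactly the paper's gradient inequality for $L$ written coordinatewise.
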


\begin{proof}
If two distinct steps $s,t\in I$ had $\ip{\mathbf{v}_s}{\mathbf{v}_t}\le\theta$,
then $\mathbf{v}_s$, $\mathbf{v}_t$, and the $\nu+1$ reference directions
would be $\nu+3$ vectors in the convex hull with pairwise inner products at
most $\theta$, contradicting Lemma~\ref{lem:capacity}. Hence
$\ip{\mathbf{v}_t}{\mathbf{v}_{t+1}}>\theta$ for consecutive steps in $I$.
Convexity of $L$ at $\mathbf{w}_{t+1}$, the identity
$\nabla L(\mathbf{w}_{t+1})=-(\mu_{t+1}/\eta)\mathbf{v}_{t+1}$, and
\eqref{eq:pointwise} give
\[
 L(\mathbf{w}_t)-L(\mathbf{w}_{t+1})
 \ge\ip{\nabla L(\mathbf{w}_{t+1})}{\mathbf{w}_t-\mathbf{w}_{t+1}}
 =\frac{\mu_t\mu_{t+1}}{\eta}\ip{\mathbf{v}_t}{\mathbf{v}_{t+1}}
 \ge\frac{\theta\eta}{4}L(\mathbf{w}_t)L(\mathbf{w}_{t+1}),
\]
that is, $1/(\eta L(\mathbf{w}_{t+1}))-1/(\eta L(\mathbf{w}_t))\ge\theta/4$.
Before $\tau$ every reciprocal $1/(\eta L(\mathbf{w}_t))$ is below one,
so summing over the $|I|-1$ consecutive pairs gives $(|I|-1)\theta/4<1$,
that is, $|I|\le\lceil4/\theta\rceil$.
\end{proof}

Each root carries one reference direction and each refinement adds one, so
an interval reached after $\nu$ refinements carries $\nu+1$ reference
directions. We call such an interval \emph{terminal}. It is not refined
further, and Lemma~\ref{lem:terminal} bounds its length. This is the role of
the $2$ subtracted in \eqref{eq:def-depth}, since $\nu+1$ reference
directions and two update directions are one vector more than
Lemma~\ref{lem:capacity} allows. When $\nu=0$, in particular when $r\le2$,
every root is already terminal and no refinement is needed, which is why
$p=0$ there.

\subsection{Counting intervals}
\label{sec:packing}

This section counts the intervals of the hierarchy at each depth and
bounds the total length of one tree. The construction produces one rooted
tree per root interval. A node is an interval together with its reference
directions, the children of a nonterminal interval are the
$I_1,\ldots,I_m$ of Lemma~\ref{lem:refine}(ii), and terminal intervals have no
children. Let $D_j(I)$ denote the number of descendants of $I$ at depth
$j$ below $I$, with $D_0(I)=1$. Each nonterminal interval has at most
$R\le6\Lambda/\theta$ children, as $\Lambda>1>\theta$, so
$D_j(I)\le(6\Lambda/\theta)^{j}$ for every $j$. This count ignores that the
children of one interval share its coefficient budget. By
Lemma~\ref{lem:refine}(ii), the children start at coefficients
$\sigma_1<\sigma_2<\cdots$ measured from the start of the parent, and
\eqref{eq:separation} keeps their logarithms $\theta/4$ apart. Counting the
siblings jointly along this axis yields a factorial saving.

To this end, for a nonempty interval $I$ define its coefficient and its
logarithmic size
\begin{equation}
 \mu(I):=\sum_{t\in I}\mu_t,
 \qquad
 \chi(I):=\max\bgl\{0,\ \ln\frac{\theta\mu(I)}{2\Lambda}\bgr\}.
 \label{eq:chi}
\end{equation}
By \eqref{eq:budget}, $0\le\chi(I)\le\max\{0,\ln(\eta/(h\Lambda))\}\le\Lambda$.
The next lemma uses the truncated exponential
$E_k(s):=\sum_{i=0}^{k}s^{i}/i!$ and, for $\nu\ge1$, the parameter
$\alpha:=2\nu/\Lambda$.

\begin{lemma}
\label{lem:factorial}
Let $\nu\ge1$. For every interval $I$ of the construction and every
$0\le j\le\nu$,
\begin{equation}
 D_j(I)\le2\,\frac{(6\Lambda/\theta)^{j}}{j!}.
 \label{eq:factorial}
\end{equation}
\end{lemma}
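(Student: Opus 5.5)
The plan is to prove a stronger statement by induction on the height of $I$ in its tree, and to measure $I$ by its logarithmic size $\chi(I)$ rather than by the crude count $R$. The target is a bound
\[
 D_j(I)\le\Big(\frac4\theta\Big)^{j}\,e^{\alpha\chi(I)}\,F_j\big(\chi(I)\big),
 \qquad F_j(x):=\frac{(x+c)^{j}}{j!},
\]
for some offset $c=O(\Lambda)$ absorbing boundary effects. This reduces to \eqref{eq:factorial} once we plug in $\chi(I)\le\Lambda$ and $\alpha\Lambda=2\nu$. The slack between $4/\theta$ and $6/\theta$, and between $(\Lambda+c)^j$ and $\Lambda^j$, must absorb the constants. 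If it does not, I would instead carry the truncated exponential $E_\nu(\alpha x)$ as the weight, which is the role I expect $E_k$ to play. The base case $j=0$ is trivial. Terminal intervals have $D_j=0$ for $j\ge1$.

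The key structural input comes from Lemma~\ref{lem:refine}(ii). For a nonterminal $I$ with children $I_1,\ldots,I_m$, I would set $u_i:=\ln(\theta\sigma_i/(2\Lambda))\ge0$; by \eqref{eq:separation} these are increasing with gaps exceeding $\theta/4$. Since $I_i$ lies between $b_i$ and $b_{i+1}$, we have $\mu(I_i)\le\sigma_{i+1}-\sigma_i\le\sigma_{i+1}$, so $\chi(I_i)\le u_{i+1}$ for $i<m$. Also $\chi(I_m)\le\chi(I)$ and every $u_i\le\chi(I)$. Then $D_j(I)=\sum_i D_{j-1}(I_i)$. By the induction hypothesis and monotonicity of the bound in $\chi$, this is at most $(4/\theta)^{j-1}$ times $\sum_{i\ge2}G(u_i)+G(\chi(I))$, where $G(x):=e^{\alpha x}F_{j-1}(x)$.

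The sum over the spaced points $u_2<u_3<\cdots\le\chi(I)$ I would compare with an integral. Because the gaps exceed $\theta/4$ and $G$ is increasing,
\[
 \sum_{i\ge2}G(u_i)\le\frac4\theta\int_0^{\chi(I)}G(x)\,dx.
\]
The integral is at most $e^{\alpha\chi}F_j(\chi)$, since $\int_0^{y}(x+c)^{j-1}/(j-1)!\,dx\le F_j(y)$. This is where the $1/j!$ appears.

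The main obstacle is the leftover boundary term $G(\chi(I))$, coming from the last child and the first gap. It must be absorbed without losing a factor at each of the $\nu$ levels. I would absorb it using the offset $c$: $F_{j-1}(\chi)\le (j/(\chi+c))F_j(\chi)$, and with $c\gtrsim\nu/\alpha=\Lambda/2$ this is a $(\theta/4)$-fraction-type error relative to the main term only if $\theta$ is small. Otherwise the exponential weight $e^{\alpha x}$ gives the integral an extra factor, since $\int G\ge(1/\alpha)(\ldots)$, and that factor pays for the boundary term. Choosing $\alpha=2\nu/\Lambda$ makes the total multiplicative loss over $\nu$ levels at most $e^{\alpha\Lambda}$-type, i.e.\ bounded by a constant that the factor $2$ in \eqref{eq:factorial} and the $6$ versus $4$ slack cover. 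Concretely, I would try the weight $E_{\nu-j}$-style truncation to make the loss per level exactly cancel and obtain the final constant $2$.
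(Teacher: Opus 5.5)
\textbf{There is a genuine gap: the weight in your invariant.} Your setup matches the paper. The points $u_i=\ln(\theta\sigma_i/(2\Lambda))$, the bounds $\chi(I_i)\le u_{i+1}$ for $i<m$ and $\chi(I_m)\le\chi(I)$, and comparing the spaced sum with an integral are exactly its ingredients. The failure is your primary invariant $D_j(I)\le(4/\theta)^je^{\alpha\chi(I)}F_j(\chi(I))$ with $\alpha=2\nu/\Lambda$, which cannot give \eqref{eq:factorial}. The only available bound is $\chi(I)\le\Lambda$, so the factor $e^{\alpha\chi(I)}$ can be as large as $e^{\alpha\Lambda}=e^{2\nu}$. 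The slack in the target is only $2(6/4)^j$, and $(\Lambda+c)^j\ge\Lambda^j$ is a further loss, not slack. Plugging in $\chi(I)\le\Lambda$ gives $(4/\theta)^je^{2\nu}(\Lambda+c)^j/j!$, which exceeds $2(6\Lambda/\theta)^j/j!$ by a factor of at least $\frac12(2e^2/3)^{\nu}>1$. So your claim that an ``$e^{\alpha\Lambda}$-type'' loss is ``bounded by a constant'' is exactly where the argument breaks: it is exponential in the depth. A smaller point: for nondecreasing $G$, spacing $>\theta/4$ gives $\frac\theta4G(u_i)\le\int_{u_i}^{u_i+\theta/4}G$. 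The integral must therefore run to $\chi(I)+\theta/4$, and that shift also has to be paid for at every level.

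\textbf{What the working proof needs.} Your fallback is where the proof actually lives, and it is not pinned down. The paper carries $E_j(\alpha\chi(I))$ at depth $j$ (neither $E_\nu$ nor $E_{\nu-j}$) and proves $D_j(I)\le(6/(\alpha\theta))^jE_j(\alpha\chi(I))$. Since $\int_0^vE_k(\alpha u)\,du=(E_{k+1}(\alpha v)-1)/\alpha$ and $E_{k+1}(u+v)\le e^{v}E_{k+1}(u)$, the sibling sum is at most $(1+4e^{\alpha\theta/4}/(\alpha\theta))E_{k+1}(\alpha\chi(I))$. Because $\alpha\theta<4/(h\Lambda)<1/5$, both the boundary term and the shift fit inside $6/(\alpha\theta)$. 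Nothing cancels exactly; the loss at each level is paid by the step from $4$ to $6$. Then $\alpha\Lambda=2\nu\ge2j$ makes the terms of $E_j(\alpha\Lambda)$ at least halve from the top down. Hence $E_j(\alpha\chi(I))\le2(\alpha\Lambda)^j/j!$ and the powers of $\alpha$ cancel; this is where both the choice $\alpha=2\nu/\Lambda$ and the constant $2$ come from.

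\textbf{A repair of your own idea.} The offset idea alone can be rescued by dropping $e^{\alpha x}$ and taking a smaller offset. With $c=\Lambda/5$, the same induction gives a factor $(1+\theta k/(4c))^2$ at level $k$. The product of these factors is at most $e^{(j+1)/8}$, using $\theta j<2/h$ and $h\Lambda>20$. Against this you gain $(4.8/6)^j$, so $D_j(I)\le e^{1/8}(6\Lambda/\theta)^j/j!$. With $c=\Lambda/2$, as you suggest, the offset alone consumes the whole factor $(6/4)^j$. The accumulated losses $1+O(\theta k/\Lambda)$ per level then have nowhere to go, and they are not bounded by $2$ when $\nu$ is large compared with $h\Lambda$.
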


\begin{proof}
If $I$ has no children, \eqref{eq:packing} below is trivial. Otherwise let
$I_1,\ldots,I_m$ be the children of $I$ and $\sigma_1<\cdots<\sigma_m$
their starting coefficients, so that $\mu(I_i)=\sigma_{i+1}-\sigma_i\le\sigma_{i+1}$
for $i<m$ and $\mu(I_m)\le\mu(I)$. The points
$u_i:=\ln(\theta\sigma_i/(2\Lambda))$, $2\le i\le m$, are nonnegative because
$\sigma_1\ge2\Lambda/\theta$, are at most $\chi(I)$, and are more than
$\theta/4$ apart by \eqref{eq:separation}. Hence, for every nondecreasing
$f\ge0$, comparing the disjoint intervals $[u_i,u_i+\theta/4]$ with an
integral gives
\begin{equation}
 \sum_{i=1}^{m}f(\chi(I_i))
 \le f(\chi(I))+\frac4\theta\int_0^{\chi(I)+\theta/4}f(u)\,du,
 \label{eq:riemann}
\end{equation}
where the first term accounts for the last child. Take $f(u)=E_k(\alpha u)$.
Since $\int_0^{v}E_k(\alpha u)\,du=(E_{k+1}(\alpha v)-1)/\alpha$ and
$E_{k+1}(u+v)\le e^{v}E_{k+1}(u)$ for $u,v\ge0$, the right side of
\eqref{eq:riemann} is at most
$(1+4e^{\alpha\theta/4}/(\alpha\theta))E_{k+1}(\alpha\chi(I))$, and
$\alpha\theta<4/(h\Lambda)<1/5$ gives
\begin{equation}
 \sum_{i=1}^{m}E_k\big(\alpha\chi(I_i)\big)
 \le\frac{6}{\alpha\theta}E_{k+1}\big(\alpha\chi(I)\big)
 \qquad(k\ge0).
 \label{eq:packing}
\end{equation}
By induction on $j$, simultaneously for all intervals, \eqref{eq:packing}
gives $D_j(I)\le(6/(\alpha\theta))^{j}E_j(\alpha\chi(I))$: the case $j=0$ is
trivial, and $D_{j+1}(I)=\sum_iD_j(I_i)$. Finally, $\chi(I)\le\Lambda$ and
$\alpha\Lambda=2\nu\ge2j$, so the terms of $E_j(\alpha\Lambda)$, read from
the top, decrease by factors of at least two, whence
$E_j(\alpha\chi(I))\le2(\alpha\Lambda)^{j}/j!$. Substituting proves
\eqref{eq:factorial}.
\end{proof}

It remains to bound the length of a root interval $J$. Every step of $J$
lies either in the prefix of a nonterminal interval or in a terminal
interval. Prefixes have at most $R$ steps and occur at depths
$0,\ldots,\nu-1$, and terminal intervals have at most
$\lceil4/\theta\rceil\le6/\theta$ steps and occur at depth $\nu$. Let
$\nu\ge1$ and $z:=6\Lambda/\theta$. Since $z/\nu>18$, the sum
$\sum_{j<\nu}z^{j}/j!$ is dominated by its last term, so
Lemma~\ref{lem:factorial} gives $\sum_{j<\nu}D_j(J)<3z^{\nu-1}/(\nu-1)!$ and
$D_\nu(J)\le2z^{\nu}/\nu!$. With $R\le z$ this yields
\begin{equation}
 |J|\le R\sum_{j=0}^{\nu-1}D_j(J)+\bgl\lceil\frac4\theta\bgr\rceil D_\nu(J)
 \le\bgl(3\nu+\frac{12}{\theta}\bgr)\frac{(6\Lambda/\theta)^{\nu}}{\nu!},
 \label{eq:per-root}
\end{equation}
and the same bound holds when $\nu=0$, where $J$ itself is terminal. Each
level of prefixes costs one factor of $\Lambda$ and the terminal intervals
cost none, so the degree in $\Lambda$ is $\nu$ and not $\nu+1$.

\subsection{Completing the proof}
\label{sec:completion}

The stepsize threshold $\eta_0$ in \eqref{eq:def-onset} is chosen so that
every $\eta\ge\eta_0$ satisfies
\begin{equation}
 \eta\ge4,\qquad \Lambda>10,\qquad 4\Lambda\le\eta\theta,\qquad
 C\Lambda^{p}<\widehat T
 \label{eq:onset-conditions}
\end{equation}
(Appendix~\ref{app:completion}). The first three conditions were used above,
and the last one is used at the end of the proof. We now add up the steps
before $U$. Besides step zero and the fewer than $4n\gamma^{-2}$ steps of
$\mathcal T_-$, there are the root intervals of Lemma~\ref{lem:refine}(i),
whose number is below $5\gamma^{-2}(n+h\ln(4\gamma^{-2}))$ by
\eqref{eq:budget}. Applying \eqref{eq:per-root} to each of them and using
$3\nu+12h\gamma^{-2}\le15h\gamma^{-2}$ gives
\begin{equation}
 U\le C\,\frac{(6h\gamma^{-2}\Lambda)^{\nu}}{\nu!},
 \qquad
 C=80\gamma^{-4}h\big(n+h\ln(4\gamma^{-2})\big).
 \label{eq:structural}
\end{equation}
The factorial in \eqref{eq:structural} is the saving of
Section~\ref{sec:packing}. It remains to absorb the fixed coefficient
$(6h\gamma^{-2})^{\nu}/\nu!$ into the exponent, which is how $p$ in
\eqref{eq:def-exponent} arises. If $\nu=0$, then $p=0$ and
\eqref{eq:structural} reads $U\le C$. If $\nu\ge1$, let $b:=\max\{2,\nu\}$.
From $\nu!\ge(\nu/e)^{\nu}$, $e<3$, and the definition of $p$,
\begin{equation}
 \frac{(6h\gamma^{-2})^{\nu}}{\nu!}
 \le\bgl(\frac{18h\gamma^{-2}}{\nu}\bgr)^{\nu}
 \le b^{\,p-\nu}.
 \label{eq:absorb}
\end{equation}
If $\Lambda\ge b$, then \eqref{eq:structural} and \eqref{eq:absorb} give
$U\le C\Lambda^{\nu}b^{p-\nu}\le C\Lambda^{p}$. If instead $10<\Lambda<b$,
then $b=\nu$ and $\Lambda<\nu$; here \eqref{eq:budget} and $\mu_t>1/2$ on
$\mathcal T_+$ give the crude count
$U=|\mathcal T_-|+|\mathcal T_+|<C+4\gamma^{-2}e^{\Lambda}\le C(1+e^{\Lambda})$,
and $e^{\Lambda}\le\Lambda^{\Lambda}\le\Lambda^{\nu}$ together with
$p\ge\nu+1$ gives $U\le2C\Lambda^{\nu}<C\Lambda^{p}$. The threshold $\eta_0$ therefore
never needs $\Lambda\ge\nu$, which would be an exponential stepsize
requirement. In every case $U\le C\Lambda^{p}$, and by
\eqref{eq:onset-conditions} the right side is smaller than
$\widehat T$. Since $U=\min\{\tau,\widehat T\}$, this forces
$U=\tau$, and $\tau\le C(\ln\eta)^{p}$ is \eqref{eq:main-bound}.
This completes the proof of Theorem~\ref{thm:main}(i) for
$0<\gamma<1/\sqrt2$.

\paragraph{Remark.}
Since $U=\tau$, \eqref{eq:structural} itself gives the bound
$\tau\le C(6h\gamma^{-2}\ln\eta)^{\nu}/\nu!$ on the phase transition time,
which is at least as sharp as \eqref{eq:main-bound} whenever
$\ln\eta\ge\max\{2,\nu\}$ (Proposition~\ref{prop:b4-factorial}). For
$\nu\ge1$ it has the smaller degree $\nu$, but its coefficient
$(6h\gamma^{-2})^{\nu}/\nu!$ can be superpolynomial in $1/\gamma$. The
exponent $p$ absorbs this coefficient and keeps the prefactor $C$ polynomial
in $n$ and $1/\gamma$.

\section{Conclusion}
\label{sec:conclusion}

We consider gradient descent (GD) with a constant stepsize for logistic
regression on linearly separable data. We show that GD with a large stepsize
$\eta=1/\epsilon$ achieves an $O(\ln^{p}(1/\epsilon))$ step complexity, where
$p$ depends only on the margin and the rank of the data. This improves over
the prior rate in arbitrary dimension for fixed problem dependent parameters.
We leave it to future work to tighten the constants in our bounds and to
establish new lower bounds.


\subsection*{AI use statement}




In this work, we used generative AI tools to assist with drafting and polishing the writing, developing parts of the proof strategy, and reviewing the proofs. The finite-horizon argument and the tightened interval-counting argument were developed through iterative collaboration with GPT-5.6 Sol. GPT-5.6 Sol was also used to simplify the proof and improve its organization and presentation.
We have reviewed all AI-assisted work and take responsibility for the final content of this work, including text, claims or artifacts produced with the aid of generative AI.

\bibliography{iclr2027_conference}
\bibliographystyle{iclr2027_conference}

\appendix

\section{Proofs for Section~\ref{sec:main}}
\label{app:constants}

This appendix states the general form of Theorem~\ref{thm:main}(i), in which
an integer parameter $h\ge2$ sets the tolerance $\theta_h=\gamma^{2}/h$ of the
interval construction (Theorem~\ref{thm:general}, proved in
Appendix~\ref{app:proofs}), proves the parameter dependence stated in
Section~\ref{sec:main} (Proposition~\ref{prop:parameters}) and the
large-margin case (Proposition~\ref{prop:large-margin-app}), recalls the
stable phase, and proves Corollary~\ref{cor:accuracy}. Throughout,
Assumption~\ref{ass:data} holds, GD is run from $\mathbf{w}_0=\mathbf{0}$, and $h\ge2$ is an integer; the
restriction $\gamma<1/\sqrt2$ is stated
wherever it is needed.

\subsection{Explicit constants and the general theorem}
\label{app:general}

Set
\begin{equation}
 \theta_h:=\frac{\gamma^2}{h},
 \qquad
 \nu_h:=\max\bigg\{0,\ \min\bigg\{r,\ \bigg\lfloor\gamma^{-2}+\frac{\gamma^{-2}-1}{h-1}\bigg\rfloor\bigg\}-2\bigg\}.
 \label{eq:depth}
\end{equation}
Since $h\ge2$, the difference $\gamma^{2}-\theta_h=\gamma^{2}(1-1/h)$ is
positive, and multiplying numerator and denominator by $h/\gamma^{2}$ and
then writing $h/(h-1)=1+1/(h-1)$ gives, for every integer $h\ge2$,
\begin{equation}
 \frac{1-\theta_h}{\gamma^{2}-\theta_h}
 =\frac{h-\gamma^{2}}{\gamma^{2}(h-1)}
 =\gamma^{-2}+\frac{\gamma^{-2}-1}{h-1}.
 \label{eq:floor-identity}
\end{equation}
The quantity
$\nu_h$ bounds the depth of the recursive interval
construction. Increasing $h$ lowers $\theta_h$ and does not increase the
depth bound $\nu_h$, but it increases the bound on the number of children
created at each refinement. Since
$\gamma\le1$ and $h\ge2$, we have
$0\le(\gamma^{-2}-1)/(h-1)\le\gamma^{-2}-1$, so the quantity inside the
floor in \eqref{eq:depth} is at most $2\gamma^{-2}-1$, and therefore
\begin{equation}
 0\le\nu_h<2\gamma^{-2}.
 \label{eq:depth-range}
\end{equation}
Moreover, $\nu_h=0$ whenever $r\le2$. For $\nu_h\ge1$ define the exponent
\begin{equation}
 p_h:=\nu_h+\bigg\lceil\frac{\nu_h\ln(18h\gamma^{-2}/\nu_h)}{\ln(\max\{2,\nu_h\})}\bigg\rceil,
 \label{eq:exponent}
\end{equation}
and set $p_h:=0$ when $\nu_h=0$. When $\nu_h\ge1$, \eqref{eq:depth-range}
gives $18h\gamma^{-2}/\nu_h>9h\ge18$, and
$\ln(\max\{2,\nu_h\})\ge\ln2>0$, so the quotient in \eqref{eq:exponent} is
positive and $p_h$ is an integer with $p_h\ge\nu_h+1$. Finally, put
\begin{align}
 C_h&:=80\gamma^{-4}h\big(n+h\ln(4\gamma^{-2})\big),
 \label{eq:prefactor}\\
 K_h&:=\exp(4)+h\gamma^{-2}+p_h+\ln(2\gamma^{-2}C_h),
 \qquad
 \eta_h^0:=\big\lceil128\,K_h(1+\ln K_h)\big\rceil.
 \label{eq:onset}
\end{align}

\begin{theorem}[General form of Theorem~\ref{thm:main}(i)]
\label{thm:general}
Suppose Assumption~\ref{ass:data} holds with $0<\gamma<1/\sqrt2$. For every
integer $h\ge2$ and every $\eta\ge\eta_h^0$,
\begin{equation}
 \tau\le C_h\,(\ln\eta)^{p_h}.
 \label{eq:general-bound}
\end{equation}
\end{theorem}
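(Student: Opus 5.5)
The plan is to rerun the argument of Section~\ref{sec:analysis} with a general integer $h\ge2$ in place of $h=\lceil\ln(16\gamma^{-2})\rceil$, so that $\theta_h=\gamma^2/h$, and to check where the specific value of $h$ was used. First, check that $\eta\ge\eta_h^0$ implies the analogue of \eqref{eq:onset-conditions}: namely $\eta\ge4$, $\Lambda>10$, $4\Lambda\le\eta\theta_h$, and $C_h\Lambda^{p_h}<\widehat T$. Since $K_h\ge e^4+h\gamma^{-2}$, the bound $\eta\ge128K_h(1+\ln K_h)$ gives $\eta/\ln\eta\ge 4h\gamma^{-2}$ by the standard inversion $x\ge 2a\ln a\Rightarrow x/\ln x\ge a$. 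For the last condition, take logarithms: we need $\eta/4\ge\ln\eta+\ln(\gamma^{-2}C_h)+p_h\ln\ln\eta+1$. Here $\ln\ln\eta\le\ln\eta$, and $K_h$ contains $p_h$ and $\ln(2\gamma^{-2}C_h)$, so the same inversion handles it.

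Next, Lemma~\ref{lem:basic} is independent of $h$. Lemma~\ref{lem:refine} uses only $4\Lambda\le\eta\theta$ and \eqref{eq:budget}, with $R=1+\lceil4\Lambda/\theta_h\rceil$. The capacity Lemma~\ref{lem:capacity} needs $\theta<\gamma^2$, which holds for $h\ge2$, and identity \eqref{eq:floor-identity} yields the depth $\nu_h+2$. Lemma~\ref{lem:terminal} then follows verbatim. In Lemma~\ref{lem:factorial} I must recheck $\alpha\theta_h=2\nu_h\gamma^2/(h\Lambda)<1/5$. Using $\nu_h<2\gamma^{-2}$ from \eqref{eq:depth-range}, we get $\alpha\theta_h<4/(h\Lambda)\le 2/\Lambda<1/5$ since $\Lambda>10$. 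We also need $z/\nu_h=6\Lambda/(\theta_h\nu_h)>3h\Lambda>18$. The per-root bound \eqref{eq:per-root} then holds with $\theta_h$.

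Summing gives the count. The root count is at most $|\mathcal T_-|+1+\lceil 4h\gamma^{-2}\ln(4\gamma^{-2})\rceil<5\gamma^{-2}(n+h\ln(4\gamma^{-2}))$. Together with $3\nu_h+12h\gamma^{-2}\le15h\gamma^{-2}$, this yields $U\le C_h(6h\gamma^{-2}\Lambda)^{\nu_h}/\nu_h!$, where the extra $|\mathcal T_-|+1$ steps are absorbed into the slack in $80$. The absorption \eqref{eq:absorb} with $b=\max\{2,\nu_h\}$ follows from $\nu!\ge(\nu/e)^\nu$ and the ceiling in $p_h$. I then split into the cases $\Lambda\ge b$ and $10<\Lambda<b$ exactly as in Section~\ref{sec:completion}; the latter uses the crude count $U<C_h(1+e^\Lambda)$ and $e^\Lambda\le\Lambda^{\nu_h}$. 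This gives $U\le C_h\Lambda^{p_h}<\widehat T$, hence $U=\tau$.

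The main obstacle is the bookkeeping in the onset verification. We must show that the single threshold $\eta_h^0$ simultaneously dominates $4h\gamma^{-2}\ln\eta$ and the exponent-$p_h$ horizon condition. I would first prove the elementary lemma that $x\ge 2a(1+\ln a)$ with $a\ge e$ implies $x\ge a\ln x$, and then apply it with $a$ a constant multiple of $K_h$.
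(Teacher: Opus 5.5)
Your proposal is correct and follows the paper's proof essentially step for step: you rerun the trajectory, refinement, capacity, terminal and counting lemmas with $\theta_h=\gamma^2/h$, checking $\alpha\theta_h<4/(h\Lambda)<1/5$ and $z/\nu_h>3h\Lambda>18$. You then sum to the structural count $U\le C_h(6h\gamma^{-2}\Lambda)^{\nu_h}/\nu_h!$, where absorbing the $1+|\mathcal T_-|$ steps into the constant $80$ tacitly uses $(6h\gamma^{-2}\Lambda)^{\nu_h}/\nu_h!\ge1$, absorb the coefficient via the two cases $\Lambda\ge\max\{2,\nu_h\}$ and $10<\Lambda<\nu_h$, and conclude from $U<\widehat T$. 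The only difference is cosmetic: you verify $C_h\Lambda^{p_h}<\widehat T$ through an inversion lemma ($x\ge 2a(1+\ln a)\Rightarrow x\ge a\ln x$), whereas the paper shows $\Phi(\eta_h^0)>0$ and $\Phi'>0$ for $\Phi(\eta)=\eta/4-\ln(2\gamma^{-2}\eta C_h(\ln\eta)^{p_h})$, and both work with the given $\eta_h^0$.
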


Theorem~\ref{thm:general} is proved in Appendix~\ref{app:proofs}. The
low-margin case of Theorem~\ref{thm:main}(i) is Theorem~\ref{thm:general}
instantiated at
\begin{equation}
 h_*:=\lceil\ln(16\gamma^{-2})\rceil,
 \qquad
 C:=C_{h_*},
 \qquad
 p:=p_{h_*},
 \qquad
 \eta_0:=\eta_{h_*}^0.
 \label{eq:selected}
\end{equation}
For $\gamma<1/\sqrt2$ we have $\gamma^{-2}>2$, hence
$\ln(16\gamma^{-2})>\ln32>3$ and $h_*\ge4$. The instantiation is therefore
admissible, the definitions \eqref{eq:def-depth}--\eqref{eq:def-onset}
of the main text are \eqref{eq:depth}--\eqref{eq:onset} at $h=h_*$, and
\eqref{eq:general-bound} at $h=h_*$ is \eqref{eq:main-bound}.

\subsection{Parameter dependence}
\label{app:parameters}

The next proposition gives the orders of the selected quantities in $n$, $r$,
and $\gamma$, with absolute constants in $O(\cdot)$; the bounds \eqref{eq:param-C}--\eqref{eq:param-onset} hold on the whole
range $0<\gamma<1/\sqrt2$, and only the statements after
\eqref{eq:rank-condition} are asymptotic.

\begin{proposition}[Parameter dependence]
\label{prop:parameters}
Let $0<\gamma<1/\sqrt2$, let $h_*$, $C$, $p$, and $\eta_0$ be as in
\eqref{eq:selected}, and write $\nu:=\nu_{h_*}$. Then
\begin{align}
 C&=O\big(n\gamma^{-4}\ln(\gamma^{-2})+\gamma^{-4}\ln^{3}(\gamma^{-2})\big),
 \label{eq:param-C}\\
 p&\le\nu\bigg(2+\frac{\ln(18h_*\gamma^{-2})}{\ln2}\bigg)
 =O\big(\min\{r,\gamma^{-2}\}\ln(\gamma^{-2})\big),
 \label{eq:param-p}\\
 \eta_0&=O(A\ln A),
 \qquad\text{where}\ \ A:=\gamma^{-2}\ln(\gamma^{-2})+\ln(en).
 \label{eq:param-onset}
\end{align}
In particular $p=O(\gamma^{-2}\ln(\gamma^{-2}))$, and $p=0$ whenever
$r\le2$. If moreover the rank satisfies
\begin{equation}
 r\ge\bigg\lfloor\frac{h_*-\gamma^{2}}{\gamma^{2}(h_*-1)}\bigg\rfloor
 =\bigg\lfloor\gamma^{-2}+\frac{\gamma^{-2}-1}{h_*-1}\bigg\rfloor,
 \label{eq:rank-condition}
\end{equation}
then, as $\gamma\to0$,
\begin{equation}
 p=\gamma^{-2}+O\bigg(\gamma^{-2}\frac{\ln\ln(\gamma^{-2})}{\ln(\gamma^{-2})}\bigg)
 =(1+o(1))\gamma^{-2},
 \label{eq:p-asymptotic}
\end{equation}
and the threshold on the right of \eqref{eq:rank-condition} is itself
$(1+o(1))\gamma^{-2}$.
\end{proposition}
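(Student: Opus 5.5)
The plan is to substitute the explicit formulas \eqref{eq:depth}--\eqref{eq:onset} at $h=h_*=\lceil\ln(16\gamma^{-2})\rceil$ and bound each quantity with elementary estimates. The basic fact used throughout is that for $\gamma<1/\sqrt2$ we have $\gamma^{-2}>2$, so
\[
h_*\le\ln(16\gamma^{-2})+1=O(\ln(\gamma^{-2})),
\]
because $\ln(\gamma^{-2})>\ln2$ is bounded below by a constant. Together with $\ln(4\gamma^{-2})=O(\ln(\gamma^{-2}))$, this immediately gives \eqref{eq:param-C}:
\[
C=80\gamma^{-4}h_*n+80\gamma^{-4}h_*^2\ln(4\gamma^{-2}),
\]
where the first term is $O(n\gamma^{-4}\ln(\gamma^{-2}))$ and the second is $O(\gamma^{-4}\ln^3(\gamma^{-2}))$.

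For \eqref{eq:param-p}, the case $\nu=0$ is trivial. For $\nu\ge1$ I will remove the ceiling and use $\ln(\max\{2,\nu\})\ge\ln2$ together with $\ln(18h_*\gamma^{-2}/\nu)\le\ln(18h_*\gamma^{-2})$. This gives
\[
p\le\nu+1+\frac{\nu\ln(18h_*\gamma^{-2})}{\ln2}\le\nu\Big(2+\frac{\ln(18h_*\gamma^{-2})}{\ln2}\Big).
\]
Next I bound $\nu$. The definition gives $\nu\le r$, and \eqref{eq:depth-range} gives $\nu<2\gamma^{-2}$, so $\nu\le\min\{r,2\gamma^{-2}\}$. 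Also $\ln(18h_*\gamma^{-2})=O(\ln(\gamma^{-2}))$, since $\ln h_*=O(\ln\ln(\gamma^{-2}))$ is absorbed. This yields the $O(\min\{r,\gamma^{-2}\}\ln(\gamma^{-2}))$ bound and the statement that $p=0$ when $r\le2$.

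For \eqref{eq:param-onset} I bound $K$ term by term:
\begin{itemize}
\item $e^4=O(1)$;
\item $h_*\gamma^{-2}=O(\gamma^{-2}\ln(\gamma^{-2}))$;
\item $p=O(\gamma^{-2}\ln(\gamma^{-2}))$ by the previous step;
\item $\ln(2\gamma^{-2}C)=O(\ln n+\ln(\gamma^{-2}))$, since $C$ is polynomial in $n,\gamma^{-2}$.
\end{itemize}
Hence $K=O(A)$ with $A\ge\ln 2\cdot\gamma^{-2}\ge 1$. Since $K(1+\ln K)$ is increasing and the ceiling adds at most one, $\eta_0=O(A\ln A)$. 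For the $O(A\ln A)$ form to absorb constants I need $A$ bounded below by a constant exceeding one; this holds because $A\ge2\ln2+1$.

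The asymptotic part is the only delicate step. Under \eqref{eq:rank-condition}, the minimum in $\nu$ equals the floor, so
\[
\nu=\Big\lfloor\gamma^{-2}+\frac{\gamma^{-2}-1}{h_*-1}\Big\rfloor-2=\gamma^{-2}\big(1+O(1/\ln(\gamma^{-2}))\big),
\]
because $h_*-1\asymp\ln(\gamma^{-2})$. This also proves that the threshold is $(1+o(1))\gamma^{-2}$. Then $\nu\to\infty$, so $\max\{2,\nu\}=\nu$, and $18h_*\gamma^{-2}/\nu=\Theta(h_*)=\Theta(\ln(\gamma^{-2}))$. The quotient is therefore
\[
\frac{\nu\,O(\ln\ln(\gamma^{-2}))}{\ln\nu}=O\Big(\gamma^{-2}\frac{\ln\ln(\gamma^{-2})}{\ln(\gamma^{-2})}\Big),
\]
using $\ln\nu=(1+o(1))\ln(\gamma^{-2})$. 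Adding $\nu$, and noting that the $O(\gamma^{-2}/\ln(\gamma^{-2}))$ error in $\nu$ is dominated by this term, gives \eqref{eq:p-asymptotic}. The main care needed here is tracking that the lower-order errors really are dominated, and checking that $\ln(18h_*\gamma^{-2}/\nu)$ stays positive and of order $\ln\ln$.
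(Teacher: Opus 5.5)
Your proposal is correct and follows essentially the same route as the paper's proof. You bound $h_*=O(\ln(\gamma^{-2}))$, drop the ceiling and use $\ln(\max\{2,\nu\})\ge\ln2$ for $p$, bound each term of $K$ by $O(A)$ with $A$ bounded away from $1$, and in the asymptotic regime use $\nu=\gamma^{-2}(1+O(1/\ln(\gamma^{-2})))$ together with $\ln(18h_*\gamma^{-2}/\nu)=O(\ln\ln(\gamma^{-2}))$. One small slip: $\ln h_*=O(\ln\ln(\gamma^{-2}))$ holds only asymptotically (near $\gamma^{-2}=e$ the right side vanishes while $h_*\ge4$), but the non-asymptotic bound \eqref{eq:param-p} only needs $\ln h_*=O(\ln(\gamma^{-2}))$, which follows from $h_*=O(\ln(\gamma^{-2}))$ and $\ln(\gamma^{-2})>\ln2$.
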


\begin{proof}
Put $\omega:=\ln(\gamma^{-2})$. Since $\gamma^{-2}>2$, we have
$\ln2<\omega<\gamma^{-2}$; in particular every absolute constant is
$O(\omega)$.

\emph{The parameter $h_*$.} Since $\ln(16\gamma^{-2})=\omega+4\ln2$ and a
ceiling adds less than one,
\begin{equation}
 \omega<h_*<\omega+4\ln2+1,
 \label{eq:param-h}
\end{equation}
so $h_*=O(\omega)$.

\emph{The prefactor $C$.} Since $\ln(4\gamma^{-2})=\omega+2\ln2=O(\omega)$ and
$h_*=O(\omega)$, \eqref{eq:prefactor} at $h=h_*$ gives
$C=O(\gamma^{-4}\omega(n+\omega^{2}))$, which is \eqref{eq:param-C}.

\emph{The exponent $p$.} If $r\le2$, then $\nu=0$, as observed after
\eqref{eq:depth-range}, and $p=0$ by definition. More generally,
\eqref{eq:param-p} is trivial when $\nu=0$, so let $\nu\ge1$. The ceiling in
\eqref{eq:exponent} adds less than one, the logarithm in
\eqref{eq:exponent} is positive with $\ln(\max\{2,\nu\})\ge\ln2$ below it,
and $18h_*\gamma^{-2}/\nu\le18h_*\gamma^{-2}$, so that, using $1\le\nu$ in
the last step,
\[
 p<\nu+1+\frac{\nu\ln(18h_*\gamma^{-2})}{\ln2}
 \le\nu\bigg(2+\frac{\ln(18h_*\gamma^{-2})}{\ln2}\bigg).
\]
Here $\ln(18h_*\gamma^{-2})=\ln18+\ln h_*+\omega=O(\omega)$ by
\eqref{eq:param-h}, and
$\nu\le\min\{r,2\gamma^{-2}\}\le2\min\{r,\gamma^{-2}\}$, because
$\nu\le r-2$ by \eqref{eq:depth} when $\nu\ge1$ and $\nu<2\gamma^{-2}$ by
\eqref{eq:depth-range}. This proves \eqref{eq:param-p}, and
$p=O(\gamma^{-2}\omega)$ follows.

\emph{The stepsize threshold $\eta_0$.} Each of the four terms of $K_{h_*}$ in
\eqref{eq:onset} is $O(A)$: $\exp(4)=O(1)$,
$h_*\gamma^{-2}=O(\gamma^{-2}\omega)$ by \eqref{eq:param-h},
$p=O(\gamma^{-2}\omega)$ was just proved, and
$\ln(2\gamma^{-2}C)=O(\omega+\ln(en))$ by \eqref{eq:param-C}, since
$\ln\omega<\omega$ and
$\ln(n+\omega^{2})\le\ln n+\ln(1+\omega^{2})\le\ln n+2\omega$; here
$\omega<\gamma^{-2}\omega\le A$ and $A\ge\gamma^{-2}\omega+1>2$, because
$\gamma^{-2}\omega>2\ln2>1$ and $\ln(en)\ge1$. Hence $K_{h_*}\le cA$ for an
absolute constant $c\ge1$. On the other hand $K_{h_*}\ge\exp(4)>1$, because the other
three terms of \eqref{eq:onset} are nonnegative, the last one since
$C\ge80\gamma^{-4}h_*n>1$. The function $s\mapsto s(1+\ln s)$ is increasing on
$[1,\infty)$, and the ceiling in \eqref{eq:onset} adds less than one, so
\[
 \eta_0<128\,K_{h_*}(1+\ln K_{h_*})+1
 \le128\,cA\,(1+\ln c+\ln A)+1
 =O(A\ln A),
\]
because $A>2$ gives $\ln A>\ln2$, so that $1+\ln c=O(\ln A)$ and
$1=O(A\ln A)$. This proves \eqref{eq:param-onset}.

\emph{The asymptotic exponent.} All $O(\cdot)$ and $o(\cdot)$ statements
below refer to $\gamma\to0$, so that $\omega\to\infty$. Let
$x:=\gamma^{-2}+(\gamma^{-2}-1)/(h_*-1)$; the two expressions in
\eqref{eq:rank-condition} agree by \eqref{eq:floor-identity} at $h=h_*$.
Since $0<\gamma^{-2}-1<\gamma^{-2}$ and $h_*>\omega$ by \eqref{eq:param-h},
\begin{equation}
 x=\gamma^{-2}\big(1+O(1/\omega)\big),
 \label{eq:param-x}
\end{equation}
so the threshold $\lfloor x\rfloor$ in \eqref{eq:rank-condition} is
$(1+o(1))\gamma^{-2}$. Assume now \eqref{eq:rank-condition}. Since
$x>\gamma^{-2}>2$, the minimum in \eqref{eq:depth} equals
$\lfloor x\rfloor\ge2$, so $\nu=\lfloor x\rfloor-2$, and \eqref{eq:param-x}
gives
\begin{equation}
 \nu=\gamma^{-2}\big(1+O(1/\omega)\big).
 \label{eq:param-nu}
\end{equation}
In particular $\nu\to\infty$, so eventually $\max\{2,\nu\}=\nu$ and
$\ln\nu=(1+o(1))\omega$, while $18h_*\gamma^{-2}/\nu=O(h_*)=O(\omega)$ by
\eqref{eq:param-h}, so $\ln(18h_*\gamma^{-2}/\nu)=O(\ln\omega)$. Since the
ceiling in \eqref{eq:exponent} adds less than one,
\[
 1\le p-\nu
 <1+\frac{\nu\ln(18h_*\gamma^{-2}/\nu)}{\ln\nu}
 =O\bigg(\gamma^{-2}\frac{\ln\omega}{\omega}\bigg).
\]
Together with \eqref{eq:param-nu}, and since $1/\omega=O(\ln\omega/\omega)$,
this gives $p-\gamma^{-2}=(p-\nu)+(\nu-\gamma^{-2})=O(\gamma^{-2}\ln\omega/\omega)$,
which is the first equality in \eqref{eq:p-asymptotic}. The second follows
because $\ln\omega/\omega\to0$.
\end{proof}

Under the rank condition \eqref{eq:rank-condition}, as $\gamma\to0$, the
proof separates two contributions to $p-\gamma^{-2}$. The depth error
$\nu-\gamma^{-2}$ is $O(\gamma^{-2}/\ln(\gamma^{-2}))$ by
\eqref{eq:param-nu}, while the overhead $p-\nu$, which absorbs the fixed
coefficient of the factorial bound into the exponent, carries the
additional factor $\ln\ln(\gamma^{-2})$ and determines the order displayed
in \eqref{eq:p-asymptotic}. The orders stated in
Section~\ref{sec:main} are those of Proposition~\ref{prop:parameters}. The
bounds on $C$ and $p$ are \eqref{eq:param-C} and \eqref{eq:param-p}, and
$A\le(\gamma^{-2}+\ln n)(1+\ln(\gamma^{-2}))$ turns \eqref{eq:param-onset}
into $\eta_0=\widetilde O(\gamma^{-2}+\ln n)$, where $\widetilde O$ hides
factors polylogarithmic in $\gamma^{-2}+\ln n$.

\subsection{The large-margin case}
\label{app:large-margin}

This part proves Proposition~\ref{prop:large-margin-app}, the second case of
Theorem~\ref{thm:main}(i), and shows along the way that for
$\gamma\ge1/\sqrt2$ the loss decreases monotonically from initialization for
every stepsize. Neither the parameter $h$ nor the horizon \eqref{eq:horizon} is
involved. We write $\ell(u):=\ln(1+e^{-u})$, so that
$L(\mathbf{w})=\frac1n\sum_{i=1}^{n}\ell(\ip{\mathbf{z}_i}{\mathbf{w}})$.

\begin{proposition}[Large margin; the second case of Theorem~\ref{thm:main}(i)]
\label{prop:large-margin-app}
Suppose Assumption~\ref{ass:data} holds with $\gamma\ge1/\sqrt2$. If
$\eta\ge64n^2$, then $\tau\le1$.
\end{proposition}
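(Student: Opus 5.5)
The plan is to show directly that $L(\mathbf{w}_1)\le1/\eta$, which gives $\tau\le1$ by the definition \eqref{eq:tau}. At $\mathbf{w}_0=\mathbf{0}$ every logistic weight in \eqref{eq:weights} equals $q_{i,0}=1/2$. Hence the first step is
\[
 \mathbf{w}_1=\frac{\eta}{2n}\sum_{k=1}^{n}\mathbf{z}_k,
\]
and the signed margins after one step are
\[
 m_i:=\ip{\mathbf{z}_i}{\mathbf{w}_1}=\frac{\eta}{2n}\sum_{k=1}^{n}\ip{\mathbf{z}_i}{\mathbf{z}_k}.
\]
The whole argument reduces to a lower bound on each $m_i$.

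The geometric input is that, for $\gamma\ge1/\sqrt2$, the signed examples have pairwise nonnegative inner products. To see this, write $\mathbf{z}_i=a_i\wstar+\mathbf{p}_i$ with $\mathbf{p}_i\perp\wstar$. Assumption~\ref{ass:data} gives $a_i\ge\gamma$ and $\norm{\mathbf{p}_i}^2\le1-a_i^2$. Then
\[
 \ip{\mathbf{z}_i}{\mathbf{z}_k}\ge a_ia_k-\sqrt{(1-a_i^2)(1-a_k^2)}.
\]
This is nonnegative because $a_i^2a_k^2\ge(1-a_i^2)(1-a_k^2)$ is equivalent to $a_i^2+a_k^2\ge1$, which holds since $a_i^2,a_k^2\ge1/2$. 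In particular the cross terms can be dropped, and the diagonal term satisfies $\ip{\mathbf{z}_i}{\mathbf{z}_i}=\norm{\mathbf{z}_i}^2\ge\gamma^2\ge1/2$ by Cauchy--Schwarz with $\wstar$. It follows that $m_i\ge\eta/(4n)$ for every $i$.

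It then remains to bound the loss. Using $\ln(1+e^{-u})\le e^{-u}$,
\[
 L(\mathbf{w}_1)\le\exp\big(-\eta/(4n)\big),
\]
so it suffices to show $\exp(-\eta/(4n))\le1/\eta$, that is, $\eta/(4n)\ge\ln\eta$. Write $\eta=64n^2s$ with $s\ge1$. The condition becomes $16ns\ge\ln64+2\ln n+\ln s$. This holds because $16ns\ge16n+16(s-1)$, while $\ln s\le s-1$ and $2\ln n+\ln64<16n$ for all $n\ge1$. The monotonicity remark in the text, that no margin ever decreases when all pairwise inner products are nonnegative, is not needed for $\tau\le1$; it follows from the same pairwise bound because every update $\mu_t\mathbf{v}_t$ is a nonnegative combination of the $\mathbf{z}_k$.

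The main obstacle is the pairwise-nonnegativity step, specifically getting the sharp threshold $1/\sqrt2$ from the orthogonal decomposition. The remaining work is the elementary check that $\eta\ge64n^2$ forces $\eta/(4n)\ge\ln\eta$.
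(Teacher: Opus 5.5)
Your proposal is correct and follows essentially the same route as the paper: decompose each $\mathbf{z}_i$ along $\wstar$ to get pairwise nonnegative inner products, use the explicit first step $\mathbf{w}_1=\frac{\eta}{2n}\sum_k\mathbf{z}_k$ to get every signed margin at least $\eta/(4n)$, and bound $L(\mathbf{w}_1)\le\exp(-\eta/(4n))$. The differences are cosmetic: you keep the individual $a_i$ in the pairwise bound instead of replacing them by $\gamma$, and you verify $\exp(-\eta/(4n))\le1/\eta$ via the substitution $\eta=64n^2s$, whereas the paper uses $e^{x}\ge x^2/2$ to get $L(\mathbf{w}_1)\le32n^2/\eta^2\le1/(2\eta)$.
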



\begin{proof}[Proof of Proposition~\ref{prop:large-margin-app}]
For each $i$, set $a_i:=\ip{\mathbf{z}_i}{\wstar}\ge\gamma$ and
$\mathbf{u}_i:=\mathbf{z}_i-a_i\wstar$, so that $\ip{\mathbf{u}_i}{\wstar}=0$
and $a_i^2+\norm{\mathbf{u}_i}^2=\norm{\mathbf{z}_i}^2\le1$. Hence
$\norm{\mathbf{u}_i}^2\le1-\gamma^2$ and $\norm{\mathbf{z}_i}^2\ge\gamma^2$,
and the Cauchy--Schwarz inequality gives, for all $i,j$,
\begin{equation}
 \ip{\mathbf{z}_i}{\mathbf{z}_j}
 =a_ia_j+\ip{\mathbf{u}_i}{\mathbf{u}_j}
 \ge\gamma^2-(1-\gamma^2)
 =2\gamma^2-1
 \ge0,
 \label{eq:b5-acute}
\end{equation}
where the last inequality is the hypothesis $\gamma^2\ge1/2$.

By the first identity in \eqref{eq:update}, which holds for every $t\ge0$
and every $\eta>0$, and by \eqref{eq:b5-acute},
\begin{equation}
 \ip{\mathbf{z}_i}{\mathbf{w}_{t+1}}-\ip{\mathbf{z}_i}{\mathbf{w}_t}
 =\frac{\eta}{n}\sum_{j=1}^{n}q_{j,t}\ip{\mathbf{z}_i}{\mathbf{z}_j}
 \ge0
 \qquad(i=1,\ldots,n;\ t\ge0;\ \eta>0).
 \label{eq:b5-monotone}
\end{equation}
Thus every signed margin is nondecreasing in $t$, and since $\ell$ is
decreasing, $L(\mathbf{w}_{t+1})\le L(\mathbf{w}_t)$ for every $t\ge0$ and
every $\eta>0$, which is the assertion made after Theorem~\ref{thm:main}.

It remains to examine the first step. Since $\mathbf{w}_0=\mathbf{0}$, every
weight equals $q_{j,0}=1/2$, so
$\mathbf{w}_1=\frac{\eta}{2n}\sum_{j=1}^{n}\mathbf{z}_j$. Keeping only the
term $j=i$ in the sum below, which is allowed by \eqref{eq:b5-acute},
\[
 \ip{\mathbf{z}_i}{\mathbf{w}_1}
 =\frac{\eta}{2n}\sum_{j=1}^{n}\ip{\mathbf{z}_i}{\mathbf{z}_j}
 \ge\frac{\eta}{2n}\norm{\mathbf{z}_i}^2
 \ge\frac{\eta}{4n}
 \qquad(i=1,\ldots,n).
\]
Finally, $\ell(u)\le e^{-u}$ and $e^{s}\ge s^2/2$ for $s>0$, so for
$\eta\ge64n^2$,
\[
 L(\mathbf{w}_1)
 \le\frac1n\sum_{i=1}^{n}\exp\big(-\ip{\mathbf{z}_i}{\mathbf{w}_1}\big)
 \le\exp\bigg(-\frac{\eta}{4n}\bigg)
 \le\frac{32n^2}{\eta^2}
 \le\frac{1}{2\eta}
 <\frac1\eta ,
\]
where the next-to-last inequality is equivalent to $\eta\ge64n^2$. By the
definition \eqref{eq:tau} of the phase transition time, $\tau\le1$.
\end{proof}

\subsection{The stable phase}
\label{app:stable}

Part~(ii) of Theorem~\ref{thm:main} is the monotonicity statement in the
stable-phase part of \citet[Theorem~1]{wu2024large}, which we recall. If
$L(\mathbf{w}_s)\le1/\eta$ at some step $s$, then
$L(\mathbf{w}_{t+1})\le L(\mathbf{w}_t)$ for every $t\ge s$, and for every
$t>s$,
\begin{equation}
 L(\mathbf{w}_t)\le
 \frac{2F(\mathbf{w}_s)+\ln^{2}\big(\gamma^{2}\eta(t-s)\big)}{\gamma^{2}\eta(t-s)},
 \qquad\text{where}\ \ F(\mathbf{w}):=\frac1n\sum_{i=1}^n\exp(-y_i\mathbf{x}_i^\top\mathbf{w});
 \label{eq:stable-rate}
\end{equation}
the rate is their Lemma~11, and the monotonicity part holds already under
the weaker condition $L(\mathbf{w}_s)\le2/\eta$ by their Lemma~10. Since
$L(\mathbf{w}_{\tau})\le1/\eta$ by \eqref{eq:tau}, taking $s=\tau$
gives part~(ii). With $t=\tau+k$, the rate \eqref{eq:stable-rate} also
shows how the loss continues to decrease after the phase transition at a
fixed stepsize. Here the term $F(\mathbf{w}_{\tau})$ is controlled by
the definition of $\tau$. Writing $a_i:=\exp(-y_i\mathbf{x}_i^\top\mathbf{w})$,
we have $1+nF(\mathbf{w})=1+\sum_{i=1}^na_i\le\prod_{i=1}^n(1+a_i)=\exp(nL(\mathbf{w}))$,
so $F(\mathbf{w}_{\tau})\le(\exp(n/\eta)-1)/n$, which is at most
$(e-1)/\eta$ whenever $\eta\ge n$.

\subsection{Proof of Corollary~\ref{cor:accuracy}}
\label{app:corollaries}

\begin{proof}[Proof of Corollary~\ref{cor:accuracy}]
(i) Let $0<\epsilon\le\epsilon_0$ and $\eta=1/\epsilon$, so that
$\eta\ge\eta_0$ if $\gamma<1/\sqrt2$ and $\eta\ge64n^2$ if
$\gamma\ge1/\sqrt2$. By Theorem~\ref{thm:main}(i), with $C=1$ and $p=0$ in
the latter case,
\[
 \tau\le C\,(\ln\eta)^{p}=C\,\ln^{p}(1/\epsilon).
\]
By the definition \eqref{eq:tau} of the phase transition time,
$L(\mathbf{w}_t)\le1/\eta=\epsilon$ at $t=\tau$, and by
Theorem~\ref{thm:main}(ii) the loss decreases monotonically from step
$\tau$ onwards.

(ii) Let $p\ge1$, so that $\gamma<1/\sqrt2$, and recall
$T_0=\lceil C(\ln\eta_0)^{p}\rceil$. For every integer $T\ge T_0$, the
stepsize $\eta=\exp((T/C)^{1/p})$ satisfies $\ln\eta=(T/C)^{1/p}\ge\ln\eta_0$,
hence $\eta\ge\eta_0$, and Theorem~\ref{thm:main}(i) gives
\[
 \tau\le C\,(\ln\eta)^{p}=C\cdot\frac TC=T.
\]
Therefore $L(\mathbf{w}_{\tau})\le1/\eta$, and by Theorem~\ref{thm:main}(ii)
the loss decreases monotonically from step $\tau\le T$ onwards, so
$L(\mathbf{w}_T)\le1/\eta=\exp(-(T/C)^{1/p})$.
If instead $p=0$, let $0<\epsilon\le\epsilon_0$. By part~(i),
$L(\mathbf{w}_t)\le\epsilon$ for some $t\le C$ and the loss decreases
monotonically afterwards; hence $L(\mathbf{w}_T)\le\epsilon$ for every integer $T\ge C$.
\end{proof}

\section{Proofs for Section~\ref{sec:analysis}}
\label{app:proofs}

This appendix proves Theorem~\ref{thm:general} in full, following the
structure of Section~\ref{sec:analysis}.
Throughout, Assumption~\ref{ass:data} holds and GD is run from
$\mathbf{w}_0=\mathbf{0}$; in
Appendices~\ref{app:trajectory}--\ref{app:completion} we fix an integer
$h\ge2$ and use the general quantities $\theta_h$, $\nu_h$, $p_h$, $C_h$,
$K_h$, and $\eta_h^0$ of \eqref{eq:depth}--\eqref{eq:onset}. The
quantities of Section~\ref{sec:analysis} are the case $h=h_*$ of
\eqref{eq:selected}, that is, $\theta=\theta_{h_*}$, $\nu=\nu_{h_*}$,
$p=p_{h_*}$, $C=C_{h_*}$, and $\eta_0=\eta_{h_*}^0$.
Appendices~\ref{app:trajectory}--\ref{app:packing} use only three
conditions on the stepsize, namely $\eta\ge4$, $\Lambda=\ln\eta>10$, and
the root condition $4\Lambda\le\eta\theta_h$. Lemma~\ref{lem:b4-onset} in
Appendix~\ref{app:completion} shows that every $\eta\ge\eta_h^0$ satisfies
these conditions. The phase transition
time $\tau$ is finite for every $\eta>0$ by the bound
\eqref{eq:wu-tau} of \citet{wu2024large}, so $U$ in \eqref{eq:horizon} is
well defined. All other notation is as in Section~\ref{sec:analysis}.

\subsection{Trajectory estimates}
\label{app:trajectory}
This part proves the update decomposition \eqref{eq:update}, the bound on
the cumulative coefficient on the finite horizon, Lemma~\ref{lem:basic}, and
one running-average estimate that gives \eqref{eq:path} and is used again in
Appendix~\ref{app:refinement}. Nothing in it depends on $h$. The standing
assumption is $\eta\ge4$. We write $\ell(u):=\ln(1+e^{-u})$, so that
$L(\mathbf{w})=\frac1n\sum_{i=1}^{n}\ell(\ip{\mathbf{z}_i}{\mathbf{w}})$ by
\eqref{eq:loss}. Since $\ell'(u)=-1/(1+e^{u})<0$ and
$\ell''(u)=e^{u}/(1+e^{u})^{2}>0$, the function $\ell$ is decreasing and
convex, $L$ is convex, and $\ell(u)\le e^{-u}$ because $\ln(1+x)\le x$
for $x\ge0$. The weights in \eqref{eq:weights} satisfy $0<q_{i,t}<1$, so
that $0<Q_t<n$, $\mathbf{v}_t$ is well defined, and $0<\mu_t<\eta$. By
Assumption~\ref{ass:data}, $\gamma\le1$, and every vector
$\mathbf{b}=\sum_ic_i\mathbf{z}_i$ of the convex hull of the signed
examples, where $c_i\ge0$ and $\sum_ic_i=1$, satisfies
\begin{equation}
 \norm{\mathbf{b}}\le\sum_{i=1}^{n}c_i\norm{\mathbf{z}_i}\le1,
 \qquad
 \ip{\mathbf{b}}{\wstar}=\sum_{i=1}^{n}c_i\ip{\mathbf{z}_i}{\wstar}\ge\gamma.
 \label{eq:b1-hull}
\end{equation}

\begin{lemma}[Update decomposition]
\label{lem:b1-update}
For every $t\ge0$,
\begin{equation}
 \nabla L(\mathbf{w}_t)
 =-\frac{Q_t}{n}\mathbf{v}_t
 =-\frac{\mu_t}{\eta}\mathbf{v}_t,
 \label{eq:b1-gradient}
\end{equation}
and consequently \eqref{eq:update} holds.
\end{lemma}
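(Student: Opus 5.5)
The plan is a direct computation from the definition \eqref{eq:loss}, with no earlier result needed beyond the notation \eqref{eq:weights}. Write $L(\mathbf{w})=\frac1n\sum_{i}\ell(\ip{\mathbf{z}_i}{\mathbf{w}})$ with $\ell(u)=\ln(1+e^{-u})$, using $y_i\mathbf{x}_i^\top\mathbf{w}=\ip{\mathbf{z}_i}{\mathbf{w}}$. The chain rule gives $\nabla L(\mathbf{w}_t)=\frac1n\sum_i\ell'(\ip{\mathbf{z}_i}{\mathbf{w}_t})\mathbf{z}_i$. Since $\ell'(u)=-e^{-u}/(1+e^{-u})=-1/(1+e^{u})$, each coefficient equals $-q_{i,t}$. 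Hence $\nabla L(\mathbf{w}_t)=-\frac1n\sum_iq_{i,t}\mathbf{z}_i$.

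Next I factor out $Q_t$, which is positive because each $q_{i,t}\in(0,1)$. This gives $-\frac{Q_t}{n}\mathbf{v}_t$. Substituting $Q_t/n=\mu_t/\eta$ gives the second equality in \eqref{eq:b1-gradient}.

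For \eqref{eq:update}, the GD step becomes $\mathbf{w}_{t+1}-\mathbf{w}_t=-\eta\nabla L(\mathbf{w}_t)=\mu_t\mathbf{v}_t$. The weights $q_{i,t}/Q_t$ are nonnegative and sum to one, so $\mathbf{v}_t$ lies in the convex hull of the signed examples. Then \eqref{eq:b1-hull} gives $\norm{\mathbf{v}_t}\le1$ and $\ip{\mathbf{v}_t}{\wstar}\ge\gamma$.

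There is no real obstacle here. The only points needing care are the sign of $\ell'$ and the positivity of $Q_t$, which ensures that $\mathbf{v}_t$ is well defined.
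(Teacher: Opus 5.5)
Your proposal is correct and follows the paper's proof essentially step for step. You identify the chain-rule coefficients as $-q_{i,t}$ via $\ell'(u)=-1/(1+e^{u})$, factor out $Q_t$ and use $Q_t/n=\mu_t/\eta$, and then obtain \eqref{eq:update} from the GD step, the convex weights $q_{i,t}/Q_t$, and \eqref{eq:b1-hull}.
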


\begin{proof}
Since $\ell'(\ip{\mathbf{z}_i}{\mathbf{w}_t})=-q_{i,t}$, the chain rule
gives
$\nabla L(\mathbf{w}_t)=-\frac1n\sum_{i}q_{i,t}\mathbf{z}_i=-(Q_t/n)\mathbf{v}_t$,
and $Q_t/n=\mu_t/\eta$. Substituting \eqref{eq:b1-gradient} into
\eqref{eq:gd} with $\eta_t\equiv\eta$ gives
$\mathbf{w}_{t+1}-\mathbf{w}_t=\mu_t\mathbf{v}_t$. The coefficients
$q_{i,t}/Q_t$ are positive and sum to one, so $\mathbf{v}_t$ lies in the
convex hull of the signed examples, and \eqref{eq:b1-hull} gives the last
two properties in \eqref{eq:update}.
\end{proof}

Recall $\widehat T=\lfloor\gamma^{2}\exp(\eta/4)/\eta\rfloor$ and
$U=\min\{\tau,\widehat T\}$ from \eqref{eq:horizon}. $U$ is a
well-defined nonnegative integer because $\tau$ is finite, as recalled
at the start of this appendix. All counting in this appendix is restricted to the
steps $0,1,\ldots,U-1$, and the horizon is removed only in the last step of
the proof of Theorem~\ref{thm:general}, because the following bound is
proved only up to $\widehat T$. Its proof applies the split optimization
technique introduced by \citet[Lemma~7]{wu2024large}.

\begin{lemma}[Norm and cumulative-coefficient bounds]
\label{lem:b1-budget}
Let $\eta\ge4$. For every integer $1\le T\le\widehat T$,
\begin{equation}
 \norm{\mathbf{w}_T}\le\frac{2\eta}{\gamma},
 \qquad
 \sum_{t=0}^{T-1}\mu_t\le2\eta\gamma^{-2}.
 \label{eq:b1-horizon-budget}
\end{equation}
\end{lemma}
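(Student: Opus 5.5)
The plan is to adapt the split optimization argument of \citet[Lemma~7]{wu2024large}. We compare the iterates with a reference point $\mathbf{u}=\mathbf{u}_1+\mathbf{u}_2$ along $\wstar$, and we expand the squared distance to it. Fix $1\le T\le\widehat T$ and put $\mathbf{u}_1:=\gamma^{-1}\ln^{+}(\gamma^{2}\eta T)\,\wstar$, where $\ln^{+}:=\max\{0,\ln\}$, and $\mathbf{u}_2:=c\,\eta\gamma^{-1}\wstar$ for a constant $c$ to be tuned; the choice $c=1$ is natural. For each $t<T$,
\[
 \norm{\mathbf{w}_{t+1}-\mathbf{u}}^2
 =\norm{\mathbf{w}_t-\mathbf{u}}^2+2\eta\ip{\nabla L(\mathbf{w}_t)}{\mathbf{u}_1-\mathbf{w}_t}
 +2\eta\ip{\nabla L(\mathbf{w}_t)}{\mathbf{u}_2}+\eta^2\norm{\nabla L(\mathbf{w}_t)}^2 .
\]
We bound the three terms separately.
\begin{enumerate}[label=(\alph*),leftmargin=*,nosep]
 \item By convexity of $L$, the first inner product is at most $2\eta\big(L(\mathbf{u}_1)-L(\mathbf{w}_t)\big)\le2\eta L(\mathbf{u}_1)$.
 \item By Lemma~\ref{lem:b1-update} and $\ip{\mathbf{v}_t}{\wstar}\ge\gamma$, the second inner product is at most $-2c\,\eta\mu_t$.
 \item Since $\norm{\mathbf{v}_t}\le1$ and $\mu_t=\eta Q_t/n<\eta$, the last term is at most $\mu_t^2\le\eta\mu_t$.
\end{enumerate}
With $c=1$ this leaves a net term $-\eta\mu_t$.

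Summing over $t<T$ and using $\mathbf{w}_0=\mathbf{0}$ gives
\[
 \norm{\mathbf{w}_T-\mathbf{u}}^2+\eta\sum_{t<T}\mu_t\le\norm{\mathbf{u}}^2+2\eta T\,L(\mathbf{u}_1).
\]
The bound $\ell(s)\le e^{-s}$ and $\ip{\mathbf{z}_i}{\mathbf{u}_1}\ge\gamma\norm{\mathbf{u}_1}$ give $L(\mathbf{u}_1)\le1/(\gamma^2\eta T)$ when the logarithm is positive. When it is not, $\gamma^{2}\eta T\le1$, and the crude bound $L(\mathbf{0})=\ln2$ suffices because then $T$ is small. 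So the error term is at most $2\gamma^{-2}$.

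The horizon is used only here. For $T\le\widehat T$ we have $\gamma^{2}\eta T\le\gamma^{4}e^{\eta/4}$, so $\ln^{+}(\gamma^{2}\eta T)\le\eta/4$. Hence $\norm{\mathbf{u}}\le\tfrac54\eta/\gamma$ and $\eta\sum_{t<T}\mu_t\le\big(\tfrac{25}{16}\eta^2+2\big)\gamma^{-2}$. This is at most $2\eta^2\gamma^{-2}$ as soon as $\tfrac{7}{16}\eta^2\ge2$, which follows from $\eta\ge4$. This proves the coefficient bound.

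For the norm bound, the same inequality together with $\sum_{t<T}\mu_t\ge\mu_0=\eta/2$ gives
\[
 \norm{\mathbf{w}_T-\mathbf{u}}^2\le\norm{\mathbf{u}}^2+2\gamma^{-2}-\eta^2/2 ,
\]
and then $\norm{\mathbf{w}_T}\le\norm{\mathbf{u}}+\norm{\mathbf{w}_T-\mathbf{u}}$.

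The main obstacle is the constant $2$ in the norm bound. The naive estimate $\norm{\mathbf{w}_T}\le2\norm{\mathbf{u}}\le\tfrac52\eta/\gamma$ is slightly too weak. I would first try to tune $c$ and exploit the slack above: the $-\eta\mu_t$ surplus, and the fact that $\mu_0=\eta/2$ is known exactly. If that does not close the gap, I would follow Wu et al.\ more literally. There one decomposes $\mathbf{w}_T=\ip{\mathbf{w}_T}{\wstar}\wstar+\mathbf{w}_T^{\perp}$. The parallel part is at most $\sum_{t<T}\mu_t$. The perpendicular part is controlled by projecting the distance inequality onto $\wstar^{\perp}$, where $\mathbf{u}$ has no component, so that $\norm{\mathbf{w}_T^{\perp}}^2\le\norm{\mathbf{u}}^2+2\gamma^{-2}-\eta\sum_{t<T}\mu_t$ up to cross terms. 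The remaining cases, with $T$ so small that $\gamma^2\eta T<1$, are handled directly from $\norm{\mathbf{w}_T}\le\sum_{t<T}\mu_t<T\eta$.
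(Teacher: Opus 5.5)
\textbf{Coefficient bound.} Your bound $\sum_{t<T}\mu_t\le2\eta\gamma^{-2}$ is correct, and you obtain it differently from the paper. With $c=1$ you keep the surplus $-\eta\mu_t$ and read the coefficient bound off directly. The paper instead proves the norm bound first and deduces the coefficient bound from $\gamma\sum_{t<T}\mu_t\le\ip{\mathbf{w}_T}{\wstar}\le\norm{\mathbf{w}_T}$.

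\textbf{The gap: the norm bound.} Your proposal leaves $\norm{\mathbf{w}_T}\le2\eta/\gamma$ unproved, and neither repair you suggest closes it.
\begin{itemize}
\item Exploiting the surplus with $c=1$ does not work. The comparator has $\norm{\mathbf{u}}=\gamma^{-1}(\ln^{+}(\gamma^2\eta T)+\eta)$, which can be as large as $\frac{5\eta}{4\gamma}$. The only guaranteed lower bound on the surplus is $\eta\sum_{t<T}\mu_t\ge\eta\norm{\mathbf{w}_T}$ (or $\eta^2/2$ from $\mu_0$). Inserting it yields $\norm{\mathbf{w}_T}^2\le(2\norm{\mathbf{u}}-\eta)\norm{\mathbf{w}_T}+2\gamma^{-2}$, hence only $\norm{\mathbf{w}_T}\le2\norm{\mathbf{u}}-\eta+\sqrt2\,\gamma^{-1}$. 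Already $2\norm{\mathbf{u}}-\eta>2\eta/\gamma$ whenever $\ln^{+}(\gamma^2\eta T)>\gamma\eta/2$, which happens for $T$ near $\widehat T$ when $\gamma$ is small and $\eta$ large. The reason is that the surplus is of order $\eta^2/\gamma$, while the main term $\norm{\mathbf{u}}^2$ is of order $\eta^2/\gamma^2$.
\item The parallel/perpendicular decomposition also fails. Bounding the parallel part by $\sum_{t<T}\mu_t\le2\eta\gamma^{-2}$ loses a factor $1/\gamma$.
\item The fallback $\norm{\mathbf{w}_T}<T\eta$ for $\gamma^2\eta T<1$ gives $2\eta/\gamma$ only when $T\le2/\gamma$, and $\gamma^2\eta T<1$ does not imply that.
\end{itemize}

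\textbf{The missing idea.} Take $c=1/2$, that is, $\mathbf{u}_2=\frac{\eta}{2\gamma}\wstar$. Then
$2\eta\ip{\nabla L(\mathbf{w}_t)}{\mathbf{u}_2}+\eta^2\norm{\nabla L(\mathbf{w}_t)}^2\le-\eta\mu_t+\mu_t^2\le0$, because $\mu_t<\eta$. The quadratic term is cancelled exactly, so no surplus is needed. Your other estimates then give $\norm{\mathbf{u}}\le\frac{3\eta}{4\gamma}$ and
$\norm{\mathbf{w}_T}\le2\norm{\mathbf{u}}+\sqrt2\,\gamma^{-1}\le\frac{3\eta}{2\gamma}+\frac{\sqrt2}{\gamma}\le\frac{2\eta}{\gamma}$, since $\eta\ge4>2\sqrt2$. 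The coefficient bound then follows from the norm bound through the margin, as above.

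This is the paper's proof, with one difference. The paper uses the fixed comparator $\mathbf{u}_1=\frac{\eta}{4\gamma}\wstar$, whose margin $\eta/4$ bounds the error term by $2\eta Te^{-\eta/4}\le2\gamma^2$ directly from $T\le\widehat T$. That makes your $\ln^{+}$ case split unnecessary.
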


\begin{proof}
Write $\mathbf{g}_t:=\nabla L(\mathbf{w}_t)$. By \eqref{eq:b1-gradient} and
\eqref{eq:update},
\[
 \norm{\mathbf{g}_t}=\frac{\mu_t}{\eta}\norm{\mathbf{v}_t}\le\frac{\mu_t}{\eta},
 \qquad
 \ip{\mathbf{g}_t}{\wstar}=-\frac{\mu_t}{\eta}\ip{\mathbf{v}_t}{\wstar}
 \le-\frac{\gamma\mu_t}{\eta}.
\]
Define the fixed comparators
\[
 \mathbf{u}_1:=\frac{\eta}{4\gamma}\wstar,
 \qquad
 \mathbf{u}_2:=\frac{\eta}{2\gamma}\wstar,
 \qquad
 \mathbf{u}:=\mathbf{u}_1+\mathbf{u}_2=\frac{3\eta}{4\gamma}\wstar.
\]
The component $\mathbf{u}_2$ cancels the quadratic term of a step. Indeed,
the two bounds on $\mathbf{g}_t$ and $0<\mu_t<\eta$ give
\begin{equation}
 2\ip{\mathbf{g}_t}{\mathbf{u}_2}+\eta\norm{\mathbf{g}_t}^{2}
 \le-\mu_t+\frac{\mu_t^{2}}{\eta}
 =-\mu_t\bigg(1-\frac{\mu_t}{\eta}\bigg)
 \le0.
 \label{eq:b1-cancel}
\end{equation}
Since $\mathbf{w}_{t+1}-\mathbf{u}=(\mathbf{w}_t-\mathbf{u})-\eta\mathbf{g}_t$,
expanding the square and splitting $\mathbf{u}=\mathbf{u}_1+\mathbf{u}_2$
gives
\begin{align*}
 \norm{\mathbf{w}_{t+1}-\mathbf{u}}^{2}
 &=\norm{\mathbf{w}_t-\mathbf{u}}^{2}
 +2\eta\ip{\mathbf{g}_t}{\mathbf{u}_1-\mathbf{w}_t}
 +\eta\big(2\ip{\mathbf{g}_t}{\mathbf{u}_2}+\eta\norm{\mathbf{g}_t}^{2}\big)\\
 &\le\norm{\mathbf{w}_t-\mathbf{u}}^{2}
 +2\eta\big(L(\mathbf{u}_1)-L(\mathbf{w}_t)\big),
\end{align*}
where the inequality uses \eqref{eq:b1-cancel} and the gradient inequality
$L(\mathbf{u}_1)\ge L(\mathbf{w}_t)+\ip{\mathbf{g}_t}{\mathbf{u}_1-\mathbf{w}_t}$
of the convex function $L$. Every signed margin of $\mathbf{u}_1$ satisfies
$\ip{\mathbf{z}_i}{\mathbf{u}_1}=(\eta/(4\gamma))\ip{\mathbf{z}_i}{\wstar}\ge\eta/4$,
so $L(\mathbf{u}_1)\le\ell(\eta/4)\le e^{-\eta/4}$ because $\ell$ is
decreasing and $\ell(u)\le e^{-u}$. Summing over $t=0,\ldots,T-1$, using
$\mathbf{w}_0=\mathbf{0}$, and discarding the nonpositive term
$-2\eta\sum_{t<T}L(\mathbf{w}_t)$, we obtain
\[
 \norm{\mathbf{w}_T-\mathbf{u}}^{2}
 \le\norm{\mathbf{u}}^{2}+2\eta Te^{-\eta/4}
 \le\norm{\mathbf{u}}^{2}+2\gamma^{2},
\]
where the last inequality is
$T\le\widehat T\le\gamma^{2}e^{\eta/4}/\eta$. The triangle inequality
and $\sqrt{a+b}\le\sqrt a+\sqrt b$ for $a,b\ge0$ now give
\[
 \norm{\mathbf{w}_T}
 \le\norm{\mathbf{u}}+\norm{\mathbf{w}_T-\mathbf{u}}
 \le2\norm{\mathbf{u}}+\sqrt2\,\gamma
 =\frac{3\eta}{2\gamma}+\sqrt2\,\gamma
 \le\frac{2\eta}{\gamma}.
\]
The last inequality is equivalent to $2\sqrt2\,\gamma^{2}\le\eta$, which
holds because $\gamma\le1$ and $\eta\ge4>2\sqrt2$. For the second bound in
\eqref{eq:b1-horizon-budget}, Lemma~\ref{lem:b1-update} and
$\mathbf{w}_0=\mathbf{0}$ give
$\mathbf{w}_T=\sum_{t<T}\mu_t\mathbf{v}_t$, and since
$\ip{\mathbf{v}_t}{\wstar}\ge\gamma$ and $\mu_t>0$,
\[
 \gamma\sum_{t=0}^{T-1}\mu_t
 \le\bigg\langle\sum_{t=0}^{T-1}\mu_t\mathbf{v}_t,\wstar\bigg\rangle
 =\ip{\mathbf{w}_T}{\wstar}
 \le\norm{\mathbf{w}_T}
 \le\frac{2\eta}{\gamma}.
\]
Dividing by $\gamma$ completes the proof.
\end{proof}

Recall from Section~\ref{sec:geometry} the sets
$\mathcal T_-=\{t<U:\ \ip{\mathbf{z}_i}{\mathbf{w}_t}<0\text{ for some }i\}$
and $\mathcal T_+=\{0,1,\ldots,U-1\}\setminus\mathcal T_-$, so that every
signed margin is nonnegative at each $t\in\mathcal T_+$.

\begin{lemma}[General form of Lemma~\ref{lem:basic}]
\label{lem:b1-basic}
Let $\eta\ge4$. Then
\begin{equation}
 \sum_{t<U}\mu_t\le2\eta\gamma^{-2},
 \qquad
 |\mathcal T_-|<4n\gamma^{-2},
 \label{eq:b1-budget}
\end{equation}
and at every $t\in\mathcal T_+$,
\begin{equation}
 \mu_t\ge\frac{\eta}{2}L(\mathbf{w}_t)>\frac12,
 \qquad
 0\le\ip{\mathbf{w}_t}{\mathbf{v}_t}<\Lambda.
 \label{eq:b1-pointwise}
\end{equation}
\end{lemma}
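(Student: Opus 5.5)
The proof splits into the budget bound, the count of $\mathcal T_-$, and the two pointwise facts on $\mathcal T_+$. Each reduces quickly to Lemma~\ref{lem:b1-budget} or to the convexity of $\ell$.

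\emph{The two bounds in \eqref{eq:b1-budget}.} If $U=0$ the sum is empty and $\mathcal T_-=\emptyset$, so both claims are trivial. Otherwise $1\le U\le\widehat T$. The first bound is then exactly the second inequality of \eqref{eq:b1-horizon-budget} in Lemma~\ref{lem:b1-budget} with $T=U$, valid since $\eta\ge4$.

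For the count, take $t\in\mathcal T_-$ and an index $i$ with $\ip{\mathbf{z}_i}{\mathbf{w}_t}<0$. Then $q_{i,t}=1/(1+\exp(\ip{\mathbf{z}_i}{\mathbf{w}_t}))>1/2$. Since all weights are positive, $Q_t>1/2$ and hence $\mu_t>\eta/(2n)$. Summing over $\mathcal T_-$ and using the budget gives
\[
|\mathcal T_-|\,\eta/(2n)<\sum_{t<U}\mu_t\le2\eta\gamma^{-2}.
\]
The inequality is strict when $\mathcal T_-\neq\emptyset$, and the case $\mathcal T_-=\emptyset$ is trivial. This yields $|\mathcal T_-|<4n\gamma^{-2}$.

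\emph{The pointwise bounds at $t\in\mathcal T_+$.} Here every $m_i:=\ip{\mathbf{z}_i}{\mathbf{w}_t}\ge0$. Set $a=e^{-m_i}\in(0,1]$. Then
\[
q_{i,t}=\frac{a}{1+a}\ge\frac a2\ge\frac12\ln(1+a)=\frac12\ell(m_i),
\]
using $1+a\le2$ and $\ln(1+a)\le a$. Averaging over $i$ gives $Q_t/n\ge L(\mathbf{w}_t)/2$, that is, $\mu_t\ge\eta L(\mathbf{w}_t)/2$. Since $t<U\le\tau$, the definition \eqref{eq:tau} gives $L(\mathbf{w}_t)>1/\eta$, so $\mu_t>1/2$.

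The radial bound $\ip{\mathbf{w}_t}{\mathbf{v}_t}\ge0$ is immediate, because $Q_t\ip{\mathbf{w}_t}{\mathbf{v}_t}=\sum_iq_{i,t}m_i\ge0$. For the upper bound, the tangent-line inequality of the convex $\ell$ at $m_i$, evaluated at $\Lambda$, reads $\ell(\Lambda)\ge\ell(m_i)-q_{i,t}(\Lambda-m_i)$. Summing over $i$ gives
\[
Q_t\big(\ip{\mathbf{w}_t}{\mathbf{v}_t}-\Lambda\big)\le n\ell(\Lambda)-nL(\mathbf{w}_t).
\]
The right side is negative, because $\ell(\Lambda)=\ln(1+1/\eta)<1/\eta<L(\mathbf{w}_t)$. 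Dividing by $Q_t>0$ gives $\ip{\mathbf{w}_t}{\mathbf{v}_t}<\Lambda$.

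There is no real obstacle here. The heavy step, the cumulative budget via the split comparator, is already done in Lemma~\ref{lem:b1-budget}. The points needing care are only the edge cases $U=0$ and $\mathcal T_-=\emptyset$, and using $t<\tau$ to get $L(\mathbf{w}_t)>1/\eta$.
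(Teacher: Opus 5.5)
Your proposal is correct and follows the paper's proof essentially line by line. The budget is Lemma~\ref{lem:b1-budget} at $T=U$, with $U=0$ trivial, and the count of $\mathcal T_-$ comes from a weight exceeding $1/2$; the pointwise bounds come from $a/(1+a)\ge\frac12\ln(1+a)$ together with $t<U\le\tau$, and from the tangent-line inequality for $\ell$ at $m_i$ evaluated at $\Lambda$.
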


Lemma~\ref{lem:basic} is this statement; neither it nor its proof involves
$h$.

\begin{proof}
The proof of Lemma~\ref{lem:basic} in Section~\ref{sec:geometry} is
complete except for the first bound in \eqref{eq:b1-budget}, which it defers
to this appendix. That bound is trivial when $U=0$, and for $U\ge1$ it is
Lemma~\ref{lem:b1-budget} with $T=U\le\widehat T$. The remaining
assertions then follow verbatim from the proof in
Section~\ref{sec:geometry}, which uses only the weights \eqref{eq:weights},
the convexity of $\ell$, and $L(\mathbf{w}_t)>1/\eta$ for
$t<U\le\tau$, by the definition \eqref{eq:tau} of $\tau$.
\end{proof}

The next lemma gives, for every starting step, the properties of the
running averages of update directions used in
Appendix~\ref{app:refinement}.

\begin{lemma}[Running averages of the update directions]
\label{lem:b1-average}
Let $\eta>0$, let $a\ge0$ be an integer and, for every integer $t>a$, define
\[
 S^{(a)}_t:=\sum_{k=a}^{t-1}\mu_k,
 \qquad
 \bar{\mathbf{v}}^{(a)}_t
 :=\frac{1}{S^{(a)}_t}\sum_{k=a}^{t-1}\mu_k\mathbf{v}_k
 =\frac{\mathbf{w}_t-\mathbf{w}_a}{S^{(a)}_t}.
\]
Then, for every $t>a$,
\begin{equation}
\begin{gathered}
 \bar{\mathbf{v}}^{(a)}_t\in\conv\{\mathbf{z}_1,\ldots,\mathbf{z}_n\},
 \qquad
 \norm{\bar{\mathbf{v}}^{(a)}_t}\le1,
 \qquad
 \ip{\bar{\mathbf{v}}^{(a)}_t}{\wstar}\ge\gamma,\\
 \norm{\bar{\mathbf{v}}^{(a)}_{t+1}-\bar{\mathbf{v}}^{(a)}_t}
 \le\frac{2\mu_t}{S^{(a)}_{t+1}}
 \le2\ln\frac{S^{(a)}_{t+1}}{S^{(a)}_t},
\end{gathered}
 \label{eq:b1-average-step}
\end{equation}
and, for all integers $a<s\le t$,
\begin{equation}
 \norm{\bar{\mathbf{v}}^{(a)}_t-\bar{\mathbf{v}}^{(a)}_s}
 \le\sum_{k=s}^{t-1}\norm{\bar{\mathbf{v}}^{(a)}_{k+1}-\bar{\mathbf{v}}^{(a)}_k}
 \le2\ln\frac{S^{(a)}_t}{S^{(a)}_s}.
 \label{eq:b1-average-telescope}
\end{equation}
\end{lemma}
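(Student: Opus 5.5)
The plan is to derive everything from the update decomposition of Lemma~\ref{lem:b1-update}, $\mathbf{w}_{k+1}-\mathbf{w}_k=\mu_k\mathbf{v}_k$ with $\mu_k>0$ and $\mathbf{v}_k\in\conv\{\mathbf{z}_1,\ldots,\mathbf{z}_n\}$.

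\emph{Membership and norm bounds.} Summing the decomposition from $k=a$ to $t-1$ gives $\mathbf{w}_t-\mathbf{w}_a=\sum_{k=a}^{t-1}\mu_k\mathbf{v}_k$. This proves the two expressions for $\bar{\mathbf{v}}^{(a)}_t$ agree, and $S^{(a)}_t\ge\mu_a>0$ for $t>a$, so the average is well defined. The weights $\mu_k/S^{(a)}_t$ are positive and sum to one, and the convex hull is convex, so $\bar{\mathbf{v}}^{(a)}_t$ lies in it. The bounds $\norm{\bar{\mathbf{v}}^{(a)}_t}\le1$ and $\ip{\bar{\mathbf{v}}^{(a)}_t}{\wstar}\ge\gamma$ then follow directly from \eqref{eq:b1-hull}.

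\emph{One-step movement.} Write $S=S^{(a)}_t$ and $S'=S^{(a)}_{t+1}=S+\mu_t$. From $S'\bar{\mathbf{v}}^{(a)}_{t+1}=S\bar{\mathbf{v}}^{(a)}_t+\mu_t\mathbf{v}_t$ we get
\[
\bar{\mathbf{v}}^{(a)}_{t+1}-\bar{\mathbf{v}}^{(a)}_t=\frac{\mu_t}{S'}\big(\mathbf{v}_t-\bar{\mathbf{v}}^{(a)}_t\big).
\]
Both $\mathbf{v}_t$ and $\bar{\mathbf{v}}^{(a)}_t$ have norm at most one, so the norm of this difference is at most $2\mu_t/S'$. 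For the logarithmic bound we use $1-x\le-\ln x$ with $x=S/S'\in(0,1]$, which gives $\mu_t/S'=1-S/S'\le\ln(S'/S)$.

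\emph{Telescoping.} The first inequality in \eqref{eq:b1-average-telescope} is the triangle inequality applied along $s,s+1,\ldots,t$; when $s=t$ both sides are zero. The second follows by summing the one-step logarithmic bounds, which telescope to $2\ln(S^{(a)}_t/S^{(a)}_s)$.

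No step is a real obstacle. The only points to check are positivity of $S^{(a)}_t$, which needs $t>a$, and the elementary inequality $1-x\le-\ln x$.
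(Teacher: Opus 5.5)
Your proposal is correct and follows essentially the same route as the paper's proof. Both write $\bar{\mathbf{v}}^{(a)}_t$ as a convex combination with weights $\mu_k/S^{(a)}_t$ and use the identity $\bar{\mathbf{v}}^{(a)}_{t+1}-\bar{\mathbf{v}}^{(a)}_t=\frac{\mu_t}{S^{(a)}_{t+1}}(\mathbf{v}_t-\bar{\mathbf{v}}^{(a)}_t)$, then finish with $1-y\le-\ln y$ and telescoping.
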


\begin{proof}
Fix $a$ and, in this proof only, write $S_t$ and $\bar{\mathbf{v}}_t$ for
$S^{(a)}_t$ and $\bar{\mathbf{v}}^{(a)}_t$. Every $\mu_k$ is positive, so
$S_t>0$, and
$\mathbf{w}_t-\mathbf{w}_a=\sum_{k=a}^{t-1}\mu_k\mathbf{v}_k$ by
Lemma~\ref{lem:b1-update}. Thus $\bar{\mathbf{v}}_t$ is a convex combination,
with weights $\mu_k/S_t$, of vectors of the convex hull of the signed
examples; it therefore lies in this convex hull, and \eqref{eq:b1-hull}
gives the norm and margin bounds. Next, $S_{t+1}=S_t+\mu_t$ and
$S_{t+1}\bar{\mathbf{v}}_{t+1}=S_t\bar{\mathbf{v}}_t+\mu_t\mathbf{v}_t$, so
\[
 \bar{\mathbf{v}}_{t+1}-\bar{\mathbf{v}}_t
 =\frac{\mu_t}{S_{t+1}}\big(\mathbf{v}_t-\bar{\mathbf{v}}_t\big),
 \qquad
 \norm{\bar{\mathbf{v}}_{t+1}-\bar{\mathbf{v}}_t}
 \le\frac{\mu_t}{S_{t+1}}\big(\norm{\mathbf{v}_t}+\norm{\bar{\mathbf{v}}_t}\big)
 \le\frac{2\mu_t}{S_{t+1}}.
\]
Moreover, $\mu_t/S_{t+1}=1-S_t/S_{t+1}\le\ln(S_{t+1}/S_t)$, because
$1-y\le-\ln y$ for every $y>0$. This proves \eqref{eq:b1-average-step}. The
first inequality in \eqref{eq:b1-average-telescope} is the triangle
inequality, and the second follows by summing the last bound in
\eqref{eq:b1-average-step} over $s\le k<t$, where the logarithms telescope.
\end{proof}

For $a=0$, since $\mathbf{w}_0=\mathbf{0}$, the sum $S^{(0)}_t$ is the
cumulative coefficient $S_t=\sum_{k<t}\mu_k$ of
Section~\ref{sec:hierarchy} and
$\bar{\mathbf{v}}^{(0)}_t=\mathbf{w}_t/S_t=\boldsymbol{\xi}_t$ is the
normalized iterate; for an interval $I=\{a,\ldots,b\}$, the average
$\bar{\mathbf{v}}^{(a)}_t$ is the local average $\boldsymbol{\zeta}_t$ in
the proof of Lemma~\ref{lem:refine}. At step zero every weight equals
$q_{i,0}=1/2$, so $S_1=\mu_0=\eta/2$, and if $\eta\ge4$ and $U\ge2$, then
$S_U\le2\eta\gamma^{-2}$ by Lemma~\ref{lem:b1-budget} with
$T=U\le\widehat T$. Hence \eqref{eq:b1-average-telescope} with $a=0$,
$s=1$, and $t=U$ gives
\begin{equation}
 \sum_{t=1}^{U-1}\norm{\boldsymbol{\xi}_{t+1}-\boldsymbol{\xi}_t}
 \le2\ln\frac{S_U}{S_1}
 \le2\ln(4\gamma^{-2}).
 \label{eq:b1-path-length}
\end{equation}
When $U\le1$ the sum on the left of \eqref{eq:b1-path-length} is empty, and
its bound by $2\ln(4\gamma^{-2})>0$ holds trivially because $\gamma\le1$.
Together with \eqref{eq:b1-average-step} for $a=0$, this is \eqref{eq:path}.

\subsection{Constructing the hierarchy of intervals}
\label{app:refinement}

This part proves Lemma~\ref{lem:refine} for a general integer $h\ge2$, with
the exact cut points of both constructions, and shows that
Lemmas~\ref{lem:capacity} and~\ref{lem:terminal} hold for every such $h$.
Throughout, $\eta\ge4$ and $\Lambda>10$, and every interval is a set of
consecutive steps in $\mathcal T_+$, so that Lemma~\ref{lem:b1-basic}
provides the budget $\sum_{t<U}\mu_t\le2\eta\gamma^{-2}$ and, at each step
$t$ of the interval, the bounds $\mu_t>1/2$ and
$0\le\ip{\mathbf{w}_t}{\mathbf{v}_t}<\Lambda$. We also use that $\mu_t>0$,
because every weight $q_{i,t}$ in \eqref{eq:weights} is positive, that
$\norm{\mathbf{v}_t}\le1$ at every step (Lemma~\ref{lem:b1-update}), and
that $0<\theta_h<\gamma^2\le1$, because $h\ge2$ and $\gamma\le1$
(Assumption~\ref{ass:data}).
Figure~\ref{fig:hierarchy} gives an overview of the two constructions below
and of the counting argument of Appendix~\ref{app:packing}. In the figure,
$\theta$ and $\nu$ stand for $\theta_h$ and $\nu_h$.


\begin{definition}[$\theta_h$-reference directions]
\label{def:b2-references}
Let $I\subseteq\mathcal T_+$ be a set of consecutive steps. Vectors
$\mathbf{a}_1,\ldots,\mathbf{a}_j$ are \emph{$\theta_h$-reference directions
for $I$} if
\begin{enumerate}[label=\textup{(\alph*)},leftmargin=*,nosep]
 \item $\mathbf{a}_i\in\conv\{\mathbf{z}_1,\ldots,\mathbf{z}_n\}$ for every
 $1\le i\le j$;
 \item $\ip{\mathbf{a}_i}{\mathbf{a}_k}\le\theta_h$ whenever
 $1\le i<k\le j$;
 \item $\ip{\mathbf{a}_i}{\mathbf{v}_t}\le\theta_h$ whenever $1\le i\le j$
 and $t\in I$.
\end{enumerate}
\end{definition}

This is \eqref{eq:references} with $\theta=\theta_h$. Condition (b) is void
when $j=1$. The conditions are local to one interval, and directions
attached to different roots, or to different children of one parent, are
never compared with each other.

\paragraph{Running averages.}
For integers $t\ge a\ge0$ write $S^{(a)}_t:=\sum_{k=a}^{t-1}\mu_k$, an
empty sum when $t=a$, and for $t>a$ write
$\boldsymbol{\zeta}_t:=(\mathbf{w}_t-\mathbf{w}_a)/S^{(a)}_t$; the starting
step $a$ is fixed in each proof. In the notation of
Lemma~\ref{lem:b1-average},
$\boldsymbol{\zeta}_t=\bar{\mathbf{v}}^{(a)}_t$, and
$\boldsymbol{\xi}_t=\mathbf{w}_t/S_t=\bar{\mathbf{v}}^{(0)}_t$ with
$S_t=S^{(0)}_t$. That lemma gives
\begin{equation}
 \boldsymbol{\zeta}_t\in\conv\{\mathbf{z}_1,\ldots,\mathbf{z}_n\}
 \quad(t>a),
 \qquad
 \norm{\boldsymbol{\zeta}_t-\boldsymbol{\zeta}_s}
 \le2\ln\frac{S^{(a)}_t}{S^{(a)}_s}
 \quad(a<s\le t).
 \label{eq:b2-average}
\end{equation}
Both constructions below freeze such an average and start a new interval as
soon as the average has moved by more than $\theta_h/2$.

\paragraph{Root partition.}
At $h=h_*$ the next lemma is Lemma~\ref{lem:refine}(i). Its hypothesis
\eqref{eq:b2-root-test} is the root condition stated at the beginning of
this appendix and enters only in the last display of the proof.

\begin{lemma}[General form of Lemma~\ref{lem:refine}(i)]
\label{lem:b2-roots}
Assume
\begin{equation}
 4\Lambda\le\eta\theta_h.
 \label{eq:b2-root-test}
\end{equation}
There are an integer $M\ge0$ and nonempty sets $J_1,\ldots,J_M$ of
consecutive steps such that
\begin{equation}
\begin{aligned}
 &\mathcal T_+\setminus\{0\}
 =J_1\mathbin{\dot\cup}\cdots\mathbin{\dot\cup}J_M,
 \qquad
 \max J_i<\min J_{i+1}\ \ (1\le i<M),\\
 &M\le|\mathcal T_-|+1+\bigg\lceil\frac{4\ln(4\gamma^{-2})}{\theta_h}\bigg\rceil,
\end{aligned}
 \label{eq:b2-root-partition}
\end{equation}
where $M=0$ when $\mathcal T_+\setminus\{0\}$ is empty. For every
$1\le i\le M$, the vector
$\mathbf{a}_1^{(i)}:=\boldsymbol{\xi}_{\min J_i}$ is a
$\theta_h$-reference direction for $J_i$.
\end{lemma}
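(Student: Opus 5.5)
The plan is to follow the proof sketch of Lemma~\ref{lem:refine}(i), now with explicit cut points. First decompose $\mathcal T_+\setminus\{0\}$ into maximal runs of consecutive integers. Any two consecutive runs are separated by at least one step of $\mathcal T_-$, because all steps lie in $\{1,\ldots,U-1\}$ and $\mathcal T_+\cup\mathcal T_-=\{0,\ldots,U-1\}$. Hence the number of runs is at most $|\mathcal T_-|+1$, and there are none when the set is empty, which gives $M=0$.

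Inside each run $\{c,\ldots,d\}$ we apply a greedy cut rule. Start a piece at $c_1:=c$. Given a piece start $c_k$, let $c_{k+1}$ be the smallest $t\in(c_k,d]$ with $\norm{\boldsymbol{\xi}_t-\boldsymbol{\xi}_{c_k}}>\theta_h/2$. If no such $t$ exists, the piece extends to $d$. The pieces $J_i$ are the resulting sets $\{c_k,\ldots,c_{k+1}-1\}$. They are nonempty, consecutive, disjoint and ordered, and together they cover the run.

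Counting the cuts works as follows. Each cut $c_k\to c_{k+1}$ certifies, via the triangle inequality, that $\sum_{t=c_k}^{c_{k+1}-1}\norm{\boldsymbol{\xi}_{t+1}-\boldsymbol{\xi}_t}>\theta_h/2$. These step ranges are disjoint across all cuts in all runs and are contained in $\{1,\ldots,U-1\}$. Summing and using \eqref{eq:b1-path-length}, the total number of cuts $N$ satisfies $N\theta_h/2<2\ln(4\gamma^{-2})$, so $N\le\lceil4\ln(4\gamma^{-2})/\theta_h\rceil$. The number of pieces equals the number of runs plus the number of cuts, which gives the bound on $M$ in \eqref{eq:b2-root-partition}.

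It remains to verify the reference-direction property. Condition (b) of Definition~\ref{def:b2-references} is void for $j=1$, and (a) holds by \eqref{eq:b2-average} with $a=0$, since $\min J_i\ge1$. For (c), fix $t\in J_i$ with $c:=\min J_i$. We have $S_t\ge S_1=\eta/2$, because $t\ge1$ and $\mu_k>0$. Writing $\boldsymbol{\xi}_c=\boldsymbol{\xi}_t+(\boldsymbol{\xi}_c-\boldsymbol{\xi}_t)$, using $\norm{\mathbf v_t}\le1$, $\ip{\boldsymbol{\xi}_t}{\mathbf v_t}=\ip{\mathbf w_t}{\mathbf v_t}/S_t<\Lambda/S_t\le2\Lambda/\eta$ from \eqref{eq:b1-pointwise}, and the cut rule $\norm{\boldsymbol{\xi}_c-\boldsymbol{\xi}_t}\le\theta_h/2$, we get $\ip{\boldsymbol{\xi}_c}{\mathbf v_t}<2\Lambda/\eta+\theta_h/2\le\theta_h$ by \eqref{eq:b2-root-test}.

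The only delicate point is the bookkeeping in the cut count. The movement charged to a cut must be measured over the step range $[c_k,c_{k+1})$ inside a single run, so that ranges for different cuts do not overlap. Any stretches of $\boldsymbol{\xi}$-movement across $\mathcal T_-$ gaps are simply discarded, which is harmless since they only make the total smaller than the bound in \eqref{eq:b1-path-length}. When $U\le1$ every claim is trivial.
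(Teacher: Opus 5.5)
Your proposal is correct and follows the paper's proof essentially step for step. You use the same ingredients: maximal runs of $\mathcal T_+\setminus\{0\}$ separated by distinct steps of $\mathcal T_-$, and greedy cuts whenever $\boldsymbol{\xi}_t$ has moved more than $\theta_h/2$. Each cut is charged to the path length over its own piece via \eqref{eq:b1-path-length}, and condition (c) follows from the same Cauchy--Schwarz estimate with $S_t\ge\eta/2$. Your direct count $N$ of cuts is exactly the paper's bound on $M-G$.
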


\begin{proof}
Put $\mathcal A:=\mathcal T_+\setminus\{0\}$. If $\mathcal A=\varnothing$,
take $M=0$. Otherwise $U\ge2$, because
$\mathcal A\subseteq\{1,\ldots,U-1\}$.

\emph{Runs.} Let $\mathcal K_g=\{s_g,s_g+1,\ldots,e_g\}$, $1\le g\le G$, be
the maximal runs of consecutive integers in $\mathcal A$, in chronological
order. Between two consecutive runs lies a step of
$\{1,\ldots,U-1\}\setminus\mathcal T_+\subseteq\mathcal T_-$, and different
gaps contain different steps, so $G-1\le|\mathcal T_-|$.

\emph{Cut points.} For each $1\le g\le G$ set $c_{g,1}:=s_g$ and define
recursively
\begin{equation}
 c_{g,j+1}:=\min\bigg(\bigg\{t\in\{c_{g,j}+1,\ldots,e_g\}:
 \norm{\boldsymbol{\xi}_t-\boldsymbol{\xi}_{c_{g,j}}}>\frac{\theta_h}{2}\bigg\}
 \cup\{e_g+1\}\bigg).
 \label{eq:b2-root-cuts}
\end{equation}
As long as $c_{g,j}\le e_g$, every element of the set in
\eqref{eq:b2-root-cuts} is at least $c_{g,j}+1$, so
$c_{g,j}<c_{g,j+1}\le e_g+1$. Hence the value $e_g+1$ is reached after
finitely many applications of \eqref{eq:b2-root-cuts},
$m_g:=\min\{j\ge1:\ c_{g,j+1}=e_g+1\}$ is finite, and
$c_{g,j}\le e_g$ for $1\le j\le m_g$ by the minimality of $m_g$. The pieces
\[
 J_{g,j}:=\{c_{g,j},c_{g,j}+1,\ldots,c_{g,j+1}-1\}
 \qquad(1\le j\le m_g)
\]
are therefore nonempty and partition $\mathcal K_g$ in chronological order. A
crossing step $c_{g,j+1}\le e_g$ is the first step of the next piece, so no
step belongs to two pieces. Moreover,
\begin{equation}
 \norm{\boldsymbol{\xi}_t-\boldsymbol{\xi}_{c_{g,j}}}\le\frac{\theta_h}{2}
 \qquad(t\in J_{g,j}),
 \label{eq:b2-root-stay}
\end{equation}
because the left side vanishes at $t=c_{g,j}$, while a step $t$ with
$c_{g,j}<t<c_{g,j+1}$ lies in the index range of \eqref{eq:b2-root-cuts}
and precedes the minimum, so it fails the strict inequality there.

\emph{Counting.} For $1\le j<m_g$ we have $c_{g,j+1}\le e_g$, so
\eqref{eq:b2-root-cuts} and the triangle inequality give
\[
 \frac{\theta_h}{2}
 <\norm{\boldsymbol{\xi}_{c_{g,j+1}}-\boldsymbol{\xi}_{c_{g,j}}}
 \le\sum_{t=c_{g,j}}^{c_{g,j+1}-1}
 \norm{\boldsymbol{\xi}_{t+1}-\boldsymbol{\xi}_t}.
\]
The summation range is the piece $J_{g,j}$ itself, and the pieces are
pairwise disjoint subsets of $\{1,\ldots,U-1\}$. There are $M-G$ pairs
$(g,j)$ with $1\le j<m_g$, where $M:=\sum_{g=1}^{G}m_g$. Summing over these
pairs and applying \eqref{eq:b1-path-length},
\[
 \frac{(M-G)\theta_h}{2}
 \le\sum_{t=1}^{U-1}\norm{\boldsymbol{\xi}_{t+1}-\boldsymbol{\xi}_t}
 \le2\ln(4\gamma^{-2}).
\]
Hence
$M-G\le4\ln(4\gamma^{-2})/\theta_h\le\lceil4\ln(4\gamma^{-2})/\theta_h\rceil$,
and together with $G\le|\mathcal T_-|+1$ this is the bound on $M$.
Relabeling the $M$ pieces chronologically as $J_1,\ldots,J_M$ gives
\eqref{eq:b2-root-partition}.

\emph{The reference direction.} Fix $1\le i\le M$, put $c:=\min J_i$, and
let $t\in J_i$. By \eqref{eq:b2-average} with $a=0$, the vector
$\mathbf{a}_1^{(i)}=\boldsymbol{\xi}_c$ lies in the convex hull of the
signed examples, which is condition (a) of
Definition~\ref{def:b2-references}; condition (b) is void. Since $t\ge1$, we
have $S_t\ge S_1=\mu_0=\eta/2$, as computed before
\eqref{eq:b1-path-length}. Writing
$\boldsymbol{\xi}_c=\boldsymbol{\xi}_t+(\boldsymbol{\xi}_c-\boldsymbol{\xi}_t)$
and using the Cauchy--Schwarz inequality with $\norm{\mathbf{v}_t}\le1$,
then $\ip{\mathbf{w}_t}{\mathbf{v}_t}<\Lambda$ and \eqref{eq:b2-root-stay},
and finally \eqref{eq:b2-root-test},
\[
 \ip{\mathbf{a}_1^{(i)}}{\mathbf{v}_t}
 \le\frac{\ip{\mathbf{w}_t}{\mathbf{v}_t}}{S_t}
 +\norm{\boldsymbol{\xi}_c-\boldsymbol{\xi}_t}
 <\frac{2\Lambda}{\eta}+\frac{\theta_h}{2}
 \le\theta_h,
\]
which is condition (c).
\end{proof}

\paragraph{Refinement.}
At $h=h_*$ the next lemma is Lemma~\ref{lem:refine}(ii): the quantity $R_h$
is the $R$ of the main text, \eqref{eq:b2-partition} is \eqref{eq:R}, and
\eqref{eq:b2-separation} is \eqref{eq:separation}.

\begin{lemma}[General form of Lemma~\ref{lem:refine}(ii)]
\label{lem:b2-refine}
Let $I=\{a,a+1,\ldots,b\}\subseteq\mathcal T_+$ carry the
$\theta_h$-reference directions $\mathbf{a}_1,\ldots,\mathbf{a}_j$, let
$R_h:=1+\lceil4\Lambda/\theta_h\rceil$, and let $I_0$ be the set of steps
$t\in I$ with $S^{(a)}_t<2\Lambda/\theta_h$. There are an integer $m\ge0$
and nonempty sets $I_1,\ldots,I_m$ of consecutive steps such that
\begin{equation}
\begin{aligned}
 &I=I_0\mathbin{\dot\cup}I_1\mathbin{\dot\cup}\cdots\mathbin{\dot\cup}I_m,
 \qquad
 \max I_i<\min I_{i+1}\ \ (0\le i<m),\\
 &|I_0|\le\bigg\lceil\frac{4\Lambda}{\theta_h}\bigg\rceil\le R_h,
 \qquad
 m\le R_h;
\end{aligned}
 \label{eq:b2-partition}
\end{equation}
the case $m=0$ is allowed, with $I_0=I$. For $1\le i\le m$ put
$b_i:=\min I_i$ and
\begin{equation}
 \sigma_i:=\sum_{t=a}^{b_i-1}\mu_t=S^{(a)}_{b_i},
 \qquad
 \mathbf{a}_{j+1}^{(i)}
 :=\frac1{\sigma_i}\sum_{k=a}^{b_i-1}\mu_k\mathbf{v}_k
 =\boldsymbol{\zeta}_{b_i}.
 \label{eq:b2-new-direction}
\end{equation}
Then $\mathbf{a}_1,\ldots,\mathbf{a}_j,\mathbf{a}_{j+1}^{(i)}$ are
$\theta_h$-reference directions for $I_i$, and
\begin{equation}
 \sigma_i\ge\frac{2\Lambda}{\theta_h}\ \ (1\le i\le m),
 \qquad
 \ln\frac{\sigma_{i+1}}{\sigma_i}>\frac{\theta_h}{4}\ \ (1\le i<m).
 \label{eq:b2-separation}
\end{equation}
Moreover,
\begin{equation}
 R_h\le\frac{6\Lambda}{\theta_h}.
 \label{eq:b2-R-simple}
\end{equation}
\end{lemma}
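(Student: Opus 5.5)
The plan is to make the proof sketch of Lemma~\ref{lem:refine}(ii) precise, copying the cut-point machinery of Lemma~\ref{lem:b2-roots} with base point $a$ in place of $0$.

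\emph{The prefix.} Since $S^{(a)}_t$ is strictly increasing in $t$, the set $I_0$ is an initial segment of $I$. At each step of $I_0$ except the first, $S^{(a)}_t$ has grown by more than $1/2$, because $\mu_t>1/2$ on $\mathcal T_+$ by Lemma~\ref{lem:b1-basic}. Hence $(|I_0|-1)/2<2\Lambda/\theta_h$. This gives $|I_0|\le\lceil4\Lambda/\theta_h\rceil$ (checking the edge case of an exact integer), and so $|I_0|\le R_h$.

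\emph{The children.} On $I\setminus I_0=\{c,\ldots,b\}$ we have $S^{(a)}_t\ge2\Lambda/\theta_h$. Set $b_1:=c$ and define $b_{i+1}$ as the first $t\in\{b_i+1,\ldots,b\}$ with $\norm{\boldsymbol{\zeta}_t-\boldsymbol{\zeta}_{b_i}}>\theta_h/2$, or $b+1$ if there is none, exactly as in \eqref{eq:b2-root-cuts}. As in Lemma~\ref{lem:b2-roots}, this yields finitely many nonempty consecutive pieces with $\norm{\boldsymbol{\zeta}_t-\boldsymbol{\zeta}_{b_i}}\le\theta_h/2$ on $I_i$.

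\emph{Reference directions.} Each new direction $\boldsymbol{\zeta}_{b_i}$ lies in the convex hull by \eqref{eq:b2-average}, since $b_i>a$. The inherited inner products average: $\boldsymbol{\zeta}_{b_i}$ is a convex combination of the $\mathbf{v}_k$ with $k\in\{a,\ldots,b_i-1\}\subseteq I$, and each satisfies $\ip{\mathbf{a}_s}{\mathbf{v}_k}\le\theta_h$. For condition (c) at $t\in I_i$, split
\[
\ip{\boldsymbol{\zeta}_{b_i}}{\mathbf{v}_t}\le\ip{\boldsymbol{\zeta}_t}{\mathbf{v}_t}+\theta_h/2,
\]
which uses $\norm{\mathbf{v}_t}\le1$ (at $t=b_i$ the difference vanishes). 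Then bound
\[
\ip{\boldsymbol{\zeta}_t}{\mathbf{v}_t}=\bigl(\ip{\mathbf{w}_t}{\mathbf{v}_t}-\ip{\mathbf{w}_a}{\mathbf{v}_t}\bigr)/S^{(a)}_t .
\]
Here $\ip{\mathbf{w}_a}{\mathbf{v}_t}\ge0$ because $a\in\mathcal T_+$ makes every margin at $a$ nonnegative and $\mathbf{v}_t$ is a convex combination of the $\mathbf{z}_i$. Also $\ip{\mathbf{w}_t}{\mathbf{v}_t}<\Lambda$ and $S^{(a)}_t\ge2\Lambda/\theta_h$, so the whole expression is at most $\theta_h$.

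\emph{Separation and the count.} The bound $\sigma_i\ge2\Lambda/\theta_h$ holds by construction. For $i<m$, the crossing condition together with \eqref{eq:b2-average} gives
\[
\theta_h/2<\norm{\boldsymbol{\zeta}_{b_{i+1}}-\boldsymbol{\zeta}_{b_i}}\le2\ln(\sigma_{i+1}/\sigma_i),
\]
which is \eqref{eq:b2-separation}. Telescoping over the $m-1$ completed children gives
\[
(m-1)\theta_h/4<\ln(\sigma_m/\sigma_1)\le\ln\bigl(2\eta\gamma^{-2}\theta_h/(2\Lambda)\bigr)=\ln\bigl(\eta/(h\Lambda)\bigr)<\Lambda,
\]
using the budget \eqref{eq:b1-budget}. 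Hence $m<1+4\Lambda/\theta_h$, so $m\le R_h$.

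\emph{The bound on $R_h$.} Finally, $R_h\le2+4\Lambda/\theta_h\le6\Lambda/\theta_h$, because $\Lambda>10$ and $\theta_h<1$ give $2\le2\Lambda/\theta_h$.

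The main obstacle is condition (c) for the new direction, specifically the nonnegativity of $\ip{\mathbf{w}_a}{\mathbf{v}_t}$. This is where the requirement that the interval start in $\mathcal T_+$ is essential. The rest of the argument is bookkeeping of off-by-one issues in the cut points and prefix size.
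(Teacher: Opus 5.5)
Your proposal is correct and follows the paper's proof essentially step for step: the prefix bound from $\mu_t>1/2$, the cut rule on $\boldsymbol{\zeta}_t$ measured from the parent's first step $a$, the local radial bound that uses $\ip{\mathbf{w}_a}{\mathbf{v}_t}\ge0$ from $a\in\mathcal T_+$, and the telescoped separation checked against the budget $2\eta\gamma^{-2}$. One small point: the telescoped inequality should read $(m-1)\theta_h/4\le\ln(\sigma_m/\sigma_1)$, because for $m=1$ both sides vanish; the conclusion $m\le R_h$ is unaffected.
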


\begin{proof}
Since $\lceil x\rceil<x+1$ for every real $x$ and $\theta_h<1<\Lambda$,
we have $R_h<2+4\Lambda/\theta_h<6\Lambda/\theta_h$, which is
\eqref{eq:b2-R-simple}. We also use twice that an integer smaller than
$1+x$ is at most $\lceil x\rceil$.

\emph{The prefix.} Since every $\mu_k$ is positive, $t\mapsto S^{(a)}_t$ is
increasing on $\{a,\ldots,b+1\}$, and $S^{(a)}_a=0$. Hence
$I_0=\{a,\ldots,b_1-1\}$, where
\begin{equation}
 b_1:=\min\bigg(\bigg\{t\in\{a+1,\ldots,b\}:
 S^{(a)}_t\ge\frac{2\Lambda}{\theta_h}\bigg\}\cup\{b+1\}\bigg).
 \label{eq:b2-threshold}
\end{equation}
The step $b_1-1$ belongs to $I_0$, and $\mu_k>1/2$ for $a\le k\le b_1-2$,
so
\[
 \frac{|I_0|-1}{2}=\frac{b_1-1-a}{2}
 \le S^{(a)}_{b_1-1}<\frac{2\Lambda}{\theta_h}.
\]
The coefficient in this display is measured before the last step of the
prefix. That step may carry the sum past the threshold, so
$\sum_{t\in I_0}\mu_t$ itself need not be smaller than $2\Lambda/\theta_h$,
and the argument does not use such a bound. The display gives
$|I_0|<1+4\Lambda/\theta_h$, hence $|I_0|\le\lceil4\Lambda/\theta_h\rceil\le R_h$.
If $b_1=b+1$, take $m=0$; then $I_0=I$, \eqref{eq:b2-partition} holds, and
the remaining assertions are void. Assume from now on that $b_1\le b$.

\emph{The local radial bound.} Let $b_1\le t\le b$. Every signed margin at
the step $a\in\mathcal T_+$ is nonnegative and $\mathbf{v}_t$ is a convex
combination of the signed examples, so
$\ip{\mathbf{w}_a}{\mathbf{v}_t}\ge0$. Together with
$\ip{\mathbf{w}_t}{\mathbf{v}_t}<\Lambda$ and
$S^{(a)}_t\ge S^{(a)}_{b_1}\ge2\Lambda/\theta_h$, this gives
\begin{equation}
 \ip{\boldsymbol{\zeta}_t}{\mathbf{v}_t}
 =\frac{\ip{\mathbf{w}_t}{\mathbf{v}_t}-\ip{\mathbf{w}_a}{\mathbf{v}_t}}{S^{(a)}_t}
 \le\frac{\ip{\mathbf{w}_t}{\mathbf{v}_t}}{S^{(a)}_t}
 <\frac{\Lambda}{S^{(a)}_t}
 \le\frac{\theta_h}{2}
 \qquad(b_1\le t\le b),
 \label{eq:b2-half}
\end{equation}
which is \eqref{eq:local-radial}.

\emph{The children.} Starting from $b_1$, define recursively
\begin{equation}
 b_{i+1}:=\min\bigg(\bigg\{t\in\{b_i+1,\ldots,b\}:
 \norm{\boldsymbol{\zeta}_t-\boldsymbol{\zeta}_{b_i}}>\frac{\theta_h}{2}\bigg\}
 \cup\{b+1\}\bigg),
 \label{eq:b2-cuts}
\end{equation}
let $m:=\min\{i\ge1:\ b_{i+1}=b+1\}$, which is finite because
$b_i<b_{i+1}\le b+1$ as long as $b_i\le b$, and put
\[
 I_i:=\{b_i,b_i+1,\ldots,b_{i+1}-1\}
 \qquad(1\le i\le m).
\]
Since $b_i\le b$ for $1\le i\le m$ by the minimality of $m$, these sets are
nonempty, satisfy $\min I_i=b_i$, and partition
$\{b_1,\ldots,b\}=I\setminus I_0$ in chronological order, which gives the
first line of \eqref{eq:b2-partition}. A crossing step $b_{i+1}\le b$ is the
first step of $I_{i+1}$ and does not belong to $I_i$, so no step is counted
twice. As for \eqref{eq:b2-root-stay}, the minimality in \eqref{eq:b2-cuts}
gives
\begin{equation}
 \norm{\boldsymbol{\zeta}_t-\boldsymbol{\zeta}_{b_i}}\le\frac{\theta_h}{2}
 \qquad(t\in I_i,\ 1\le i\le m).
 \label{eq:b2-stay}
\end{equation}

\emph{Separation and the number of children.} By monotonicity and
\eqref{eq:b2-threshold},
$\sigma_i=S^{(a)}_{b_i}\ge S^{(a)}_{b_1}\ge2\Lambda/\theta_h$ for every
$1\le i\le m$. For $1\le i<m$ we have $b_{i+1}\le b$, so
\eqref{eq:b2-cuts} and \eqref{eq:b2-average} give
\[
 \frac{\theta_h}{2}
 <\norm{\boldsymbol{\zeta}_{b_{i+1}}-\boldsymbol{\zeta}_{b_i}}
 \le2\ln\frac{S^{(a)}_{b_{i+1}}}{S^{(a)}_{b_i}}
 =2\ln\frac{\sigma_{i+1}}{\sigma_i},
\]
which completes \eqref{eq:b2-separation}. Summing over $1\le i<m$, the
logarithms telescope. Since $b_m\le b<U$, the budget gives
$\sigma_m\le\sum_{k<U}\mu_k\le2\eta\gamma^{-2}$, and
$\sigma_1\ge2\Lambda/\theta_h$, so
\[
 \frac{(m-1)\theta_h}{4}
 \le\ln\frac{\sigma_m}{\sigma_1}
 \le\ln\bigg(2\eta\gamma^{-2}\cdot\frac{\theta_h}{2\Lambda}\bigg)
 =\ln\frac{\eta}{h\Lambda}
 <\Lambda,
\]
where the equality is $\gamma^{-2}\theta_h=1/h$ and the last inequality uses
$h\Lambda>1$. Hence $m<1+4\Lambda/\theta_h$, so
$m\le\lceil4\Lambda/\theta_h\rceil\le R_h$, which completes
\eqref{eq:b2-partition}.

\emph{The reference directions of a child.} Fix $1\le i\le m$. The new
direction $\mathbf{a}_{j+1}^{(i)}=\boldsymbol{\zeta}_{b_i}$ of
\eqref{eq:b2-new-direction} lies in the convex hull of the signed examples
by \eqref{eq:b2-average}, which is condition (a) of
Definition~\ref{def:b2-references}. The inherited
directions satisfy conditions (a) and (b) among themselves by hypothesis,
and condition (c) on $I_i$ because $I_i\subseteq I$. For $1\le s\le j$,
condition (c) on $I$ gives $\ip{\mathbf{a}_s}{\mathbf{v}_k}\le\theta_h$
for $a\le k<b_i$, and the weights $\mu_k/\sigma_i$ are positive and sum to
one, so
\[
 \ip{\mathbf{a}_s}{\mathbf{a}_{j+1}^{(i)}}
 =\frac1{\sigma_i}\sum_{k=a}^{b_i-1}\mu_k\ip{\mathbf{a}_s}{\mathbf{v}_k}
 \le\theta_h.
\]
Thus the inherited inner-product bounds pass through the convex average
exactly, and no error accumulates with the depth. For $t\in I_i$ we have
$b_1\le b_i\le t\le b$, and the Cauchy--Schwarz inequality with
$\norm{\mathbf{v}_t}\le1$, \eqref{eq:b2-half}, and \eqref{eq:b2-stay} give
\[
 \ip{\mathbf{a}_{j+1}^{(i)}}{\mathbf{v}_t}
 \le\ip{\boldsymbol{\zeta}_t}{\mathbf{v}_t}
 +\norm{\boldsymbol{\zeta}_{b_i}-\boldsymbol{\zeta}_t}
 <\frac{\theta_h}{2}+\frac{\theta_h}{2}=\theta_h.
\]
The last two displays are \eqref{eq:caps}, and they verify conditions (b)
and (c) for the new direction. Hence
$\mathbf{a}_1,\ldots,\mathbf{a}_j,\mathbf{a}_{j+1}^{(i)}$ are
$\theta_h$-reference directions for $I_i$.
\end{proof}

All the $\sigma_i$ in \eqref{eq:b2-separation} are measured from the first
step $a$ of the parent, and Appendix~\ref{app:packing} uses the separation
in this form.

\paragraph{Capacity and terminal intervals.}
At $h=h_*$, parts (i) and (ii) of the next lemma are
Lemmas~\ref{lem:capacity} and~\ref{lem:terminal}, which are proved in full
in Section~\ref{sec:hierarchy}.

\begin{lemma}[General form of Lemmas~\ref{lem:capacity} and~\ref{lem:terminal}]
\label{lem:b2-capacity}\label{lem:b2-terminal}
\textup{(i)} If
$\mathbf{b}_1,\ldots,\mathbf{b}_m\in\conv\{\mathbf{z}_1,\ldots,\mathbf{z}_n\}$
satisfy $\ip{\mathbf{b}_i}{\mathbf{b}_k}\le\theta_h$ for all $i\ne k$, then
\begin{equation}
 m\le\min\bigg\{r,\ \bigg\lfloor\frac{1-\theta_h}{\gamma^{2}-\theta_h}\bigg\rfloor\bigg\}
 \le\nu_h+2.
 \label{eq:b2-capacity}
\end{equation}
\textup{(ii)} If an interval $I\subseteq\mathcal T_+$ carries $\nu_h+1$
reference directions in the sense of Definition~\ref{def:b2-references},
then
\[
 |I|\le\bigg\lceil\frac{4}{\theta_h}\bigg\rceil
 \le\frac{6}{\theta_h}=6h\gamma^{-2}.
\]
\end{lemma}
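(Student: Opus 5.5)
The plan is to repeat, with $\theta_h$ in place of $\theta$, the arguments already given for Lemmas~\ref{lem:capacity} and~\ref{lem:terminal}. The only new points are the identity \eqref{eq:floor-identity}, which ties the floor to $\nu_h$, and the bound $\lceil4/\theta_h\rceil\le6/\theta_h$.

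For part (i), first show linear independence. Every $\mathbf{b}_i$ lies in the convex hull, so \eqref{eq:b1-hull} gives $\norm{\mathbf{b}_i}\le1$ and $\ip{\mathbf{b}_i}{\wstar}\ge\gamma$. Take a nontrivial relation $\sum_ic_i\mathbf{b}_i=\mathbf{0}$. A nonzero nonnegative combination has positive inner product with $\wstar$, so both signs occur, and we can split the relation as $\mathbf{u}=\sum_{c_i>0}c_i\mathbf{b}_i=\sum_{c_k<0}(-c_k)\mathbf{b}_k$, with positive weight sums $a$ and $b$. Projecting onto $\wstar$ gives $\norm{\mathbf{u}}^2\ge\gamma^2\max\{a^2,b^2\}\ge\gamma^2ab$. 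The two index sets are disjoint, so pairing the two representations gives $\norm{\mathbf{u}}^2\le\theta_hab$. Since $\theta_h<\gamma^2$, these two bounds contradict each other.

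Hence the $\mathbf{b}_i$ are linearly independent, and they lie in the span of the $\mathbf{z}_i$, so $m\le r$. Next, $m\gamma\le\ip{\sum_i\mathbf{b}_i}{\wstar}\le\norm{\sum_i\mathbf{b}_i}$. Squaring and expanding gives $m^2\gamma^2\le m+m(m-1)\theta_h$, that is, $m(\gamma^2-\theta_h)\le1-\theta_h$. Since $\gamma^2-\theta_h>0$ and $m$ is an integer, $m\le\lfloor(1-\theta_h)/(\gamma^2-\theta_h)\rfloor$. By \eqref{eq:floor-identity} this floor is the one in \eqref{eq:depth}, so the minimum is at most $\max\{2,\text{min}\}\le\nu_h+2$, because $\nu_h=\max\{0,\min-2\}$.

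For part (ii), argue by contradiction. Suppose two distinct steps $s,t\in I$ had $\ip{\mathbf{v}_s}{\mathbf{v}_t}\le\theta_h$. Together with the $\nu_h+1$ reference directions, conditions (a)--(c) of Definition~\ref{def:b2-references} would give $\nu_h+3$ vectors of the convex hull with pairwise inner products at most $\theta_h$. Here $\mathbf{v}_s,\mathbf{v}_t$ lie in the hull by Lemma~\ref{lem:b1-update}. This contradicts (i).

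So consecutive steps $t,t+1\in I$ satisfy $\ip{\mathbf{v}_t}{\mathbf{v}_{t+1}}>\theta_h$. Convexity of $L$ at $\mathbf{w}_{t+1}$ and \eqref{eq:b1-gradient} give $L(\mathbf{w}_t)-L(\mathbf{w}_{t+1})\ge(\mu_t\mu_{t+1}/\eta)\ip{\mathbf{v}_t}{\mathbf{v}_{t+1}}$. Combining this with $\mu_t\ge\eta L(\mathbf{w}_t)/2$ from \eqref{eq:b1-pointwise} yields
\[
 \frac{1}{\eta L(\mathbf{w}_{t+1})}-\frac{1}{\eta L(\mathbf{w}_t)}\ge\frac{\theta_h}{4}.
\]
Every step of $I$ precedes $\tau$, so each reciprocal lies in $(0,1)$. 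Telescoping over the $|I|-1$ consecutive pairs gives $(|I|-1)\theta_h/4<1$, hence $|I|<1+4/\theta_h$ and $|I|\le\lceil4/\theta_h\rceil$.

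Finally, $\lceil4/\theta_h\rceil<4/\theta_h+1\le6/\theta_h$ because $\theta_h<1$, and $6/\theta_h=6h\gamma^{-2}$. I do not see a real obstacle in either part. The only subtle point is the disjointness of the positive and negative index sets in the linear-independence step, which is what makes the pairing bound $\theta_hab$ valid.
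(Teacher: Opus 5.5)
Your proposal is correct and follows the paper's own proof. Like the paper, you rerun the arguments of Lemmas~\ref{lem:capacity} and~\ref{lem:terminal} with $\theta_h$ in place of $\theta$: linear independence via the $\wstar$-projection versus pairing bound, then $m(\gamma^2-\theta_h)\le1-\theta_h$, then telescoping $1/(\eta L(\mathbf{w}_t))$ over consecutive steps, using \eqref{eq:floor-identity} to match the floor to $\nu_h$ and $\theta_h<1$ for $\lceil4/\theta_h\rceil\le6/\theta_h$.
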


\begin{proof}
The proofs of Lemmas~\ref{lem:capacity} and~\ref{lem:terminal} use $\theta$
only through $0<\theta<\gamma^2$ and, in the last step of
Lemma~\ref{lem:capacity}, through the identity \eqref{eq:floor-identity},
which identifies the floor in \eqref{eq:b2-capacity} with the floor in the
definition \eqref{eq:depth} of $\nu_h$, so that, by \eqref{eq:depth}, the
minimum in \eqref{eq:b2-capacity} is at most $\nu_h+2$. The facts about the
trajectory used in the proof of Lemma~\ref{lem:terminal} are the gradient
identity of Lemma~\ref{lem:b1-update} and the bounds
$\mu_t\ge\eta L(\mathbf{w}_t)/2$ and $L(\mathbf{w}_t)>1/\eta$ on
$\mathcal T_+$ of Lemma~\ref{lem:b1-basic}, none of which involves $h$. Both
proofs therefore apply with $\theta_h$ and $\nu_h$ in place of $\theta$ and
$\nu$, the second one using part~(i), and give \eqref{eq:b2-capacity} and
$|I|\le\lceil4/\theta_h\rceil$; finally, $\theta_h<1$ gives
$\lceil4/\theta_h\rceil<4/\theta_h+1<6/\theta_h=6h\gamma^{-2}$.
\end{proof}

A root interval carries one reference direction, and each application of
Lemma~\ref{lem:b2-refine} adds one. An interval carrying $\nu_h+1$
reference directions is called \emph{terminal}; it is not refined further,
and Lemma~\ref{lem:b2-terminal}(ii), whose proof rests on the capacity
bound of Lemma~\ref{lem:b2-capacity}(i), bounds its length.
Appendix~\ref{app:packing} organizes the resulting intervals into trees and
counts them.

\subsection{Counting intervals}
\label{app:packing}

This part proves, for a general integer $h\ge2$, the inequality
\eqref{eq:packing} for the children of an interval, the descendant count of Lemma~\ref{lem:factorial}, and
the per-root count \eqref{eq:per-root}, which
Appendix~\ref{app:completion} adds over the root intervals of
Lemma~\ref{lem:b2-roots}. Throughout this part, $\eta\ge4$ and
$\Lambda=\ln\eta>10$, and we use the bound
$0\le\nu_h<2\gamma^{-2}$ of \eqref{eq:depth-range}.

\paragraph{The tree of a root interval.}
Let $J$ be a nonempty set of consecutive steps in $\mathcal T_+$ carrying
one $\theta_h$-reference direction (Definition~\ref{def:b2-references}),
for instance a root interval of Lemma~\ref{lem:b2-roots}. Recursive
refinement of $J$ produces a rooted tree. A node is an interval together
with its reference directions, and the root $J$ has depth zero. A terminal
node, that is, a node carrying $\nu_h+1$ reference directions
(Appendix~\ref{app:refinement}), is not refined. A node $I$ carrying at
most $\nu_h$ reference directions is
partitioned by Lemma~\ref{lem:b2-refine} into a prefix $I_0$ and children
$I_1,\ldots,I_m$ as in \eqref{eq:b2-partition}, and each child, together
with the reference directions of $I$ and the new one supplied by the lemma,
is a node at the next depth. We write
$\operatorname{succ}(I):=\{I_1,\ldots,I_m\}$ for such a node and
$\operatorname{succ}(I):=\varnothing$ for a terminal one. The case $m=0$ is
allowed; then $I_0=I$, $\operatorname{succ}(I)=\varnothing$, and that
branch ends. The root carries one reference direction and each child gains
exactly one, so a node at depth $j$ carries exactly $j+1$ of them. Hence
the nodes at depth $\nu_h$ are exactly the terminal ones, every node at
smaller depth is refined, and when $\nu_h=0$ the tree is the root alone.

For a node $I$ and an integer $j\ge0$ let $D_j(I)$ be the number of
descendants of $I$ at depth $j$ below $I$. Thus $D_0(I)=1$ and
\begin{equation}
 D_{j+1}(I)=\sum_{I'\in\operatorname{succ}(I)}D_j(I')
 \qquad(j\ge0),
 \label{eq:b3-recursion}
\end{equation}
where an empty sum is zero.

For a nonempty interval $I$ of the construction define its coefficient and
its logarithmic size by
\begin{equation}
 \mu(I):=\sum_{t\in I}\mu_t,
 \qquad
 \chi_h(I):=\max\bigg\{0,\ \ln\frac{\theta_h\mu(I)}{2\Lambda}\bigg\},
 \label{eq:b3-chi}
\end{equation}
which is \eqref{eq:chi} with $\theta=\theta_h$; the normalization is the
coefficient threshold $2\Lambda/\theta_h$ of Lemma~\ref{lem:b2-refine}.
Since $I\subseteq\{0,\ldots,U-1\}$ and every $\mu_t$ is positive, the first
bound in \eqref{eq:b1-budget} (Lemma~\ref{lem:b1-basic}, which requires
$\eta\ge4$) gives
$\mu(I)\le2\eta\gamma^{-2}$, so
$\theta_h\mu(I)/(2\Lambda)
\le(\gamma^{2}/h)(2\eta\gamma^{-2})/(2\Lambda)=\eta/(h\Lambda)$, and
$h\Lambda\ge1$ gives
\begin{equation}
 0\le\chi_h(I)\le\max\bigg\{0,\ \ln\frac{\eta}{h\Lambda}\bigg\}
 \le\ln\eta=\Lambda.
 \label{eq:b3-chi-range}
\end{equation}
For integers $k\ge0$ let $E_k(s):=\sum_{i=0}^{k}s^{i}/i!$ be the
truncated exponential. On $[0,\infty)$ every $E_k$ is positive and
nondecreasing, and $E_k\le E_{k+1}$. When $\nu_h\ge1$ put
$\alpha:=2\nu_h/\Lambda$.

\paragraph{Counting siblings.}
By \eqref{eq:b2-separation}, the children of a node start at coefficients
whose logarithms, measured on the coefficient axis of the parent, are more
than $\theta_h/4$ apart (Figure~\ref{fig:hierarchy}(b)). Counting the siblings jointly along this axis gives
the next lemma, which is \eqref{eq:packing} with $\theta=\theta_h$ and
$\nu=\nu_h$.

\begin{lemma}[Bound on the children of an interval]
\label{lem:b3-packing}
Let $\nu_h\ge1$, and let $I=\{a,\ldots,b\}$ be a nonterminal node with
children $I_1,\ldots,I_m$ given by Lemma~\ref{lem:b2-refine}. Then, for
every integer $k\ge0$,
\begin{equation}
 \sum_{i=1}^{m}E_k\big(\alpha\chi_h(I_i)\big)
 \le\frac{6}{\alpha\theta_h}\,E_{k+1}\big(\alpha\chi_h(I)\big).
 \label{eq:b3-packing}
\end{equation}
\end{lemma}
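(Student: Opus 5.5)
Set $f(u):=E_k(\alpha u)$, which is nonnegative and nondecreasing on $[0,\infty)$. The plan has three steps: bound each child's logarithmic size $\chi_h(I_i)$ by the log-position of the \emph{next} child's start on the parent's coefficient axis; compare the resulting sum with an integral using the separation \eqref{eq:b2-separation}; then evaluate the integral of the truncated exponential.

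\emph{Step 1 (locating the children).} Set $u_i:=\ln(\theta_h\sigma_i/(2\Lambda))$ for $1\le i\le m$. By \eqref{eq:b2-separation}, $u_i\ge0$ and $u_{i+1}-u_i>\theta_h/4$. Since $\sigma_i=S^{(a)}_{b_i}$ and $I_i=\{b_i,\ldots,b_{i+1}-1\}$, we get $\mu(I_i)=\sigma_{i+1}-\sigma_i\le\sigma_{i+1}$ for $i<m$. Hence $\chi_h(I_i)\le u_{i+1}$, because $u_{i+1}\ge0$ and the map $x\mapsto\max\{0,\ln(\theta_hx/(2\Lambda))\}$ is nondecreasing. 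Also $\sigma_m\le\sigma_m+\mu(I_m)=\mu(I)$ (the prefix contributes to $\sigma_m$), so $u_m\le\chi_h(I)$, and every $u_i\le\chi_h(I)$. Finally $\mu(I_m)\le\mu(I)$ gives $\chi_h(I_m)\le\chi_h(I)$. By monotonicity of $f$,
\[
 \sum_{i=1}^m f(\chi_h(I_i))\le f(\chi_h(I))+\sum_{i=2}^m f(u_i).
\]

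\emph{Step 2 (comparison with an integral).} The intervals $[u_i,u_i+\theta_h/4]$ for $2\le i\le m$ are pairwise disjoint and contained in $[0,\chi_h(I)+\theta_h/4]$. Since $f$ is nondecreasing, $f(u_i)\le(4/\theta_h)\int_{u_i}^{u_i+\theta_h/4}f$, and summing gives \eqref{eq:riemann}. The antiderivative identity $\frac{d}{du}E_{k+1}(\alpha u)=\alpha E_k(\alpha u)$ then gives
\[
 \int_0^{v}E_k(\alpha u)\,du=\frac{E_{k+1}(\alpha v)-1}{\alpha}\le\frac{E_{k+1}(\alpha v)}{\alpha}.
\]
Take $v=\chi_h(I)+\theta_h/4$ and shift the argument. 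The binomial expansion gives $E_{k+1}(x+y)\le e^{y}E_{k+1}(x)$ for $x,y\ge0$, since each term $(x+y)^i/i!=\sum_{l}x^{i-l}y^l/((i-l)!\,l!)$ with $i-l\le k+1$. Together with $E_k\le E_{k+1}$ for the first term, the right side is at most
\[
 \Big(1+\frac{4e^{\alpha\theta_h/4}}{\alpha\theta_h}\Big)E_{k+1}(\alpha\chi_h(I)).
\]

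\emph{Step 3 (numerics).} The constant must be at most $6/(\alpha\theta_h)$, i.e.\ $\alpha\theta_h+4e^{\alpha\theta_h/4}\le6$. We have $\alpha\theta_h=2\nu_h\gamma^2/(h\Lambda)$, and \eqref{eq:depth-range} gives $\nu_h\gamma^2<2$. Hence $\alpha\theta_h<4/(h\Lambda)<1/5$ using $h\ge2$ and $\Lambda>10$, and then $0.2+4e^{0.05}<4.5\le6$. I expect the main obstacle to be Step 1, specifically the index shift. Each child must be charged to the next child's start, the last child is handled separately by $\chi_h(I)$, and one must check that all $u_i$, including $u_m$, stay below $\chi_h(I)$ so that the integration range is correct. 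This uses that $\sigma_m$ already includes the prefix coefficient and that $\mu(I)$ is at least $\sigma_m$.
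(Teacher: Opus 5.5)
Your proposal is correct and follows the paper's proof essentially step for step. You charge each completed child to the next start $u_{i+1}$ and the last child to $\chi_h(I)$, make the disjoint-interval comparison \eqref{eq:b3-riemann}, integrate $E_k$, and bound $\alpha\theta_h<1/5$; the only cosmetic differences are that you prove $E_{k+1}(x+y)\le e^{y}E_{k+1}(x)$ by binomial expansion rather than via the monotonicity of $e^{-s}E_{k+1}(s)$, and that you leave the trivial case $m=0$ implicit (the left side is then an empty sum and the right side is positive).
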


\begin{proof}
If $m=0$, the left side is zero and the right side is positive. Assume
$m\ge1$, and let $b_i:=\min I_i$ and $\sigma_i:=\sum_{t=a}^{b_i-1}\mu_t$
for $1\le i\le m$, as in Lemma~\ref{lem:b2-refine}.

By
\eqref{eq:b2-partition}, the sets $I_0,I_1,\ldots,I_m$ are consecutive
pieces of the consecutive set $I$ with $\max I_i<\min I_{i+1}$, so
$I_i=\{b_i,\ldots,b_{i+1}-1\}$ for $1\le i<m$. Hence a completed child
satisfies
\[
 \mu(I_i)=\sigma_{i+1}-\sigma_i\le\sigma_{i+1}
 \qquad(1\le i<m),
\]
while the last child satisfies only $\mu(I_m)\le\mu(I)$, because
$I_m\subseteq I$ and every $\mu_t$ is positive. Define
\[
 u_j:=\ln\frac{\theta_h\sigma_j}{2\Lambda}
 \qquad(2\le j\le m).
\]
The first part of \eqref{eq:b2-separation} gives $u_j\ge0$, and the second
part gives $u_{j+1}-u_j=\ln(\sigma_{j+1}/\sigma_j)>\theta_h/4$ for
$2\le j<m$. Since $\{a,\ldots,b_j-1\}\subseteq I$, we have
$\sigma_j\le\mu(I)$, and therefore $u_j\le\chi_h(I)$. By
\eqref{eq:b3-chi}, $\chi_h$ is a nondecreasing function of the coefficient,
so the two bounds on $\mu(I_i)$ give
\[
 \chi_h(I_i)\le\max\{0,u_{i+1}\}=u_{i+1}\quad(1\le i<m),
 \qquad
 \chi_h(I_m)\le\chi_h(I).
\]

Let
$f\colon[0,\infty)\to[0,\infty)$ be nondecreasing, hence integrable on
bounded intervals. The intervals $[u_j,u_j+\theta_h/4]$, $2\le j\le m$,
are pairwise disjoint, because consecutive left endpoints differ by more
than $\theta_h/4$, and they are contained in
$[0,\chi_h(I)+\theta_h/4]$. Hence the bounds on $\chi_h(I_i)$, the
monotonicity of $f$, and
$(\theta_h/4)f(u_j)\le\int_{u_j}^{u_j+\theta_h/4}f(u)\,du$ give
\begin{equation}
 \sum_{i=1}^{m}f\big(\chi_h(I_i)\big)
 \le f\big(\chi_h(I)\big)+\sum_{j=2}^{m}f(u_j)
 \le f\big(\chi_h(I)\big)
 +\frac{4}{\theta_h}\int_{0}^{\chi_h(I)+\theta_h/4}f(u)\,du,
 \label{eq:b3-riemann}
\end{equation}
which is \eqref{eq:riemann} with $\theta=\theta_h$.

Take $f(u):=E_k(\alpha u)$, which is
nonnegative and nondecreasing on $[0,\infty)$. Since $E_{k+1}'=E_k$ and
$E_{k+1}(0)=1$,
\[
 \int_{0}^{v}E_k(\alpha u)\,du=\frac{E_{k+1}(\alpha v)-1}{\alpha}
 \qquad(v\ge0).
\]
Moreover $E_{k+1}'=E_k\le E_{k+1}$ on $[0,\infty)$, so
$\frac{d}{ds}\big(e^{-s}E_{k+1}(s)\big)=e^{-s}\big(E_k(s)-E_{k+1}(s)\big)\le0$
and $s\mapsto e^{-s}E_{k+1}(s)$ is nonincreasing there, that is,
\begin{equation}
 E_{k+1}(u+v)\le e^{v}E_{k+1}(u)
 \qquad(u,v\ge0).
 \label{eq:b3-shift}
\end{equation}
We evaluate the integral at the upper limit $\chi_h(I)+\theta_h/4$,
substitute it into \eqref{eq:b3-riemann}, drop the term
$-4/(\alpha\theta_h)<0$, and apply $E_k\le E_{k+1}$ and
\eqref{eq:b3-shift} with $u=\alpha\chi_h(I)$ and $v=\alpha\theta_h/4$. This
gives
\begin{align*}
 \sum_{i=1}^{m}E_k\big(\alpha\chi_h(I_i)\big)
 &\le E_k\big(\alpha\chi_h(I)\big)
 +\frac{4}{\alpha\theta_h}
 E_{k+1}\bigg(\alpha\chi_h(I)+\frac{\alpha\theta_h}{4}\bigg)\\
 &\le\frac{\alpha\theta_h+4e^{\alpha\theta_h/4}}{\alpha\theta_h}\,
 E_{k+1}\big(\alpha\chi_h(I)\big).
\end{align*}

By $\nu_h<2\gamma^{-2}$, $h\ge2$, and $\Lambda>10$, we
have $0<\alpha\theta_h=2\nu_h\gamma^{2}/(h\Lambda)<4/(h\Lambda)<1/5$. Since
$e^{s}\le(1-s)^{-1}$ for $0\le s<1$, we get
$e^{\alpha\theta_h/4}<(1-1/20)^{-1}<5/4$, hence
$\alpha\theta_h+4e^{\alpha\theta_h/4}<1/5+5<6$, and \eqref{eq:b3-packing}
follows.
\end{proof}

\paragraph{Counting descendants.}
Iterating Lemma~\ref{lem:b3-packing} down the tree gives the general form
of Lemma~\ref{lem:factorial}, which is the case $h=h_*$.

\begin{lemma}[General form of Lemma~\ref{lem:factorial}]
\label{lem:b3-factorial}
Let $\nu_h\ge1$. For every node $I$ of the construction and every
$0\le j\le\nu_h$,
\begin{equation}
 D_j(I)\le2\,\frac{(6\Lambda/\theta_h)^{j}}{j!}
 =2\,\frac{(6h\gamma^{-2}\Lambda)^{j}}{j!}.
 \label{eq:b3-factorial}
\end{equation}
\end{lemma}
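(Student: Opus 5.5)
The plan is to prove the intermediate bound
\[
 D_j(I)\le\Big(\frac{6}{\alpha\theta_h}\Big)^{j}E_j\big(\alpha\chi_h(I)\big)
\]
for every node $I$ and every $0\le j\le\nu_h$. We argue by induction on $j$, simultaneously for all nodes, and then convert the truncated exponential into the factorial form.

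For $j=0$ both sides are at least one, since $D_0(I)=1$ and $E_0\equiv1$.

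For the inductive step, assume the bound at level $j<\nu_h$ for all nodes.
\begin{itemize}
\item If $I$ is terminal or has no children, then $D_{j+1}(I)=0$ and there is nothing to show. A node at depth below $\nu_h$ in the tree is never terminal, but the empty case covers this anyway.
\item Otherwise, apply the recursion \eqref{eq:b3-recursion} and the inductive hypothesis to each child. This gives
\[
 D_{j+1}(I)\le\Big(\frac{6}{\alpha\theta_h}\Big)^{j}\sum_{i}E_j\big(\alpha\chi_h(I_i)\big).
\]
Lemma~\ref{lem:b3-packing} with $k=j$ bounds the sum by $\frac{6}{\alpha\theta_h}E_{j+1}(\alpha\chi_h(I))$, which closes the induction.
\end{itemize}
The hypothesis $\nu_h\ge1$ ensures that $\alpha=2\nu_h/\Lambda$ is positive, so the lemma applies.

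To pass to \eqref{eq:b3-factorial}, use that $E_j$ is nondecreasing and that $\chi_h(I)\le\Lambda$ by \eqref{eq:b3-chi-range}. This gives $E_j(\alpha\chi_h(I))\le E_j(\alpha\Lambda)=E_j(2\nu_h)$. Write $s:=2\nu_h$. The terms $s^i/i!$ for $i\le j$ satisfy
\[
 \frac{s^{i-1}/(i-1)!}{s^i/i!}=\frac{i}{s}\le\frac{j}{2\nu_h}\le\frac12 .
\]
Reading from the top term down, each term is therefore at most half the next one. Hence $E_j(s)\le 2s^j/j!=2(\alpha\Lambda)^j/j!$.

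Combining the pieces,
\[
 D_j(I)\le\Big(\frac{6}{\alpha\theta_h}\Big)^{j}\cdot\frac{2(\alpha\Lambda)^{j}}{j!}=2\,\frac{(6\Lambda/\theta_h)^{j}}{j!}.
\]
The factors of $\alpha$ cancel. Finally, $1/\theta_h=h\gamma^{-2}$ gives the second form in \eqref{eq:b3-factorial}.

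The only delicate point is setting up the induction correctly. It has to hold uniformly over all nodes, with the statement at level $j$ expressed through $E_j(\alpha\chi_h(\cdot))$ rather than through the factorial bound itself. Otherwise the packing lemma cannot be chained. This is also why the monotone replacement $\chi_h\le\Lambda$ is postponed to the end. The choice $\alpha=2\nu_h/\Lambda$ is exactly what makes the geometric-decay step valid for all $j\le\nu_h$.
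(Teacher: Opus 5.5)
Your proof is correct and follows the paper's argument essentially verbatim. Both prove the weighted bound $D_j(I)\le(6/(\alpha\theta_h))^{j}E_j(\alpha\chi_h(I))$ by induction simultaneously over all nodes, using the recursion for $D_{j+1}$ and Lemma~\ref{lem:b3-packing} with $k=j$, and then conclude via $\chi_h(I)\le\Lambda$ and the factor-two geometric decay guaranteed by $\alpha\Lambda=2\nu_h\ge2j$. The only cosmetic differences are that the paper runs the induction for all $j\ge0$ rather than $j\le\nu_h$, and that in the base case both sides equal exactly one rather than being merely at least one.
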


\begin{proof}
We first prove the weighted estimate
\begin{equation}
 D_j(I)\le\bigg(\frac{6}{\alpha\theta_h}\bigg)^{j}E_j\big(\alpha\chi_h(I)\big)
 \label{eq:b3-weighted}
\end{equation}
by induction on $j\ge0$, simultaneously for all nodes $I$ of the
construction. For $j=0$ both sides equal one. Assume that
\eqref{eq:b3-weighted} holds at $j$ for every node, and fix a node $I$. If
$\operatorname{succ}(I)=\varnothing$, then $D_{j+1}(I)=0$ by
\eqref{eq:b3-recursion}, and the right side of \eqref{eq:b3-weighted} at
$j+1$ is positive. Otherwise $I$ is nonterminal with children
$I_1,\ldots,I_m$, and \eqref{eq:b3-recursion}, the induction hypothesis
applied to every child, and Lemma~\ref{lem:b3-packing} with $k=j$ give
\[
 D_{j+1}(I)
 =\sum_{i=1}^{m}D_j(I_i)
 \le\bigg(\frac{6}{\alpha\theta_h}\bigg)^{j}
 \sum_{i=1}^{m}E_j\big(\alpha\chi_h(I_i)\big)
 \le\bigg(\frac{6}{\alpha\theta_h}\bigg)^{j+1}E_{j+1}\big(\alpha\chi_h(I)\big).
\]
This proves \eqref{eq:b3-weighted} for every $j\ge0$.

Now let $0\le j\le\nu_h$. By \eqref{eq:b3-chi-range} and the monotonicity of
$E_j$, we have $E_j(\alpha\chi_h(I))\le E_j(\alpha\Lambda)$, and
$\alpha\Lambda=2\nu_h\ge2j$. The terms $a_k:=(\alpha\Lambda)^{k}/k!$ of
$E_j(\alpha\Lambda)$ satisfy
$a_{k-1}/a_k=k/(\alpha\Lambda)\le j/(2\nu_h)\le1/2$ for $1\le k\le j$, so $a_{j-i}\le2^{-i}a_j$ for $0\le i\le j$ and
\[
 E_j(\alpha\Lambda)=\sum_{i=0}^{j}a_{j-i}
 \le a_j\sum_{i=0}^{j}2^{-i}<2\,\frac{(\alpha\Lambda)^{j}}{j!}.
\]
Substituting into \eqref{eq:b3-weighted}, the powers of $\alpha$ cancel,
which gives $D_j(I)\le2(6\Lambda/\theta_h)^{j}/j!$, and
$1/\theta_h=h\gamma^{-2}$ gives the second form in \eqref{eq:b3-factorial}.
\end{proof}

\paragraph{Counting the steps of a root interval.}
Combining Lemma~\ref{lem:b3-factorial} with the lengths of prefixes and
terminal nodes gives the general form of \eqref{eq:per-root}, which is the
case $h=h_*$ of the following bound.

\begin{proposition}[Number of steps of a root interval]
\label{prop:b3-root-count}
Let $J$ be a nonempty set of consecutive steps in $\mathcal T_+$ carrying
one $\theta_h$-reference direction. Then
\begin{equation}
 |J|\le\big(3\nu_h+12h\gamma^{-2}\big)\,
 \frac{(6h\gamma^{-2}\Lambda)^{\nu_h}}{\nu_h!}.
 \label{eq:b3-per-root}
\end{equation}
\end{proposition}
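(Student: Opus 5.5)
The bound follows by decomposing $J$ along its tree and then bounding each piece. Every step of $J$ lies either in the prefix $I_0$ of exactly one node at depth $0\le j\le\nu_h-1$, or in a terminal node at depth $\nu_h$. The reason is that Lemma~\ref{lem:b2-refine} partitions each refined node into its prefix and its children. A branch that stops early with $m=0$ has $I_0=I$, so its steps are again covered by a prefix. Since nodes at depth below $\nu_h$ carry at most $\nu_h$ reference directions and are always refined, induction on depth gives
\[
 |J|\le R_h\sum_{j=0}^{\nu_h-1}D_j(J)+\bigg\lceil\frac{4}{\theta_h}\bigg\rceil D_{\nu_h}(J),
\]
using $|I_0|\le R_h$ from \eqref{eq:b2-partition} and the terminal length bound of Lemma~\ref{lem:b2-terminal}(ii).

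\textbf{Case $\nu_h=0$.} The root is terminal, so $|J|\le\lceil4/\theta_h\rceil\le6h\gamma^{-2}\le12h\gamma^{-2}$. This is the claimed bound, since the right side of \eqref{eq:b3-per-root} equals $12h\gamma^{-2}$ when $\nu_h=0$.

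\textbf{Case $\nu_h\ge1$.} Set $z:=6h\gamma^{-2}\Lambda=6\Lambda/\theta_h$. Lemma~\ref{lem:b3-factorial} gives $D_j(J)\le2z^j/j!$ for $j\le\nu_h$. For the geometric comparison, the ratio of consecutive terms $z^{j-1}/(j-1)!$ to $z^j/j!$ is $j/z\le\nu_h/z$. From $\nu_h<2\gamma^{-2}$, $h\ge2$ and $\Lambda>10$ we get $z/\nu_h>3h\Lambda>18$. Hence
\[
 \sum_{j<\nu_h}\frac{z^j}{j!}\le\frac{z^{\nu_h-1}}{(\nu_h-1)!}\cdot\frac{1}{1-1/18}<\frac{3}{2}\cdot\frac{z^{\nu_h-1}}{(\nu_h-1)!},
\]
so $\sum_{j<\nu_h}D_j(J)<3z^{\nu_h-1}/(\nu_h-1)!$.

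Next, $R_h\le z$ by \eqref{eq:b2-R-simple}, and
\[
 \frac{z\cdot z^{\nu_h-1}}{(\nu_h-1)!}=\nu_h\,\frac{z^{\nu_h}}{\nu_h!},
\]
so the prefix contribution is at most $3\nu_h z^{\nu_h}/\nu_h!$. The terminal contribution is at most $6h\gamma^{-2}\cdot2z^{\nu_h}/\nu_h!$. Adding the two gives \eqref{eq:b3-per-root}.

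The main obstacle is making the step-covering decomposition rigorous. One must check that prefixes of nodes at depth $\nu_h$ do not occur, because those nodes are not refined. One must also check that every step is attributed to exactly one prefix or terminal node. Both follow from the disjoint partitions in \eqref{eq:b2-partition}, applied inductively down the tree. The remaining estimates are routine once this decomposition is in place.
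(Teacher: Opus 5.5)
Your proposal is correct and follows the paper's proof essentially step for step. It uses the same covering of $J$ by prefixes at depths $0,\dots,\nu_h-1$ and terminal nodes at depth $\nu_h$ (justified, as in the paper, by replacing each node with its prefix-plus-children partition), the same geometric comparison with ratio below $1/18$ (your $18/17<3/2$ is exactly the paper's constant), and the same final substitution of $R_h\le z$ and $\lceil 4/\theta_h\rceil\le 6/\theta_h$.
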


\begin{proof}
If $\nu_h=0$, then $J$ is terminal, and Lemma~\ref{lem:b2-terminal}(ii)
gives
$|J|\le\lceil4/\theta_h\rceil\le6/\theta_h=6h\gamma^{-2}$, which is smaller
than the right side $12h\gamma^{-2}$ of \eqref{eq:b3-per-root}.

Let $\nu_h\ge1$, consider the tree of $J$ constructed above, and put
$z:=6\Lambda/\theta_h=6h\gamma^{-2}\Lambda$. For
every $0\le j\le\nu_h$, the set $J$ is the disjoint union of the prefixes of
the nodes at depths below $j$ and of the nodes at depth $j$. This is
trivial for $j=0$, and the step from $j<\nu_h$ to $j+1$ replaces every node
$I$ at depth $j$, which is nonterminal, by its partition
\eqref{eq:b2-partition} into $I_0$ and the nodes of
$\operatorname{succ}(I)$; a node without children contributes only its
prefix $I_0=I$. At $j=\nu_h$ the nodes at depth $j$ are the terminal ones.
There are $D_j(J)$ nodes at depth $j$, every prefix has at most $R_h$ steps
by \eqref{eq:b2-partition}, and every terminal node has at most
$\lceil4/\theta_h\rceil$ steps by Lemma~\ref{lem:b2-terminal}(ii).
Therefore
\begin{equation}
 |J|\le R_h\sum_{j=0}^{\nu_h-1}D_j(J)
 +\bigg\lceil\frac{4}{\theta_h}\bigg\rceil D_{\nu_h}(J).
 \label{eq:b3-partition-count}
\end{equation}
By $\nu_h<2\gamma^{-2}$, $h\ge2$, and $\Lambda>10$, we have
$z/\nu_h>3h\Lambda>18$, so each term of $\sum_{j<\nu_h}z^{j}/j!$ is less
than $1/18$ of the next, and the sum is at most
$(18/17)\,z^{\nu_h-1}/(\nu_h-1)!$. Lemma~\ref{lem:b3-factorial} therefore
gives
\[
 \sum_{j=0}^{\nu_h-1}D_j(J)
 \le2\sum_{j=0}^{\nu_h-1}\frac{z^{j}}{j!}
 <3\,\frac{z^{\nu_h-1}}{(\nu_h-1)!},
 \qquad
 D_{\nu_h}(J)\le2\,\frac{z^{\nu_h}}{\nu_h!}.
\]
Finally, $R_h\le6\Lambda/\theta_h=z$ by
\eqref{eq:b2-R-simple} and $\lceil4/\theta_h\rceil\le6/\theta_h$ by
Lemma~\ref{lem:b2-terminal}(ii).
Substituting these four bounds into \eqref{eq:b3-partition-count} and using
$z\cdot z^{\nu_h-1}/(\nu_h-1)!=\nu_hz^{\nu_h}/\nu_h!$ gives
\[
 |J|\le3\nu_h\,\frac{z^{\nu_h}}{\nu_h!}
 +\frac{12}{\theta_h}\,\frac{z^{\nu_h}}{\nu_h!}
 =\big(3\nu_h+12h\gamma^{-2}\big)\frac{(6h\gamma^{-2}\Lambda)^{\nu_h}}{\nu_h!},
\]
which is \eqref{eq:b3-per-root}.
\end{proof}

\subsection{Stepsize conditions and completion of the proof}
\label{app:completion}

This part proves that every $\eta\ge\eta_h^0$ satisfies the stepsize conditions
\eqref{eq:onset-conditions} for general $h$ (Lemma~\ref{lem:b4-onset}),
completes the proof of Theorem~\ref{thm:general} and with it that of the
low-margin case of Theorem~\ref{thm:main}(i), and derives the factorial
bound that is available before the absorption
(Proposition~\ref{prop:b4-factorial}).
Throughout, Assumption~\ref{ass:data} holds, the integer $h\ge2$ is fixed,
$\Lambda=\ln\eta$, and the only condition on the stepsize is
$\eta\ge\eta_h^0$. The restriction $0<\gamma<1/\sqrt2$ of
Theorem~\ref{thm:general} is used only in the proofs of the theorem and of
Proposition~\ref{prop:b4-factorial}, where it gives $\gamma^{-2}>2$;
Lemma~\ref{lem:b4-onset}, on which
Appendices~\ref{app:trajectory}--\ref{app:packing} rely for their standing
assumptions, uses only $\gamma\le1$ from Assumption~\ref{ass:data}.

Two facts about the constants \eqref{eq:depth}--\eqref{eq:onset} are used
several times. First, $\gamma\le1$, $h\ge2$, and $n\ge1$ give
\begin{equation}
 C_h\ge80\gamma^{-4}hn\ge160n\gamma^{-4}>4n\gamma^{-2}\ge4\gamma^{-2}>1.
 \label{eq:b4-constants}
\end{equation}
Second, let $\nu_h\ge1$ and put $b_h:=\max\{2,\nu_h\}$. The quotient
inside the ceiling of \eqref{eq:exponent} is positive and $p_h\ge\nu_h+1$,
as observed after that display, and a ceiling is at least its argument, so
\begin{equation}
 p_h-\nu_h\ge\frac{\nu_h\ln(18h\gamma^{-2}/\nu_h)}{\ln b_h}>0
 \qquad(\nu_h\ge1).
 \label{eq:b4-ceiling}
\end{equation}
Since $p_h=0$ when $\nu_h=0$, we have $p_h\ge0$ in every case.

\begin{lemma}[Stepsize conditions]
\label{lem:b4-onset}
Every $\eta\ge\eta_h^0$ satisfies
\begin{equation}
 \eta\ge4,
 \qquad
 \Lambda>10,
 \qquad
 4h\gamma^{-2}\Lambda\le\eta,
 \qquad
 C_h\Lambda^{p_h}<\widehat T.
 \label{eq:b4-onset}
\end{equation}
The third condition is the root condition $4\Lambda\le\eta\theta_h$, and at
$h=h_*$ the four conditions are \eqref{eq:onset-conditions}.
\end{lemma}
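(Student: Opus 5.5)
The plan is to derive all four conditions from the single inequality $\eta\ge\eta_h^0=\lceil128K_h(1+\ln K_h)\rceil$. Two properties of $K_h$ drive everything: $K_h\ge\exp(4)$, and $K_h$ dominates each of $h\gamma^{-2}$, $p_h$ and $\ln(2\gamma^{-2}C_h)$. Each term of $K_h$ is nonnegative, and $C_h>1$ by \eqref{eq:b4-constants}, so the last logarithm is positive.

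\textbf{The first two conditions.} Since $K_h\ge e^4>54$, we get $\eta\ge128e^4\cdot5>4$ immediately. For the second condition, $\ln\eta\ge\ln(128e^4)>4+4.8>10$. Hence $\Lambda>10$.

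\textbf{The third condition.} The function $s\mapsto s/\ln s$ is increasing for $s\ge e$, so it suffices to check $\eta/\ln\eta\ge4h\gamma^{-2}$ at $\eta_*:=128K(1+\ln K)$, with $K:=K_h$. First, $\ln\eta_*\le\ln128+\ln K+\ln(1+\ln K)\le 2\ln K+5$, using $\ln K\ge4$. Second, $\eta_*\ge 128K\ln K$. The ratio is therefore at least $128K\ln K/(2\ln K+5)\ge 128K\ln K/(3.25\ln K)>32K\ge4h\gamma^{-2}$. Note that $\eta_h^0\ge\eta_*$, and monotonicity transfers the bound to every $\eta\ge\eta_h^0$.

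\textbf{The fourth condition, which is the main obstacle.} We need $C_h\Lambda^{p_h}<\widehat T=\lfloor\gamma^2e^{\eta/4}/\eta\rfloor$. It suffices to show $C_h\Lambda^{p_h}+1\le\gamma^2e^{\eta/4}/\eta$. Taking logarithms and using $C_h\Lambda^{p_h}+1\le2C_h\Lambda^{p_h}$, the condition reduces to
\[
 \frac{\eta}{4}\ge\ln(2\gamma^{-2}C_h)+p_h\ln\Lambda+\Lambda.
\]
The right side is at most $K(1+\ln\Lambda)+\Lambda$, since $\ln(2\gamma^{-2}C_h)$ and $p_h$ are both at most $K$. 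Write $\eta=xK$ with $x\ge128(1+\ln K)$.

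Then $\Lambda=\ln x+\ln K$. The right side is at most $K(1+\ln(\ln x+\ln K))+\ln x+\ln K$, while the left side is $xK/4$. To compare them, I would bound $\ln(\ln x+\ln K)\le\ln x+\ln\ln K$, or more crudely by $\ln x+\ln K$. This leaves the requirement $x/4\ge 1+\ln x+\ln K+(\ln x+\ln K)/K$.

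The function $x/8-2\ln x$ is increasing for $x\ge16$. At $x=128(1+\ln K)$ we have $x/8=16(1+\ln K)$, which exceeds $2\ln x+\ln K+2$ because $\ln x\le\ln128+\ln(1+\ln K)\le5+\ln K$. The remaining $x/8$ absorbs the leftover terms. Finally, monotonicity in $\eta$ of $\eta/4-\Lambda-K\ln\Lambda$ (its derivative is positive once $\eta\ge8$) extends the bound from $\eta_h^0$ to all $\eta\ge\eta_h^0$. The delicate part is tracking these explicit constants without losing a factor. The slack factor $128$ is generous, so I expect every comparison to close with room to spare.
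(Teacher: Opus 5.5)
Your argument follows the paper's proof essentially step for step. You dominate $h\gamma^{-2}$, $p_h$, and $\ln(2\gamma^{-2}C_h)$ by $K_h$, use $\eta_h^0\ge128K_h(1+\ln K_h)$, and get the root condition from the monotonicity of $\eta/\ln\eta$. For the horizon condition you use $\lfloor u\rfloor>u-1$ and the slack $C_h\Lambda^{p_h}+1\le2C_h\Lambda^{p_h}$ to reduce it to the logarithmic inequality $\eta/4\ge\ln(2\gamma^{-2}C_h)+p_h\ln\Lambda+\Lambda$. Writing $\eta=xK$ instead of studying the paper's function $\Phi$ is only different bookkeeping.

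Two small slips need repair.

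\emph{The second condition.} The chain $\ln\eta\ge\ln(128e^4)>4+4.8>10$ is false: $\ln(128e^4)=4+7\ln2\approx8.85$, and $4+4.8=8.8$. You must keep the factor $1+\ln K_h\ge5$ that you used one line earlier. This gives $\ln\eta\ge4+\ln640>10.4$, equivalently $e^6<640$, which is how the paper argues.

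\emph{The final monotonicity sentence.} The derivative $1/4-1/\eta-K/(\eta\ln\eta)$ is not positive for all $\eta\ge8$; it is strongly negative at $\eta=8$, since $K\ge e^4$. It is positive for $\eta\ge\eta_h^0\ge640K$. The sentence is redundant anyway: your substitution $\eta=xK$ with $x\ge128(1+\ln K)$, together with the monotonicity of $x/8-2\ln x$, already covers every $\eta\ge\eta_h^0$.

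With these fixes the proof is complete.
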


\begin{proof}
Write $K:=K_h$, $\eta_*:=\eta_h^0$, and $s:=1+\ln K$. The terms
$h\gamma^{-2}$, $p_h$, and $\ln(2\gamma^{-2}C_h)$ of $K$ in \eqref{eq:onset}
are nonnegative, the last one because $2\gamma^{-2}C_h>1$ by
\eqref{eq:b4-constants}. Hence
\[
 K\ge\exp(4),
 \qquad
 s\ge5,
 \qquad
 h\gamma^{-2}\le K,
 \qquad
 p_h\le K,
 \qquad
 \ln(2\gamma^{-2}C_h)\le K.
\]
The ceiling in \eqref{eq:onset} adds less than one and $Ks\ge1$, so
\[
 128Ks\le\eta_*\le128Ks+1\le129Ks.
\]
In particular $\eta_*\ge640\exp(4)>\exp(10)$, because $\exp(6)<404<640$, and
every $\eta\ge\eta_*$ satisfies $\eta\ge4$ and $\Lambda>10$. The upper bound
gives
\[
 \ln\eta_*\le\ln129+(s-1)+\ln s\le3s,
\]
where the last step uses $\ln129<5\le s$ and $\ln s\le s$. The function
$s\mapsto s-\ln(3s)$ has derivative $1-1/s>0$ for $s\ge5$ and the positive
value $5-\ln15$ at $s=5$, so $\ln(3s)\le s$. Therefore
\begin{equation}
 10<\ln\eta_*\le3s,
 \qquad
 0<\ln\ln\eta_*\le s.
 \label{eq:b4-onset-logs}
\end{equation}

\emph{The root condition.} By $h\gamma^{-2}\le K$ and
\eqref{eq:b4-onset-logs},
\[
 4h\gamma^{-2}\ln\eta_*\le12Ks<128Ks\le\eta_*.
\]
The function $\eta\mapsto\eta/\ln\eta$ has derivative
$(\ln\eta-1)/(\ln\eta)^{2}>0$ for $\eta>e$, so
$\eta/\Lambda\ge\eta_*/\ln\eta_*\ge4h\gamma^{-2}$ for every $\eta\ge\eta_*$.
This is the third condition, and dividing it by $h\gamma^{-2}=1/\theta_h$
gives $4\Lambda\le\eta\theta_h$.

\emph{The horizon condition.} For $\eta\ge\eta_*$ define
\[
 \Phi(\eta):=\frac\eta4-\ln\big(2\gamma^{-2}\eta\,C_h(\ln\eta)^{p_h}\big).
\]
At $\eta=\eta_*$, \eqref{eq:b4-onset-logs} together with $0\le p_h\le K$ and
$\ln(2\gamma^{-2}C_h)\le K$ gives
\[
\begin{aligned}
 \ln\big(2\gamma^{-2}\eta_*C_h(\ln\eta_*)^{p_h}\big)
 &=\ln(2\gamma^{-2}C_h)+\ln\eta_*+p_h\ln\ln\eta_*\\
 &\le K+3s+Ks
 \le3Ks
 <32Ks
 \le\frac{\eta_*}{4},
\end{aligned}
\]
where $K+3s+Ks\le3Ks$ because $K\le Ks$ and $3s\le Ks$, by $s\ge1$ and
$K>3$, and the last step is $\eta_*\ge128Ks$. Thus $\Phi(\eta_*)>0$. Moreover,
\[
 \Phi'(\eta)=\frac14-\frac1\eta-\frac{p_h}{\eta\ln\eta}
 >\frac14-\frac1{128}-\frac1{640}>0
 \qquad(\eta\ge\eta_*),
\]
because $\eta>\exp(10)>128$, $\eta\ge128Ks\ge640K\ge640p_h$, and
$\ln\eta>1$. Hence $\Phi(\eta)>0$ for every $\eta\ge\eta_*$, and
exponentiating gives
\[
 \frac{\gamma^{2}\exp(\eta/4)}{\eta}>2C_h\Lambda^{p_h}.
\]
Finally, $C_h\Lambda^{p_h}\ge1$ because $C_h>1$, $\Lambda>10$, and
$p_h\ge0$, so the definition \eqref{eq:horizon} and the elementary bound
$\lfloor u\rfloor>u-1$ give the strict inequality
\[
 \widehat T
 >\frac{\gamma^{2}\exp(\eta/4)}{\eta}-1
 >2C_h\Lambda^{p_h}-1
 \ge C_h\Lambda^{p_h}.
 \qedhere
\]
\end{proof}

We now combine the counts of the preceding parts.

\begin{proof}[Proof of Theorem~\ref{thm:general}]
Fix $\eta\ge\eta_h^0$, and let $0<\gamma<1/\sqrt2$ as in the theorem, so
that $\gamma^{-2}>2$. By Lemma~\ref{lem:b4-onset}, the hypotheses
$\eta\ge4$, $\Lambda>10$, and $4\Lambda\le\eta\theta_h$ of the results cited
below hold; the fourth condition $C_h\Lambda^{p_h}<\widehat T$ is used
only in the last step, where we pass from $U=\min\{\tau,\widehat T\}$
to $\tau$. Each of the steps $0,1,\ldots,U-1$ is step zero, a step of
$\mathcal T_-$, or a step of $\mathcal T_+\setminus\{0\}$, so
$U\le1+|\mathcal T_-|+|\mathcal T_+\setminus\{0\}|$, and we bound
$1+|\mathcal T_-|$ and $|\mathcal T_+\setminus\{0\}|$ separately.

\emph{Step zero and $\mathcal T_-$.} Lemma~\ref{lem:b1-basic} gives
$|\mathcal T_-|<4n\gamma^{-2}$. Since $n\gamma^{-2}>1$ and $h\gamma^{-2}>1$,
\begin{equation}
 1+|\mathcal T_-|<1+4n\gamma^{-2}<5n\gamma^{-2}
 \le5\gamma^{-4}h\big(n+h\ln(4\gamma^{-2})\big).
 \label{eq:b4-negative-count}
\end{equation}

\emph{Root intervals.} By the root condition, Lemma~\ref{lem:b2-roots}
partitions $\mathcal T_+\setminus\{0\}$ into $M$ consecutive root
intervals, each carrying one reference direction. By
$|\mathcal T_-|<4n\gamma^{-2}$, $1/\theta_h=h\gamma^{-2}$, and
$\lceil u\rceil<u+1$, their number satisfies
\[
 M\le|\mathcal T_-|+1+\big\lceil4h\gamma^{-2}\ln(4\gamma^{-2})\big\rceil
 <4\gamma^{-2}\big(n+h\ln(4\gamma^{-2})\big)+2
 \le5\gamma^{-2}\big(n+h\ln(4\gamma^{-2})\big).
\]
The last step holds because $\ln(4\gamma^{-2})>\ln8>2$ gives
$n+h\ln(4\gamma^{-2})>5$ and therefore
$\gamma^{-2}(n+h\ln(4\gamma^{-2}))>10>2$. Each root interval $J$ is refined
recursively by Lemma~\ref{lem:b2-refine} into the tree of
Appendix~\ref{app:packing}, and Proposition~\ref{prop:b3-root-count} gives
\[
 |J|\le\big(3\nu_h+12h\gamma^{-2}\big)F_h,
 \qquad
 F_h:=\frac{(6h\gamma^{-2}\Lambda)^{\nu_h}}{\nu_h!}.
\]
Since $3\nu_h<6\gamma^{-2}\le3h\gamma^{-2}$ by \eqref{eq:depth-range} and
$h\ge2$, the first factor is at most $15h\gamma^{-2}$. Multiplying by the
bound on $M$,
\begin{equation}
 |\mathcal T_+\setminus\{0\}|
 \le75\gamma^{-4}h\big(n+h\ln(4\gamma^{-2})\big)F_h.
 \label{eq:b4-root-total}
\end{equation}

\emph{The structural count.} The factor $F_h$ is at least one. This is an
equality when $\nu_h=0$. When $\nu_h\ge1$, we have
$6h\gamma^{-2}\Lambda>2\gamma^{-2}>\nu_h$ by $6h\Lambda>2$ and
\eqref{eq:depth-range}, and $\nu_h!\le\nu_h^{\nu_h}$, so
$F_h\ge(6h\gamma^{-2}\Lambda/\nu_h)^{\nu_h}>1$. The right side of
\eqref{eq:b4-negative-count} is therefore at most
$5\gamma^{-4}h(n+h\ln(4\gamma^{-2}))F_h$, and adding
\eqref{eq:b4-root-total} gives, by
$U\le1+|\mathcal T_-|+|\mathcal T_+\setminus\{0\}|$ and
\eqref{eq:prefactor},
\begin{equation}
 U\le(5+75)\gamma^{-4}h\big(n+h\ln(4\gamma^{-2})\big)F_h
 =C_h\Lambda^{\nu_h}\,\frac{(6h\gamma^{-2})^{\nu_h}}{\nu_h!}.
 \label{eq:b4-structural}
\end{equation}
At $h=h_*$ this is \eqref{eq:structural}.

\emph{Absorbing the fixed coefficient.} If $\nu_h=0$, then $p_h=0$ and
\eqref{eq:b4-structural} reads $U\le C_h=C_h\Lambda^{p_h}$. Assume from now
on that $\nu_h\ge1$, and recall $b_h=\max\{2,\nu_h\}$. One term of the
exponential series gives $\exp(\nu_h)\ge\nu_h^{\nu_h}/\nu_h!$, that is,
$\nu_h!\ge(\nu_h/e)^{\nu_h}$, and $e<3$ gives
\begin{equation}
 \frac{(6h\gamma^{-2})^{\nu_h}}{\nu_h!}
 \le\bigg(\frac{6eh\gamma^{-2}}{\nu_h}\bigg)^{\nu_h}
 \le\bigg(\frac{18h\gamma^{-2}}{\nu_h}\bigg)^{\nu_h}.
 \label{eq:b4-factorial-absorption}
\end{equation}
Multiplying the first inequality in \eqref{eq:b4-ceiling} by
$\ln b_h\ge\ln2>0$ and exponentiating,
\begin{equation}
 b_h^{\,p_h-\nu_h}\ge\bigg(\frac{18h\gamma^{-2}}{\nu_h}\bigg)^{\nu_h}.
 \label{eq:b4-absorption-power}
\end{equation}
At $h=h_*$, \eqref{eq:b4-factorial-absorption} and
\eqref{eq:b4-absorption-power} are \eqref{eq:absorb}. Converting the power
of $b_h$ into a power of $\Lambda$ requires two ranges of $\Lambda$. To keep
the stepsize threshold polynomial, the structural count is used when
$\Lambda$ is at least the depth, and the coefficient budget of
Lemma~\ref{lem:b1-basic} otherwise.

\emph{Case 1: $\Lambda\ge b_h$.} By \eqref{eq:b4-structural},
\eqref{eq:b4-factorial-absorption}, \eqref{eq:b4-absorption-power}, and
$p_h-\nu_h\ge0$,
\[
 U\le C_h\Lambda^{\nu_h}\bigg(\frac{18h\gamma^{-2}}{\nu_h}\bigg)^{\nu_h}
 \le C_h\Lambda^{\nu_h}b_h^{\,p_h-\nu_h}
 \le C_h\Lambda^{\nu_h}\Lambda^{p_h-\nu_h}
 =C_h\Lambda^{p_h}.
\]

\emph{Case 2: $10<\Lambda<b_h$.} Then $b_h>2$, so $b_h=\nu_h$ and
$\Lambda<\nu_h$. Here the structural count is replaced by a crude count.
Lemma~\ref{lem:b1-basic} gives $\mu_t>1/2$ at every $t\in\mathcal T_+$ and,
in \eqref{eq:b1-budget}, $\sum_{t<U}\mu_t\le2\eta\gamma^{-2}$. Since every
$\mu_t$ is positive,
\[
 \frac12|\mathcal T_+|
 \le\sum_{t\in\mathcal T_+}\mu_t
 \le\sum_{t<U}\mu_t
 \le2\eta\gamma^{-2},
\]
so $|\mathcal T_+|\le4\gamma^{-2}\exp(\Lambda)\le C_h\exp(\Lambda)$ by
\eqref{eq:b4-constants}, and
$|\mathcal T_-|<4n\gamma^{-2}\le C_h$ by \eqref{eq:b1-budget} and
\eqref{eq:b4-constants}. Since
$\{0,1,\ldots,U-1\}=\mathcal T_+\mathbin{\dot\cup}\mathcal T_-$,
\begin{equation}
 U=|\mathcal T_+|+|\mathcal T_-|\le C_h\big(1+\exp(\Lambda)\big).
 \label{eq:b4-crude}
\end{equation}
Since $\Lambda>e$, we have $\ln\Lambda>1$ and hence
$\exp(\Lambda)\le\exp(\Lambda\ln\Lambda)=\Lambda^{\Lambda}\le\Lambda^{\nu_h}$,
the last step by $1<\Lambda<\nu_h$. Using $\Lambda^{\nu_h}\ge1$,
$\Lambda>2$, and $p_h\ge\nu_h+1$, which was observed after
\eqref{eq:exponent}, \eqref{eq:b4-crude} gives
\[
 U\le C_h\big(1+\Lambda^{\nu_h}\big)
 \le2C_h\Lambda^{\nu_h}
 <C_h\Lambda^{\nu_h+1}
 \le C_h\Lambda^{p_h}.
\]
Thus $U\le C_h\Lambda^{p_h}$ in every case, including $\nu_h=0$. Because of
the second case, the threshold $\eta_h^0$ never needs $\Lambda\ge\nu_h$, which would be an
exponential requirement on the stepsize.

\emph{Removing the horizon.} The fourth condition in \eqref{eq:b4-onset}
now gives the strict comparison
\[
 U\le C_h\Lambda^{p_h}<\widehat T.
\]
Hence $U\ne\widehat T$, and $U=\min\{\tau,\widehat T\}$
forces $U=\tau$. Therefore
$\tau=U\le C_h\Lambda^{p_h}=C_h(\ln\eta)^{p_h}$, which is the bound of
Theorem~\ref{thm:general}.
\end{proof}

By \eqref{eq:selected}, Theorem~\ref{thm:general} at $h=h_*\ge4$ is the
bound \eqref{eq:main-bound} of Theorem~\ref{thm:main}(i) for
$0<\gamma<1/\sqrt2$.

The structural count \eqref{eq:b4-structural} is itself a bound on the
phase transition time, in which the factorial saving of
Appendix~\ref{app:packing} is still visible. It coincides with the bound of
Theorem~\ref{thm:general} when $\nu_h=0$, and by
\eqref{eq:b4-factorial-absorption} and \eqref{eq:b4-absorption-power} it is
at least as sharp when $\nu_h\ge1$ and $\Lambda\ge\max\{2,\nu_h\}$.

\begin{proposition}[Factorial bound on the phase transition time]
\label{prop:b4-factorial}
Suppose Assumption~\ref{ass:data} holds with $0<\gamma<1/\sqrt2$. For every
integer $h\ge2$ and every $\eta\ge\eta_h^0$,
\begin{equation}
 \tau\le C_h\,\frac{(6h\gamma^{-2}\ln\eta)^{\nu_h}}{\nu_h!}.
 \label{eq:b4-factorial-bound}
\end{equation}
\end{proposition}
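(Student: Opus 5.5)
The plan is to reuse the proof of Theorem~\ref{thm:general} verbatim up to the structural count \eqref{eq:b4-structural}, and then skip the absorption step. Fix $h\ge2$ and $\eta\ge\eta_h^0$, so that $\gamma^{-2}>2$. By Lemma~\ref{lem:b4-onset}, the standing conditions $\eta\ge4$, $\Lambda>10$ and $4\Lambda\le\eta\theta_h$ hold, together with $C_h\Lambda^{p_h}<\widehat T$. The counts from the proof of Theorem~\ref{thm:general} then apply without change:
\begin{itemize}[leftmargin=*,nosep]
 \item \eqref{eq:b4-negative-count} bounds step zero and $\mathcal T_-$;
 \item Lemma~\ref{lem:b2-roots} bounds the number of root intervals;
 \item Proposition~\ref{prop:b3-root-count} bounds the length of each root interval.
\end{itemize}
Together these give
\[
 U\le C_h\,\frac{(6h\gamma^{-2}\Lambda)^{\nu_h}}{\nu_h!}.
\]
Nothing in this chain uses the case split on $\Lambda$ versus $b_h$, so it holds for every $\eta\ge\eta_h^0$.

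It remains to replace $U$ by $\tau$. Recall $U=\min\{\tau,\widehat T\}$. The proof of Theorem~\ref{thm:general} already established $U\le C_h\Lambda^{p_h}$ in both cases, and the fourth condition of Lemma~\ref{lem:b4-onset} gives $C_h\Lambda^{p_h}<\widehat T$. Hence $U<\widehat T$, so $U=\tau$. Substituting $U=\tau$ into the structural count gives \eqref{eq:b4-factorial-bound}.

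The only real point is this horizon removal. The factorial bound is not necessarily below $\widehat T$ by any direct check: in Case~2, where $\Lambda<\nu_h$, the expression $(6h\gamma^{-2}\Lambda)^{\nu_h}/\nu_h!$ may be large compared with anything $\eta_h^0$ controls. So the argument should not try to compare the factorial expression with $\widehat T$. Instead it identifies $U=\tau$ first, using the already proven bound $U\le C_h\Lambda^{p_h}<\widehat T$, and only then applies the structural count to $U=\tau$. This ordering avoids any new stepsize condition, and I expect no further obstacle.
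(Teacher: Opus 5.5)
Your proposal is correct and matches the paper's proof. The structural count \eqref{eq:b4-structural} is derived before the case split on $\Lambda$ versus $b_h$, so it holds for every $\eta\ge\eta_h^0$, and the identity $U=\tau$ obtained at the end of the proof of Theorem~\ref{thm:general} from $U\le C_h\Lambda^{p_h}<\widehat T$ turns it into \eqref{eq:b4-factorial-bound}. Your caution about Case~2 is harmless but overstated: by \eqref{eq:b4-factorial-absorption}--\eqref{eq:b4-absorption-power} and $\Lambda<b_h\le p_h\le K_h$, the factorial bound is at most $C_hK_h^{K_h}$, which $\eta_h^0$ does control, though your ordering avoids that extra computation.
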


\begin{proof}
The proof of Theorem~\ref{thm:general} established $U=\tau$, and
substituting this identity into \eqref{eq:b4-structural} gives
\eqref{eq:b4-factorial-bound}.
\end{proof}

\end{document}